\documentclass{article}

 \usepackage[preprint]{neurips_2026}

\usepackage[utf8]{inputenc} % allow utf-8 input
\usepackage[T1]{fontenc}    % use 8-bit T1 fonts
\usepackage{hyperref}       % hyperlinks
\usepackage{url}            % simple URL typesetting
\usepackage{booktabs}       % professional-quality tables
\usepackage{amsfonts}       % blackboard math symbols
\usepackage{nicefrac}       % compact symbols for 1/2, etc.
\usepackage{microtype}      % microtypography
\usepackage[most]{tcolorbox}
\usepackage{enumitem}
\usepackage{xcolor}         % colors

\usepackage{tcolorbox}
\usepackage{tikz}
\usetikzlibrary{positioning, arrows.meta}
\usepackage{graphicx}
\usepackage{wrapfig}
\usepackage{placeins}
\usepackage{multirow}

\usepackage{amsmath}
\usepackage{enumitem}
\usepackage{amsthm}

\newtheorem{proposition}{Proposition}[section]

\newtheorem{definition}{Definition}[section]
\newtheorem{remark}{Remark}[section]

\usepackage{xspace}
\newcommand{\method}{PU-DPO\xspace}

\DeclareMathAlphabet\mathbfcal{OMS}{cmsy}{b}{n}

\def\to{{\,\rightarrow\,}}

\mathchardef\mhyphen="2D

\newcommand{\norm}[1]{{ \left\lVert\right\rVert }}
\newcommand{\vertiii}[1]{{\left\vert\kern-0.25ex\left\vert\kern-0.25ex\left\vert #1
    \right\vert\kern-0.25ex\right\vert\kern-0.25ex\right\vert}}

\title{Positive-Unlabeled Preference Optimization For Chest X-ray Report Generation}

\author{%
Yuta Kobayashi\thanks{Equal contribution, Corresponding author: yk3043@cumc.columbia.edu}\\Columbia University \And
Pradyun Ramesh$^*$\\Columbia University \And
Muhammad Ahmed Chaudhry\\Stanford University \And
Vincent Jeanselme\\Columbia University \And
Judy Wawira Gichoya \\ Emory University \And
Sanmi Koyejo \\ Stanford University \And
Kathleen Capaccione\\Columbia University \And
Shalmali Joshi\\Columbia University
}

\begin{document}

\maketitle

\begin{abstract}
Vision-Language Models (VLMs) for radiology report generation are typically trained on retrospective clinical reports, which suffer from omission noise: clinically present findings are left unreported due to the omission of subtle findings. For example, prior studies show that cardiomegaly may be omitted from ICU chest X-ray reports when the imaging request is focused on monitoring support device placement \citep{jain2021visualchexbert}. As a result, models trained with standard approaches inherit these omissions, learning to under-report findings themselves. We propose \method, a preference optimization framework to prevent omission noise from corrupting the preference signal. We reformulate the objective under a positive-unlabeled (PU) learning framework, treating absent mentions as unlabeled rather than truly negative. Our framework provides preference supervision using constructed contrastive pairs, generated using edits to model responses, producing variants that explicitly mention or omit a specific finding. Generated responses that mention the finding are naturally preferred in the context of visual evidence. Across semi-synthetic experiments and analyses on real-world chest radiograph benchmarks where adjudicated labels are available, \method yields consistent gains in detection rates and recovery of hidden positives across multiple pathologies, and is more robust to omission noise than prior approaches.
\end{abstract}

\section{Introduction}
Automated report generation is a promising application of Vision-Language Models (VLMs), where, given a medical image, the model conditionally generates text detailing pathological findings and abnormalities. Typically, these models are trained on retrospective clinical reports rather than prospective. That is, exhaustively labeled annotations (that explicitly record both present and absent/negated findings), are rarely available at scale. Historically, the predominant training paradigm for radiology report generation has been Supervised Fine-Tuning (SFT), where the model maximizes the conditional likelihood of the clinician-generated reference report, which is treated as a gold-standard `ground-truth'  given the input chest X-ray, a process conceptually analogous to imitation learning~\citep{ouyang2022training}. While this process successfully aligns the model's output with reporting standards, it is fundamentally constrained by the quality of annotations. 

We introduce the notion of \emph{omission noise} in radiology report generation as one-sided noise in which positive visual findings are unlabeled in retrospective reports, not because of diagnostic uncertainty or interpretative disagreement but due to clinical workflow and prioritization \citep{rudolph2021interpretation, majkowska2020chest, jain2021visualchexbert}. A clinician might omit the diagnosis of mild cardiomegaly owing to the subtlety compared to other pathologies prioritized in the Intensive Care Unit (ICU) \citep{jain2021visualchexbert, yu2022anatomy}. We confirm this in a study we conducted with practicing radiologists. That is, retrospective reports may omit clinically present findings (Table~\ref{tab:study}, Section~\ref{sec:radiologist_study}), suggesting that the supervisory signal itself is systematically incomplete.

\begin{figure}[t!]
    \centering
    \includegraphics[width=0.99\linewidth]{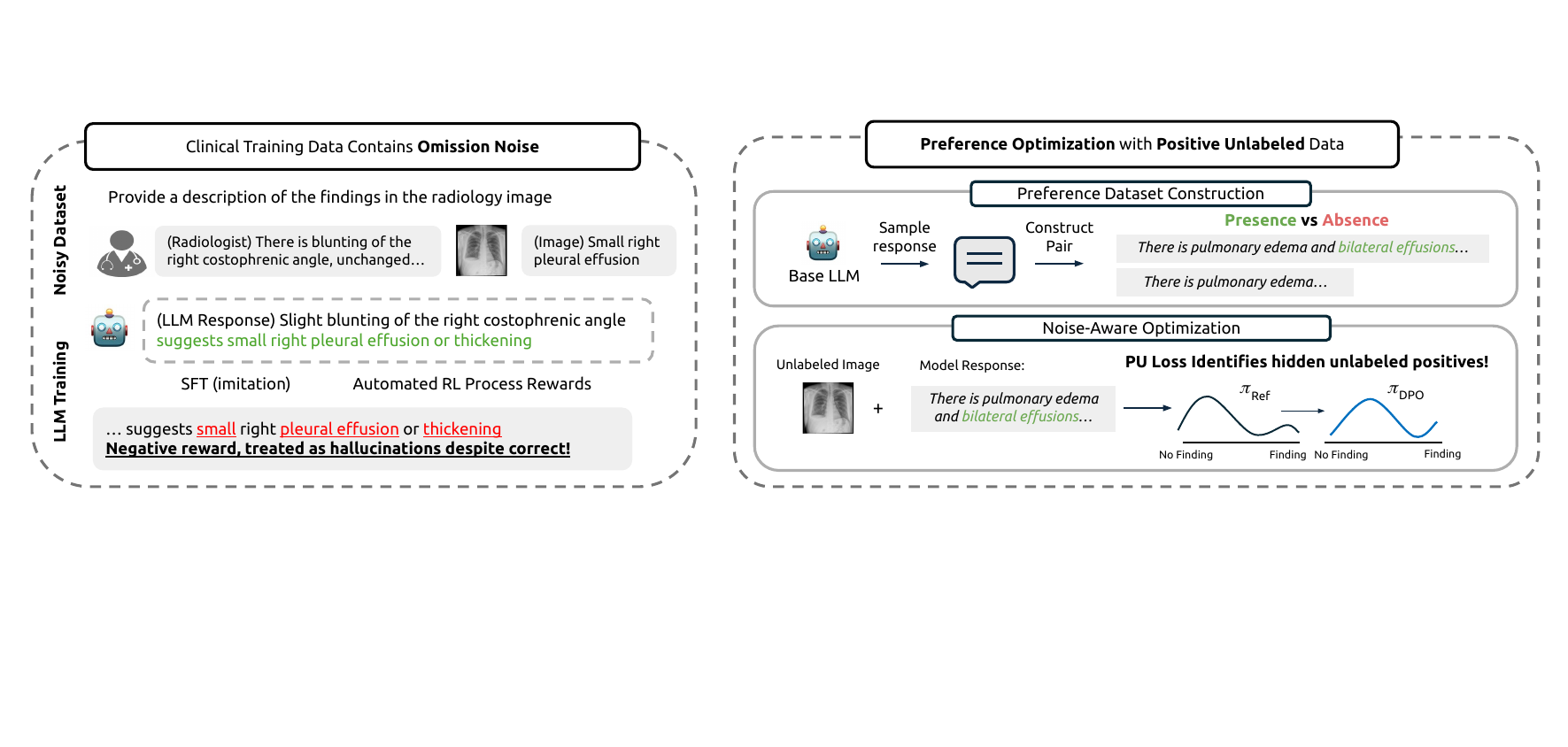}
    \caption{\textbf{Overview of Omission Noise and Proposed PU Learning Framework.} (\emph{Left}) We highlight the challenge of omission noise in training medical VLMs, where clinical findings are visually present but omitted from reports due to standard clinical workflow variability. (\emph{Right}) Our proposed approach mitigates this by constructing preference pairs that evaluate the presence versus absence of a single finding. We then apply a noise-aware preference optimization technique rooted in PU learning, enabling the model to detect hidden positives otherwise lost in the unannotated data.}
    \label{fig:overview}
    \vspace{-22pt}
\end{figure}

Report generation quality has been improved using post-training alignment methods that incorporate reinforcement learning (RL) to optimize for clinical correctness rather than surface-level text similarity, using automated, accuracy-based reward functions (e.g., RadGraph-F1, RadCLIQ) aggregated over the entire report \citep{fan2025chestx, liang2025chexpo, liu2026scaling}. This is also combined with training strategies that distill "reasoning" ability into the VLM by generating chain-of-thought (CoT) tokens prior to the final response \citep{myronenko2025reasoning}. However, approaches that derive reward signals or CoT supervision from the retrospective reference report remain inherently bound by omission noise (Fig.~\ref{fig:overview}). Consequently, principled strategies for distilling fine-grained generation capabilities from noisy annotations are lacking.

% lead to critical shortcomings: (1) aggregate reward functions lack mechanisms for precise concept-level credit assignment, and (2) these automated metrics remain inherently bound to the omission noise of the original retrospective reports. Consequently, principled strategies for distilling fine-grained generation capabilities from noisy annotations are lacking. 

To address this shortcoming, we propose Positive-Unlabeled Direct Preference Optimization (\method) with positive-and-unlabeled (PU) learning, a scalable and principled framework for RL-based finetuning that explicitly targets the fine-grained subtypes of disease presentation. Our finetuning framework is flexible and compatible with both standard and CoT-based VLMs. Our approach consists of two main steps. First, we decompose a target pathology into subtypes defined by characteristics such as anatomical loanatomical location and severity, and generate a synthetic preference dataset by editing on-policy responses from a reference VLM to produce targeted contrastive pairs: one that explicitly mentions the finding, and one that omits it. 

\begin{wraptable}{r}{0.45\textwidth}
    \label{tab:study} 
    \vspace{-15pt}
    \centering
    \footnotesize 
    
    % --- THE FIX ---
    \setlength{\tabcolsep}{2pt} % Reduces the space between columns. Adjust '3pt' as needed.
    
    \caption{Omission and hidden positive rates of findings in retrospective reference reports in the CheXpert dataset, compared to radiologists' annotations of radiograph in the study.}
    \renewcommand{\arraystretch}{1.3} 
    
    \begin{tabular}{@{} l c c @{}}
        \toprule
        Condition & Omission Rate & Hidden Positive Rate \\
        \midrule
        Pleural Effusion & 16/142 (11.3\%) & 16/81 (19.8\%) \\
        Atelectasis & 21/55 (38.2\%) & 21/213 (9.9\%) \\
        Cardiomegaly & 14/32 (43.8\%) & 14/195 (7.2\%) \\
        \bottomrule
    \end{tabular}
\end{wraptable}

However, under standard preference optimization, omission noise creates systematic unlabeled "false negatives" that teach the model, incorrectly, that excluding the pathology is preferred. To handle this, we adopt PU learning \citep{elkan2008learning, bekker2020learning}. We treat explicitly mentioned pathologies as positive labels and absent mentions as unlabeled rather than truly negative. We operate under the selected-completely-at-random (SCAR) assumption, which posits that the probability of each unique pathology being labeled (i.e., mentioned in the report) is uniform and independent of the input. We reformulate the standard DPO objective under this assumption to account for omission noise, and pair this with a known or estimated class prior.

Our contributions are:
\begin{itemize}[leftmargin=*, nosep, itemsep=4pt, topsep=2pt]
    \item \textbf{Novel Objective Function:} We propose \method, a principled approach for preference optimization under omission noise, demonstrated in chest radiograph report generation. We construct preference pairs comparing the presence and absence of observations and incorporate PU learning into a novel objective function.
    \item \textbf{Scalable Alignment Algorithm:} We introduce a scalable fine-tuning framework that integrates \method for granular multi-pathology detection. Our approach is model-agnostic, supporting both standard and CoT-based VLMs.
    \item \textbf{Empirical Evaluation:} We validate \method in semi-synthetic settings and across two chest radiograph benchmarks and three base VLM architectures, demonstrating consistent robustness to omission noise compared to standard alignment baselines.
\end{itemize}

% \begin{figure}[!ht]
% \vspace{-0.5em}
%     \begin{minipage}{0.6\textwidth}
%         Our contributions are:
%         \begin{itemize}[leftmargin=*] % Use enumitem parameters here
%             \item We propose \method, a principled approach for preference optimization under omission noise, demonstrated in chest radiograph report generation (see Table~\ref{tab:omission_rates}). We construct preference pairs comparing the presence and absence of observations and incorporate PU learning into a novel objective function.
%             \item A scalable finetuning algorithm incorporating \method for granular multi-pathology detection in clinical report generation, compatible with both "chain-of-thought" reasoning and standard VLMs.
%             \item Experiments across two chest radiograph benchmarks and three base VLMs demonstrating robustness to omission noise.
%         \end{itemize}
%     \end{minipage}
%     \hfill 
%     \begin{minipage}{0.35\textwidth}
%         \centering
%         \includegraphics[width=\linewidth]{figures/figure1_teaser.pdf}
%         \caption{Our approach maintains high F1 under synthetic omission noise in report generation.}
%         \label{fig:placeholder}
%     \end{minipage}
%     \vspace{-1em}
% \end{figure}

\section{Related Work}
\textbf{Report Generation via VLMs.}
We focus on VLM architectures for chest X‑ray (CXR) report generation that fuse visual tokens directly into a large-scale, autoregressive language decoder, such as Flamingo‑CXR~\citep{tanno2025collaboration}, Llava-RAD~\citep{zambrano2025clinically}, and MedGemma~\citep{sellergren2025medgemma}. To optimize the autoregressive policy, the prevailing strategy involves a two-stage process: Supervised Fine-Tuning (SFT) followed by Reinforcement Learning (RL). Methods such as RadVLM and MRG‑R1 \citep{gundersen2026radvlm, wang2025mrg} utilize Group‑Relative Policy Optimization (GRPO) with clinically informed reward functions. RadVLM directly optimizes RadCliQ~\citep{yu2023evaluating}, a metric combining BLEU \citep{papineni2002bleu}, ROUGE \citep{lin2004rouge}, BERTScore~\citep{zhang2019bertscore}, CheXbert similarity~\citep{smit2020combining}, and RadGraph‑F1~\citep{jain2021radgraph}, to better capture radiological accuracy when compared to a reference report. Frameworks such as UniRG and VALOR \citep{liu2026scaling, bose2025visual} scale these optimizations using additional image-based rewards to improve visual grounding. Parallel to this, recent work has sought to incorporate visual reasoning using Chain-of-Thought (CoT) tokens in language modeling \citep{myronenko2025reasoning}. However, we argue that even with perfect grounding and reasoning, these strategies remain fundamentally bottlenecked and inherently subject to the omission noise in retrospective clinical reports.

\textbf{Positive-Unlabeled (PU) Learning.} PU-learning is the problem of learning a binary classifier from positive and unlabeled data \citep{de1999positive, bekker2020learning}. The formulation of PU as one-sided label noise contrasts from general bidirectional label noise \citep{frenay2013classification}. PU learning has been studied under various structural formulations (e.g., case-control versus one-sample scenarios) \citep{niu2016theoretical} and depends heavily on specific identifiability and labeling mechanisms, such as the Selected Completely At Random (SCAR) assumption or allowing forms of selection bias \citep{bekker2018learning, kato2019learning}. Various methods have been proposed, but the approach most relevant to our work is the unbiased empirical risk approach \citep{elkan2008learning, du2015convex, kiryo2017positive}, which requires an estimation of the positive class prior to correct for risk estimation in the unlabeled set containing a hidden mixture of both true positives and true negatives \citep{du2014class, jain2016nonparametric, garg2021mixture}. We detail the formulation and assumptions used in our work in Section~\ref{sec:pu_setup}.

\textbf{Training Language Models under Noisy Feedback.} Transitioning to LLMs necessitates a redefinition of what constitutes a label, as labels are often token sequences such as responses, reasoning chains, or preference feedback. To mitigate noisy responses in SFT, one approach, RobustFT \citep{luo2024robustft}, detects and denoises uncertain responses using an ``Expert" LLM, using clean samples as demonstrations. For noise in preference data characterized by symmetric preference label flips, methods have been proposed to denoise preference feedback or use noise-aware optimization methods \citep{liang2024ropo, kong2024perplexity, chowdhury2024provably}, with theoretical guarantees \citep{im2025well}. We contrast this with our novel framing of preference noise as a structured, one-sided missing-positive problem.
% Our work models omission noise using the PU-learning framework to create synthetic data. Preference labels capture the ambiguity over preferring pathology exclusion, combined with a DPO framework for optimization.

\textbf{Offline Synthetic Preference Data Generation.} Diverse offline preference data collection has been an effective approach for preference optimization \citep{xiong2023iterative, tajwar2024preference}, and can be achieved through synthetic data generation \citep{wang2023self, dong2024self} and manual editing \citep{gao2024aligning}. For report generation, EditGRPO \citep{zhang2025editgrpo} corrects on-policy samples during training. 

\section{Background and Problem Setup}
Let $X \in \mathcal{X}$ denote an input image (e.g., a chest X-ray), and let $Y = \{Y_1, \dots, Y_d\}$ denote a set of $d$ target findings corresponding to a multi-task learning scenario. We consider each finding as a binary random variable $Y \in \mathcal Y=\{-1,+1\}$. Random variables are denoted in uppercase, and their realizations in lowercase. For notational simplicity, we consider the problem setup for diagnosing a single, specific clinical finding, where the underlying presence or absence of the finding is denoted by $Y$. We extend our proposed method to a multi-task setting with $d$ pathology subtypes defined by granular attributes such as anatomical location and severity in Section~\ref{sec:general_alg}.

\subsection{Positive-Unlabeled Data}
\label{sec:pu_setup}
To address omission noise, we frame our problem using the
Positive-Unlabeled (PU) setting, specifically under the \emph{one-sample} scenario~\citep{elkan2008learning, coudray2023risk}. In this scenario, we observe a dataset $\mathcal{D} = \{(x_i, s_i)\}_{i=1}^{n}$ where samples $x_{i}$ are
drawn i.i.d.\ from the marginal distribution $\mathbb{P}_X$ and
$s_i \in \{{+}1, {-}1\}$ indicates whether sample $x_i$ has been
\emph{labeled} as positive. Crucially, $s_i = -{1}$ does not imply
$x_i$ is a true negative, it is simply unlabeled. We denote the
class-conditional distributions as
$\mathbb{P}_p = \mathbb{P}(X \mid Y = {+}1)$ and
$\mathbb{P}_n = \mathbb{P}(X \mid Y = {-}1)$, so the marginal
decomposes as
$\mathbb{P}_X = \alpha\,\mathbb{P}_p + (1 - \alpha)\,\mathbb{P}_n$,
where $\alpha = \mathbb{P}(Y = {+}1)$ is the true positive class
prior. 

Under the \emph{selected-completely-at-random} (SCAR) assumption, the labeling mechanism satisfies
$\mathbb{P}(S{=}{+}1 \mid X, Y{=}{+}1) = \mathbb{P}(S {=} {+}1 \mid Y {=} {+}1)= c$ for some constant
$c \in (0, 1]$, i.e., the likelihood of labeling a case is independent of $X$. In other words, there is no selection bias in the labeling of positive cases: $\mathbb{P}(X \mid Y{=}{+}1, S{=}{+}1) = \mathbb{P}(X \mid Y{=}{+}1, S{=}{-}1)$ \citep{elkan2008learning}. The labeled positives
$\mathcal{D}_p = \{x_i \in \mathcal{D} : s_i = 1\}$ are therefore
an i.i.d.\ sample from $\mathbb{P}_p$, while the full dataset
$\mathcal{D}$ is a sample from the marginal $\mathbb{P}_X$.

% \begin{assumption}[Selected-At-Random (SAR)]
%     The probability of a true positive finding being selected to be labeled is the propensity score $e(X) = \mathbb{P}(S={+}1 \mid X, Y={+}1)$
% \end{assumption}

\paragraph{Research Question} We now consider a setting where we have
access to an observational dataset
$\mathcal{D}_{\text{obs}} = \{(x_i, \tilde{a}_i, s_i)\}_{i=1}^{n}$ of
images ($x_i$), reports ($\tilde{a}_i$), extracted label indicators ($s_i$) corresponding to samples from the joint distribution $(X, \tilde A, S) \sim \mathbb{P}_{obs}$. Each report $\tilde{a}_i$ may
describe multiple clinical findings $j\in\{1,...,d\}$. Each finding $j$ is
associated with a latent ground truth $y_{ij} \in \{+1, -1\}$
indicating its true presence or absence in image $x_i$. However, the report only partially reveals these labels: let $s_{ij}$ be the
labeling indicator denoting whether observation $j$ is explicitly
mentioned in report $\tilde{a}_i$. When $s_{ij} = {+}1$, we assume that it is present $y_{ij} = {+}1$ (assuming no false-positives and label extraction error); however, $s_{ij} = {-}1$ does not imply
$y_{ij} = {-}1$. The true label may be unknown as the finding may be present but unreported.

A positive-unlabeled structure is induced at the level of individual findings: explicitly mentioned findings correspond to observed positives, while unmentioned findings are unlabeled. We treat each finding as a conditionally independent PU learning problem with a per-finding PU structure.

\begin{tcolorbox}[
    colback=gray!5,
    colframe=gray!75,
    title=Research Question,
    fonttitle=\bfseries,
    boxsep=4pt,
    left=6pt,
    right=6pt,
    top=6pt,
    bottom=6pt
]
How can preference optimization be corrected when positive visual findings are systematically unlabeled in retrospective reports?
\end{tcolorbox}

\subsection{Direct Preference Optimization} 
\label{sec:training_alg}
To contextualize our approach, we first establish the standard formulation of Direct Preference Optimization (called DPO) \citep{rafailov2023direct}.

We denote $\pi_\phi$ as a vision-language model (VLM) parameterized by $\phi$, which takes in an input image $X \in \mathcal X$ (in practice with an instruction), and outputs a discrete probability distribution $\pi_\phi(\cdot \mid x)$ over the vocabulary space $\mathcal V$. We denote $A \in \mathcal A$ as the generated model response, and $\pi_\phi(a \mid x)$ as the VLM's probability of generating the output response given input $x$. Preference optimization requires a dataset of pairwise responses, defined as follows:

\begin{definition}[Preference data]
Consider two responses $a_w, a_l$ for an input prompt $x$. We denote $a_w \succ a_l$ if $a_w$ is preferred over $a_l$. We call $a_w$ the preferred response and $a_l$ the non-preferred response. Each triplet $(x, a_w, a_l)$ is referred to as a preference. Furthermore, the empirical dataset $\mathcal{D} = \{(x_i, a_{w,i}, a_{l,i})\}_{i=1}^N$ consists of $N$ such triplets sampled from a preference distribution.
\end{definition}

We consider the objective corresponding to the generalized form of preference optimization:
\begin{align*}
    \mathbb{E}_{(x, a_w, a_l) \sim \mathcal{D}} \left[ f \left( \beta \left( \log \frac{\pi_\phi(a_w \mid x)}{\pi_{\text{ref}}(a_w \mid x)} - \log \frac{\pi_\phi(a_l \mid x)}{\pi_{\text{ref}}(a_l \mid x)} \right) \right) \right],
\end{align*} 
where $\beta > 0$ is a scaling parameter, $\pi_{\text{ref}}$ is the base model (for example, initialized by SFT), and the function $f: \mathbb{R} \to \mathbb{R}$ maps the preference margin to a loss value. Instantiating $f$ with the negative log-sigmoid function, $f(z) = -\log \sigma(z)$, recovers the DPO formulation.

The above objective can also be interpreted as maximizing a latent reward model $r_\phi(x, a) = \beta \log \pi_\phi(a \mid x) / \pi_{\text{ref}}(a\mid x)$ \citep{rafailov2023direct}. By defining the preference score difference as $z = r_\phi(x, a_w) - r_\phi(x, a_l)$, the objective effectively maximizes the latent reward for the preferred response $a_w$ while minimizing it for the dispreferred response $a_l$. Under the Bradley-Terry model \citep{bradley1952rank}, this margin $z$ yields the predicted probability that $a_w$ is preferred over $a_l$: $\mathbb{P}_\phi(a_w \succ a_l \mid x) = \sigma(z)$.

To optimize the policy, the standard approach is to perform Maximum Likelihood Estimation (MLE) over a noise-free dataset $\mathcal{D}$. This corresponds to minimizing the expected negative log-likelihood of the true preferences with respect to the model parameters $\phi$:
$$ \mathcal{L}_{\text{MLE}}(\phi) = \mathbb{E}_{(x, a_w, a_l) \sim \mathcal{D}} \left[ -\log \mathbb{P}_{\phi}(a_w \succ a_l \mid x) \right] = \mathbb{E}_{(x, a_w, a_l) \sim \mathcal{D}} \left[ -\log \sigma(z) \right]$$

We consider a variant of DPO where $a_w$ and $a_l$ are constructed from sampling on-policy responses from $\pi_\phi$ \citep{tajwar2024preference}. However, a naive approach assumes that the preference labels in $\mathcal{D}$ are ground-truth: $a_w$ is genuinely preferred over $a_l$ for every pair. This assumption is violated due to omission noise. To bridge this gap, we introduce a framework that formalizes DPO for contrastive clinical comparisons and replacing the observed preference-label assumption with a PU loss estimator.

\section{Method}
\label{sec:method}
In Section~\ref{sec:pu-dpo}, we first formalize how preference pairs are constructed via contrastive edits to a base response, producing variants that include or omit a
clinical finding. Then we incorporate positive-unlabeled learning into the objective to
correct for the systematic mislabeling that arises when absent mentions in reports do not indicate absent findings. Finally, in Section~\ref{sec:general_alg}, we detail how to incorporate these ideas into a full general post-training algorithm.

\subsection{\method: Positive-Unlabeled Direct Preference Optimization using Contrastive Edits}
\label{sec:pu-dpo}
\paragraph{Contrastive Edits} We drop the sample index $i$ for ease of exposition and propose our method for a single finding $j$. Given a base response $a$ to prompt $S$ sampled from $\pi_{\text{ref}}$, we construct a contrastive pair via an edit function $g: \mathcal{A} \times \mathcal{Y} \rightarrow \mathcal{A}$, producing $a_j^+ = g(a, +1)$ and $a_j^- = g(a, -1)$, where $a_j^+$ includes the finding $j$ and $a_j^-$ omits it. We denote $\mathbb{P}$ as the joint distribution over $(X, A_j^+, A_j^-, Y_j)$.

Recall that $\mathbb{P}_\phi(a_j^+ \succ a_j^- \mid x)$ is the probability that the response containing the finding $j$ is preferred. The ground truth preference is determined by $Y_j$: when $Y_j = {+}1$, $a_j^+$ should be preferred (i.e., $\mathbb{P}_\phi(a_j^+ \succ a_j^- \mid x)$ should be high), and when $Y_j = {-}1$, $a_j^-$ should be preferred (i.e., $\mathbb{P}_\phi(a_j^+ \succ a_j^- \mid x)$ should be low). The standard log-loss for noiseless positive-negative learning is then given by
\begin{align}
\label{loss:standard}
    R_j(\phi) &= \mathbb{E}_{\mathbb{P}}\left[-\mathbb{I}_{Y_j={+}1}\log \mathbb{P}_\phi(a_j^+ \succ a_j^- \mid x) - \mathbb{I}_{Y_j={-}1}\log (1 - \mathbb{P}_\phi(a_j^+ \succ a_j^- \mid x))\right] \\
    &= \alpha R_{p,j}^+(\phi) + (1-\alpha)R_{n,j}^-(\phi)
\end{align}
where the decomposition is the weighted sum of the expected log loss for true positives and negatives: $R_{p,j}^+(\phi) = \mathbb{E}_{\mathbb P_p}[-\log \mathbb{P}_\phi(a_j^+ \succ a_j^- \mid x)]$ and $R_{n,j}^-(\phi) = \mathbb{E}_{\mathbb P_n}[-\log (1-\mathbb{P}_\phi(a_j^+ \succ a_j^- \mid x))]$. 

\paragraph{Learning From Positive Unlabeled Data} 
\label{sec:pu-dpo}
This framing allows us to model Positive-Unlabeled (PU) data as a special case of preference noise, where we model the dominant error mode as one-sided omissions of true positives. This allows us to directly integrate a PU learning approach into the preference optimization objective. 

In the PU setting, we cannot directly compute Equation~\eqref{loss:standard} and $R_n^-(\phi)$ is unknown. We can rewrite the log-loss for \method as follows
\begin{align}
\label{loss:pu}
    R_j(\phi) = \alpha R_{p,j}^+(\phi) - \alpha R_{p,j}^-(\phi) + R_{X,j}^-(\phi)
\end{align}
where $R_{X,j}^-(\phi) = \mathbb{E}_{\mathbb{P}_X}[-\log \mathbb{P}_\phi(a_j^- \succ a_j^+ \mid x)]$ is the expected loss for preferring absence $a_j^-$, taken over the full marginal distribution of images. Because this distribution mixes true negatives with all positives (both labeled and hidden), $R_{X,j}^-(\phi)$ is a contaminated estimate of the expected loss for preferring absence $R_{n,j}^-(\phi)$. The PU decomposition resolves this by subtracting off the contamination, where the term $\alpha R_{p,j}^-(\phi)$ captures the expected loss contribution from positives (both labeled and hidden) being incorrectly pushed toward absence, weighted by the class prior $\alpha$. Removing this term from $R_{X,j}^-(\phi)$ leaves an unbiased estimate of $(1-\alpha)R_{n,j}^-(\phi)$, recovering the noiseless expected preference loss using only quantities we can compute from the observed data.

We show that the estimator in Equation~\eqref{loss:pu} is an unbiased estimate of Equation~\eqref{loss:standard} in Appendix~\ref{app:pu_unbiased}. Gradient derivations to enable learning are provided in Appendix~\ref{app:gradient}, showing that the oracle, noise-free gradient is reconstructed in expectation under SCAR and correct specification of $\alpha$.

\subsection{Bridging the Gap: A Multi-task Loss for Granular Pathology Detection}
\label{sec:general_alg}
We now describe how \method is integrated into a complete
post-training pipeline for report generation. The key design
choice is the granularity of the observation variable $Y$. Rather
than treating each attribute independently, we define $Y_j$ as a
disease subtype: a conjunction of finding, location, and clinical
characteristics. For example, $Y_j$ may represent ``small
left-sided pleural effusion'' or ``severe right basilar
atelectasis.'' Each subtype constitutes a separate PU problem
$(S_j, Y_j)$ with its own class prior $\alpha_j$ and labeling
rate, and we adopt a weaker multi-SCAR assumption to model heterogeneity in labeling rates: SCAR holds within each subtype, but labeling rate may vary across subtypes (Section~\ref{sec:pu_setup}). Therefore, the final multi-task expected preference loss is given by
\begin{equation}
\label{eq:multi-risk}
    R^{\text{multi}}(\phi) = \mathbb{E}_{j \sim \text{Unif}(1,d)} \left[ \alpha_j R_{p,j}^+(\phi) - \alpha_j R_{p,j}^-(\phi) + R_{X,j}^-(\phi) \right]
\end{equation}

\begin{remark}
We note that uniform sampling of $j$ leads to a shift in the joint distribution from the original dataset distribution $\mathbb{P}$ to a uniform target distribution over diagnostic tasks $\mathbb{P}_{\text{tar}}$. Additionally, under PU assumptions and sufficient support over edited responses, minimizing the multi-task loss in \eqref{eq:multi-risk} recovers the optimal marginal preference for each subtype under $\mathbb{P}_{\text{tar}}$. While this provides a principled local training signal, we note that globally optimal report generation is not guaranteed without additional assumptions on the joint reward.
\end{remark}
We provide theoretical intuition and additional discussion of these design choices in Appendix~\ref{app:theorem-prroof}. Building on this analysis, we now describe the resulting training algorithm at a high level; full implementation details for each step are deferred to Appendix~\ref{app:pu-training-protocol}.

\textbf{Preference Dataset Construction.} Starting from an SFT-initialized model $\pi_{\text{ref}}$, we sample a base response $a \sim \pi_{\text{ref}}(\cdot \mid x)$ for each input image $x$. We then draw a target subtype $j \sim \mathrm{Unif}(1, d)$ and pass $a$ to a separate editor LLM (GPT-OSS 120B), which produces a contrastive pair $(a_j^+, a_j^-)$: $a_j^+$ explicitly mentions subtype $Y_j$ while $a_j^-$ omits it. Editing proceeds in two stages: the editor first observes the reference report and revises $a$ to correct any inconsistencies, then conditions on the target subtype $Y_j$ to produce the contrastive edit. Editing prompts are provided in Appendix~\ref{app:prompts}, and we provide a study to audit these responses with a radiologist in Appendix~\ref{app:audit}. To ensure that edits remain on-policy with respect to $\pi_{\text{ref}}$, we discard any pair for which the token length of $a_j^+$ or $a_j^-$ differs from $a$ by more than $\tau$ tokens.

\textbf{Training}. At each training step, we sample a finding $j$  uniformly at random, and construct a batch consisting entirely of preference pairs comparing $a_j^+$ and $a_j^-$. This ensures that the inner loss estimate of Eq.~\eqref{eq:multi-risk} obtained in a training batch is computed over a homogeneous comparison axis (i.e. comparing the presence vs. absence for a single clinical finding) with a shared class prior $\alpha_j$. For a batch size of $n_s$ with $n_p$ positive samples, we can construct empirical estimates of the loss by computing individual terms in Eq.~\eqref{eq:multi-risk} such as $\hat R_{p,j}^+(\phi) = \frac{1}{n_p}\sum_{i=1}^{n_p}-\log \hat{\mathbb P}_\phi(a_j^+ \succ a_j^- \mid x_i)$. To ensure numerical stability, batches are constructed with stratified sampling guaranteeing $n_p > 0$ labeled positives.
 
\section{Experiments}
\label{sec:experiments}

\subsection{CheXpert Radiologist Study: Omission Noise in Retrospective Annotations}
\label{sec:radiologist_study}
To assess whether the training data contains omission noise, we conducted an independent review with two board-certified radiologists (Details deferred to Appendix~\ref{app:radio_study}). 
The study was approved by Columbia IRB (AAAV6261). 
Each radiologist reviewed a set of chest radiographs from the CheXpert dataset and labeled three key clinical findings. We derived a reference label for each finding by taking the disjunction of the two annotations: a finding was treated as potentially present if \emph{either} radiologist marked it as definitely or probably present. Comparing these reference labels against the original CheXpert reports (collectively from findings and impression sections), we measured how often a finding that may be judged as present by the radiologists was not mentioned in the original report. 

We define the omission rate as the frequency at which a finding is not explicitly mentioned in the original reports (and therefore our training data), despite being visually identified by at least one radiologist. Conversely, the hidden positive rate measures the likelihood that a finding is present (as determined by the radiologists) when it was omitted from the dataset's original report. Table~\ref{tab:study} demonstrates both the omission rate and the hidden positive rates. Notably, the reference reports can omit the finding in up to 43.8\% of cases, as seen with cardiomegaly. These results demonstrate that "unlabeled" data often contains visual evidence of clinical conditions that are not present in the original reports, strongly motivating our PU learning approach.

\subsection{Semi-synthetic Evaluation} We first validate that the PU correction in \method recovers fully supervised performance in the presence of positive label omissions. As a proof of concept, we focus on detecting two clinically significant findings from chest X-rays: pleural effusion and cardiomegaly. 

\textbf{Datasets} For each pathology, we construct a clean, fully labeled dataset from CheXpert by sampling images with confirmed positive and negative annotations. Because explicit true negative labels (indicating the definitive absence of a finding) exist for a subset of CheXpert cases, this setup provides an ideal gold standard for rigorously evaluating noise robustness. We create semi-synthetic training datasets by removing findings in true positive reports with different noise rates by taking confirmed positive cases and removing their mentions at varying rates $c \in \{0.1, 0.2, 0.3\}$, following a SCAR mechanism. This produces datasets with increasing proportions of hidden positives. 

\textbf{Models} For each noise rate, we finetune three models from the base Medgemma model \citep{sellergren2025medgemma}: (1) a DPO ablation that treats all unlabeled cases as true negatives, (2) Dr-DPO, a state-of-the-art approach for robust DPO under preference noise \citep{wu2024towards}, and (3) \method, which applies the PU correction. For our semi-synthetic experiments, we provide \method with the ground-truth class prior $\alpha = 0.5$, achieved by resampling the dataset so that positive and negative examples are balanced. We study robustness to $\alpha$ misspecification under this controlled setting (Appendix Table~\ref{tab:results-alpha-misspecification}). Each method is also trained on the initial fully-labeled noiseless dataset (i.e., $c = 0$), which serves as an oracle. We evaluate the models' pathology-detection performance on a held-out set of images and their corresponding ground-truth labels.

\begin{figure}[!h]
    \centering
    \includegraphics[width=\linewidth]{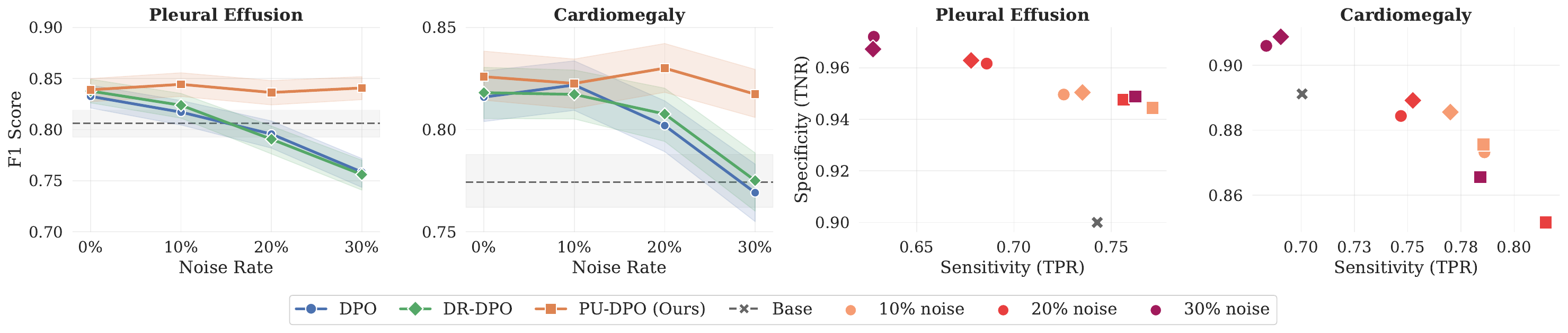}
    \caption{Our PU learning approach maintains a high F1 score as the noise rate increases. Sensitivity decreases significantly for naive approaches. $n_{test}=5000$ per task with 95\% CI via bootstrap}
    \label{fig:tprandpareto}
\end{figure}

\textbf{\method is an effective post-training mechanism.} We first note that preference optimization using contrastive edits is an effective strategy. Compared to the out-of-the-box base model performance (base) in Table~\ref{tab:results-pleural-effusion} and~\ref{tab:results-cardiomegaly} in Appendix~\ref{sec:synth_results}, finetuning improves sensitivity and specificity for all approaches at $0\%$ noise rate. We also examine the learned preference probabilities using log loss and Brier score using the ground truth disease label as the target in  Table~\ref{tab:results-uncertainty}  (Appendix~\ref{sec:synth_results}). The table also compares to SFT with our PU learning correction, serving as an ablation for using contrastive signals (and the explicit negative gradient with DPO) as opposed to only maximizing the likelihood of the preferred response. The results demonstrate that DPO is beneficial for learning to separate candidate presence vs. absence responses. However, we note in that Appendix Fig.~\ref{fig:calibration_pleural} that DPO leads to overconfidence and inferior calibration with respect to the distribution of pathologies in the dataset.

\textbf{PU correction recovers hidden positives during training.}
We evaluate each method by extracting predicted labels from the generated reports and computing the F1 score, sensitivity (recall of true positives), and specificity (recall of true negatives). Fig.~\ref{fig:tprandpareto} and appendix Table~\ref{tab:results-pleural-effusion} compare \method with naive labels against \method across increasing hidden positive rates. As the flip rate increases, naive DPO exhibits a monotonic decline in sensitivity, indicating that the model increasingly fails to mention findings that are truly present. This is an expected failure mode since hidden positives in the unlabeled set are treated as true negatives, teaching the model that omitting the finding is preferred. \method mitigates this by correcting for
the contaminated preference signal, maintaining sensitivity at 0.76-77 even at the highest noise rate while preserving comparable specificity (not introducing false positives/hallucinations). The overall F1 score reflects this: \method remains close to the oracle across all noise levels, whereas naive DPO degrades significantly.

One might expect that robustness to preference noise could handle the hidden positive problem. However, comparing Dr-DPO to \method in Table~\ref{tab:results-pleural-effusion}, we find that Dr-DPO presents low sensitivity at higher noise rates. Two factors explain this phenomenon. First, \method explicitly incorporates the class prior $\alpha$, which directly quantifies the expected contamination in the unlabeled set and allows the loss to correct for it. Dr-DPO has no access to this information. Second, Dr-DPO is motivated by distributionally robust optimization and relies on the log-likelihoods of the trained model to downweight the gradients of high-loss samples, under the assumption that systematically mislabeled pairs will incur disproportionately high loss. However, if the trained model does not inherently separate out preferences, then reweighting fails to identify and suppress them.

\subsection{Real-world CXR Report Generation} We evaluate \method on two standard, large-scale medical imaging benchmarks: MIMIC-CXR and CheXpert, focusing on three conditions (pleural effusion, cardiomegaly, and atelectasis) that are known to exhibit omissions \citep{majkowska2020chest}.

\textbf{Datasets} For our real-world evaluation, we test our approach using two distinct settings: (1) We directly assess whether real hidden positive samples were successfully recovered using a radiologist-adjudicated test set of image-label pairs (n=500). Available specifically for the CheXpert dataset, this adjudicated set contains ground-truth labels established through majority voting by five radiologists, alongside the CheXbert labels extracted from the original reports \citep{saporta2022benchmarking}. (2) We evaluate diagnostic accuracy and overall report quality using a larger test dataset constructed from both the original CheXpert (n=45764) and MIMIC-CXR (n=42643) test split reports, using known positives and negatives. 

\textbf{Models} 
We evaluate the robustness and generalizability of our multi-task approach. To ensure our method is agnostic to pretraining and architectural differences, we conduct our analyses using three distinct VLM backbones for our SFT-initialized reference policy ($\pi_{\text{ref}}$): LLaVA-Rad \citep{zambrano2025clinically}, MedGemma \citep{sellergren2025medgemma}, and NV-reason (NVIDIA-reason) \citep{myronenko2025reasoning}. We selected these models because they offer a diverse range of initial capabilities and output strategies, specifically allowing us to verify that our approach remains compatible with chain-of-thought reasoning. 

We compare our approach against our own ablations and established baselines from the literature: (1) standard Supervised Fine-Tuning (SFT) on the target dataset, as well as (2) a widely-adopted, GRPO-based approach \citep{gundersen2026radvlm} that utilizes RadCliQ process rewards—a composite metric scoring text overlap, pathology-level F1, and granular concept F1 via Radgraph. For our approach, we apply multi-task \method training using seven pathology subtypes: small left, right, and bilateral pleural effusions; mild left, right, and bilateral atelectasis; and mild cardiomegaly. We also compare against the same multi-task \method without the PU correction using the same preference dataset (DPO). The subtypes used were curated in collaboration with a board-certified radiologist, and aim to define pathology-specific subtypes defined by attributes (e.g., size, spatial location, and severity) that may influence labeling difficulty. As a motivating example, we demonstrate in Appendix Fig.~\ref{fig:tpr_attributes} that Medgemma model predictions for these subtypes have low detection rates with respect to CheXpert-derived labels. This effect is obscured when evaluation is limited to broad pathology categories.  Additional details for the dataset, training, and evaluation are provided in Appendix~\ref{app:training-details}.

\textbf{$\alpha$-Estimation.} Finally, we consider two methods of estimating the class prior $\alpha$ in our real-world experiments. For a purely data-driven approach, we estimate $\alpha_{\text{BBE}}$ using Best-Bin Estimation (BBE) (see Appendix~\ref{app:bbe}) \citep{garg2021mixture}, training a LightGBM positive-vs-unlabeled classifier for each subtype. Inputs are image embeddings extracted from the frozen MedGemma vision encoder. Second, we generate an estimate $\alpha_{\text{adjudicated}}$ that has access to the oracle using the adjudicated test set, computing the empirical frequency of hidden positives to obtain pathology-level estimates of $\mathbb{P}(Y_{\text{pathology}} = {+}1 \mid S={-}1)$, and multiplying with $\mathbb{P}(Y_{\text{subtype}} \mid Y_{\text{pathology}}={+}1)$ obtained via empirical frequencies in the training dataset. These estimates for real-world datasets are provided in Appendix~\ref{app:alpha_estimation_results}.

\begin{figure}[htbp]
    \centering
    \includegraphics[width=\linewidth]{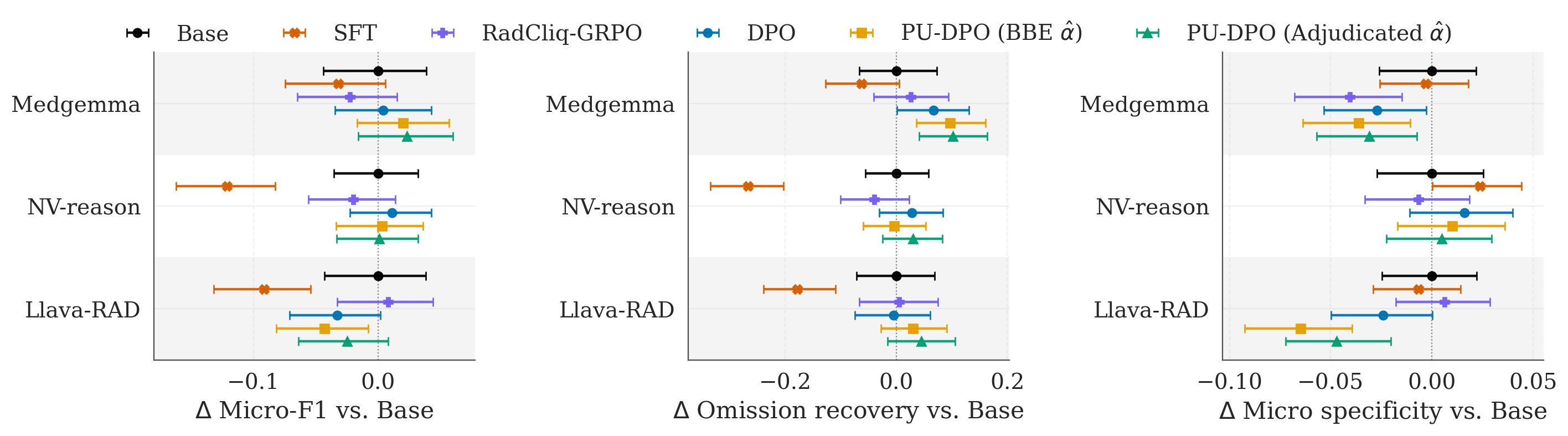}
    \caption{Relative improvement in Micro-F1, omission recovery, and specificity compared to the base model for the three findings (pleural effusion, atelectasis, cardiomegaly), for the adjudicated test set from CheXpert.}
    \label{fig:dot-plot-adjudicated}
    \vspace{-2pt}
\end{figure}

\textbf{Omission recovery in adjudicated test set.} On the adjudicated test set we compute omission recovery, the rate at which a model reports a finding that the original clinical report omitted but the adjudicating radiologists confirmed as present (i.e. recovery of hidden positives). The evaluation on Figure~\ref{fig:dot-plot-adjudicated} shows evidence that \method achieves the intended effect of improving omission recovery over DPO. Because PU-DPO is exactly our DPO baseline plus a 
$\alpha$-scaled PU loss correction term, the choice of $\alpha$ effectively acts as a calibration factor. It dictates the extent to which the VLM shifts its internal diagnostic threshold, controlling the trade-off between sensitivity and specificity. Our underlying DPO approach aims to teach the model to better capture pathology signals. PU-DPO builds on this as a well-specified $\alpha$ calibrates the model to maintain sensitivity when faced with hidden positives in training, without sacrificing the benefits of DPO. However, given the sample size we cannot yet conclude with statistical certainty that our finetuned models yield consistent improvements on all metrics across a broader clinical deployment distribution.

\textbf{Robustness to SCAR violation and $\alpha$ misspecification.} Using the adjudicated test set, we evaluate our method's robustness against violations of the SCAR assumption and the misspecification of $\alpha$. First, a kernel MMD test reveals that cardiomegaly exhibits statistically significant selection bias (violating SCAR), whereas atelectasis and pleural effusion maintain indistinguishable distributions between labeled and hidden positives (Table~\ref{tab:scar_check}). As a result, pathology-specific metrics (Appendix Fig.~\ref{fig:pathologywise_medgemma}, \ref{fig:pathologywise_nvreason}, and \ref{fig:pathologywise_llava}) show that the PU loss correction is ineffective for cardiomegaly, with omission recovery failing to improve over the naive DPO baseline. This suggests that the real-world efficacy of \method may depend on the SCAR assumption holding true. Second, our sensitivity analysis (Fig.~\ref{fig:alpha_medgemma}, \ref{fig:alpha_nvreason}, and \ref{fig:alpha_llava}) demonstrates how $\alpha$ effectively controls the sensitivity-specificity tradeoff. When training data contains hidden positives, conservatively underestimating $\alpha$ serves as a safe strategy, although it may not yield the full performance benefits of \method.

\begin{figure}[htbp]
    \centering
    \includegraphics[width=\linewidth]{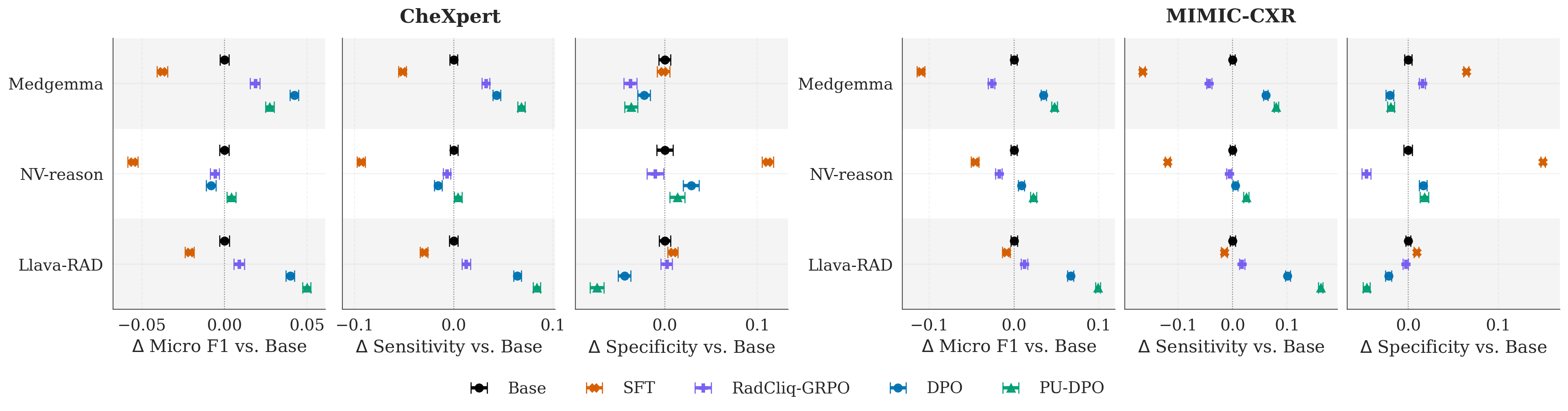}
    \caption{Relative improvement in Micro-F1, sensitivity, and specificity compared to the base model for the three findings (pleural effusion, atelectasis, cardiomegaly), across both the CheXpert (left) and MIMIC-CXR (right) dataset.}
    \label{fig:dot-plot}
    \vspace{-2pt}
\end{figure}

\textbf{Improvements in detection rate across subtypes.}
\method achieves consistently higher F1 scores across datasets and pathologies, driven by the gains in sensitivity (Fig.~\ref{fig:dot-plot}, Appendix Table~\ref{tab:filtered_f1_by_dataset_pathology_method}, ~\ref{tab:eval_metrics_chexpert_by_model_metric_method} and~\ref{tab:eval_metrics_mimic_by_model_metric_method}). The benefit of the PU correction is clearest against the standard DPO ablation, which uses the same contrastive-edit preference data but optimizes a noise-unaware objective. In contrast, offline RadCliq-GRPO yields inconsistent gains at the pathology level for two reasons: (1) it optimizes an aggregate reward that conflates multiple objectives, and (2) the training signal is limited to naive base model responses, reducing the diversity and controllability of the data. For isolating the effect of the reward mechanism and training data, we run GRPO on a fixed dataset matched in size to DPO, which provides a fair comparison but may understate its performance relative to online variants.

However, we provide two caveats: (i) some models show modest specificity degradation relative to the base model, which we attribute either to overestimation of $\alpha_j$ or to base models already calibrated for high specificity (e.g., LLaVA-RAD); and (ii) the chain-of-thought variant (NV-Reason) does not strongly benefit from \method in the CheXpert dataset, which we suspect reflects either high initial sensitivity, or limitations in the editor LLM's ability to produce contrastive edits over long reasoning traces, and a stronger editor or expert adjudication of preference pairs may lead to improvements.

\textbf{\method does not compromise report quality.} Report generation quality is also measured using natural language metrics. In table~\ref{tab:eval_metrics_chexpert_by_model_metric_method} and table~\ref{tab:eval_metrics_mimic_by_model_metric_method} of Appendix~\ref{app:real_exp_results}, we demonstrate that \method maintains robust performance when evaluating overall report quality, such as BLEU, ROUGE, RadGraph-F1 and ChexBert-F1. Details for each metric are provided in  and results across datasets and models are included in Appendix~\ref{app:evaluation}. Notably, there is minimal degradation in report quality with respect to these metrics.

\section{Discussion}
\label{sec:discussion}
In this work, we introduced the problem of omission noise in observational chest radiology reports and presented \method, a preference optimization method for training VLMs under this form of noise. We formalized report generation as a positive-unlabeled (PU) learning problem, treating unmentioned findings as unlabeled rather than negative. We proposed a principled offline preference optimization algorithm that constructs preference pairs via contrastive edits and optimizes a noise-aware objective. We then empirically validated \method across semi-synthetic and real-world settings, demonstrating robustness to omission noise and consistent improvements in detection rates across multiple pathologies. Our comparisons show that prior noise-aware DPO variants do not adequately address omission noise, and that prior task-specific approaches for radiology report generation lead to unpredictable behavior, thereby establishing the effectiveness and necessity of \method.

\paragraph{Limitations.} \textbf{\emph{(i)}} Although theoretically grounded, our method depends on accurate estimation of the class prior $\alpha$, a well-known practical challenge in PU learning. We evaluate a data-driven estimator in this work, but alternative strategies, such as eliciting ground-truth prevalence estimates from radiologists, may yield more reliable values. \textbf{\emph{(ii)}} Our analysis relies on the multi-SCAR assumption, which posits that within each pathology subtype, true positives have a uniform labeling probability. In other medical imaging settings, omission likelihood may depend on factors such as imaging view and patient context (e.g., ICU vs. outpatient). Relaxing this assumption to accommodate such dependencies is an important direction for future work. Furthermore, scaling up radiologist adjudication would strengthen the verification of our SCAR assumption and $\alpha$ estimates, while also increasing the statistical confidence of our real-world validation.
\textbf{\emph{(iii)}} Radiologist reporting typically rely on multiple views, while our experimental validation was restricted to single frontal-views for report generation.

\bibliographystyle{plainnat}
\bibliography{references}

\newpage

% \include{checklist}

%%%%%%%%%%%%%%%%%%%%%%%%%%%%%%%%%%%%%%%%%%%%%%%%%%%%%%%%%%%%

\appendix

\section{PU Learning}
\subsection{Unbiasedness}
\label{app:pu_unbiased}
\begin{proposition}[Unbiased PU Loss Estimator]
\label{prop:pu-unbiased}
Let $\mathbb{P}_p$ and $\mathbb{P}_n$ denote the class-conditional
distributions of $X$ given $Y = +1$ and $Y = -1$ respectively,
and let $\mathbb{P}_X$ denote the marginal distribution of $X$.
Let $\alpha = \mathbb{P}(Y = +1)$ be the positive class prior.
Hence, $\mathbb{P}_X = \alpha \mathbb{P}_p + (1-\alpha) \mathbb{P}_n$. Define the following per-class expected loss for a model
$\mathbb{P}_\phi(a^+ \succ a^- \mid x)$:
\begin{align*}
    R_p^+(\phi) &= \mathbb{E}_{\mathbb{P}_p}
    \big[-\log \mathbb{P}_\phi(a^+ \succ a^- \mid x)\big] \\
    R_p^-(\phi) &= \mathbb{E}_{\mathbb{P}_p}
    \big[-\log\big(1 - \mathbb{P}_\phi(a^+ \succ a^- \mid x)\big)\big] \\
    R_n^-(\phi) &= \mathbb{E}_{\mathbb{P}_n}
    \big[-\log\big(1 - \mathbb{P}_\phi(a^+ \succ a^- \mid x)\big)\big] \\
    R_X^-(\phi) &= \mathbb{E}_{\mathbb{P}_X}
    \big[-\log\big(1 - \mathbb{P}_\phi(a^+ \succ a^- \mid x)\big)\big]
\end{align*}
Then the standard supervised loss
\begin{align*}
    R(\phi) = \mathbb{E}_{\mathbb{P}}\big[
    -\mathbb{I}_{Y=+1}\log \mathbb{P}_\phi(a^+ \succ a^- \mid x)
    -\mathbb{I}_{Y=-1}\log(1 - \mathbb{P}_\phi(a^+ \succ a^- \mid x))
    \big]
\end{align*}
admits the equivalent decomposition
$$R(\phi) = \alpha\, R_p^+(\phi) - \alpha\, R_p^-(\phi) + R_X^-(\phi)$$
which depends only on expectations over $\mathbb{P}_p$ (labeled
positives) and $\mathbb{P}_X$ (the unlabeled marginal), and not
on $\mathbb{P}_n$.
\end{proposition}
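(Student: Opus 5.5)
The plan is to write the supervised risk $R(\phi)$ as a sum of class-conditional expectations, then rewrite the term over $\mathbb{P}_n$ using the mixture decomposition of $\mathbb{P}_X$. Throughout, abbreviate $\ell^+(x) = -\log \mathbb{P}_\phi(a^+ \succ a^- \mid x)$ and $\ell^-(x) = -\log(1-\mathbb{P}_\phi(a^+ \succ a^- \mid x))$. These are functions of $x$ alone once the edit pair $(a^+,a^-)$ is fixed as a function of the base response (or averaged over it). This makes all four risks in Proposition~\ref{prop:pu-unbiased} expectations of fixed functions under $\mathbb{P}_p$, $\mathbb{P}_n$ or $\mathbb{P}_X$.

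First, condition on $Y$ via the tower property:
\begin{align*}
R(\phi) = \mathbb{P}(Y{=}{+}1)\,\mathbb{E}[\ell^+(X)\mid Y{=}{+}1] + \mathbb{P}(Y{=}{-}1)\,\mathbb{E}[\ell^-(X)\mid Y{=}{-}1] = \alpha R_p^+(\phi) + (1-\alpha) R_n^-(\phi).
\end{align*}
This is the decomposition already stated after Equation~\eqref{loss:standard}.

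Second, integrate $\ell^-$ against both sides of $\mathbb{P}_X = \alpha\mathbb{P}_p + (1-\alpha)\mathbb{P}_n$. Linearity of the integral in the measure gives $R_X^-(\phi) = \alpha R_p^-(\phi) + (1-\alpha) R_n^-(\phi)$. Solving yields $(1-\alpha)R_n^-(\phi) = R_X^-(\phi) - \alpha R_p^-(\phi)$. Substituting into the first step gives the claimed identity $R(\phi) = \alpha R_p^+(\phi) - \alpha R_p^-(\phi) + R_X^-(\phi)$.

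Third, I would remark why the right-hand side is estimable from PU data. Under SCAR, $\mathbb{P}(X\mid Y{=}{+}1,S{=}{+}1)=\mathbb{P}_p$, so the labeled positives are an i.i.d.\ sample from $\mathbb{P}_p$ and the full dataset is a sample from $\mathbb{P}_X$. Sample means of $\ell^\pm$ over these sets are therefore unbiased for $R_p^\pm$ and $R_X^-$, and $\mathbb{P}_n$ never appears.

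The only delicate point is the second step: subtracting expectations requires them to be finite. I would handle this by assuming $\mathbb{P}_\phi(a^+\succ a^-\mid x)=\sigma(z)\in(0,1)$ with $z$ integrable, so that $\ell^\pm(x)=\log(1+e^{\mp z})\le |z|+\log 2$ is integrable under $\mathbb{P}_p$, $\mathbb{P}_n$ and $\mathbb{P}_X$. Beyond this, the argument is pure linearity, and I expect no real obstacle.
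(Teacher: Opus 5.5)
Your proposal is correct and follows the paper's proof essentially step for step. You condition on $Y$ to get $R(\phi)=\alpha R_p^+(\phi)+(1-\alpha)R_n^-(\phi)$, integrate the negative-class loss against $\mathbb{P}_X=\alpha\mathbb{P}_p+(1-\alpha)\mathbb{P}_n$, solve for $(1-\alpha)R_n^-(\phi)$ and substitute; your integrability condition (which rules out an $\infty-\infty$ subtraction) and your SCAR remark on estimability are sensible additions that the paper leaves implicit.
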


\begin{proof}
By the law of total expectation, conditioning on $Y$ gives:
\begin{align*}
    R(\phi)
    &= \mathbb{P}(Y=+1)\,
    \mathbb{E}_{\mathbb{P}_p}
    \big[-\log \mathbb{P}_\phi(a^+ \succ a^- \mid x)\big] \\
    &\quad + \mathbb{P}(Y=-1)\,
    \mathbb{E}_{\mathbb{P}_n}
    \big[-\log(1 - \mathbb{P}_\phi(a^+ \succ a^- \mid x))\big] \\
    &= \alpha\, R_p^+(\phi) + (1 - \alpha)\, R_n^-(\phi)
\end{align*}
The challenge is that $R_n^-(\phi)$ requires expectations over
$\mathbb{P}_n$, which is unobserved in the PU setting. To
eliminate this dependence, we decompose the marginal distribution using the standard mixture identity:
$$\mathbb{P}_X = \alpha\, \mathbb{P}_p + (1 - \alpha)\, \mathbb{P}_n$$
Taking the expectation of $-\log(1 - \mathbb{P}_\phi(a^+ \succ
a^- \mid x))$ under $\mathbb{P}_X$ and applying linearity:
$$R_X^-(\phi) = \alpha\, R_p^-(\phi) + (1 - \alpha)\, R_n^-(\phi)$$
Solving for the unobserved term gives:
$$(1 - \alpha)\, R_n^-(\phi) = R_X^-(\phi) - \alpha\, R_p^-(\phi)$$
Substituting this into the expression for $R(\phi)$ yields:
\begin{align*}
    R(\phi)
    &= \alpha\, R_p^+(\phi) + R_X^-(\phi) - \alpha\, R_p^-(\phi) \\
    &= \alpha\, (R_p^+(\phi) - R_p^-(\phi))+ R_X^-(\phi)
\end{align*}
The right-hand side involves only expectations over $\mathbb{P}_p$
and $\mathbb{P}_X$, completing the proof.
\end{proof}

\subsection{Gradient Derivation and Analysis}
\label{app:gradient}
We derive the per-sample gradient of the \method objective. Using
the shorthand
$$z = \beta \log \frac{\pi_\phi(a^+ \mid x)}{\pi_{\text{ref}}(a^+ \mid x)}
- \beta \log \frac{\pi_\phi(a^- \mid x)}{\pi_{\text{ref}}(a^- \mid x)}$$
the predicted preference probability is
$\mathbb{P}_\phi(a^+ \succ a^- \mid x) = \sigma(z)$. The gradient
of $z$ with respect to $\phi$ is
$$\nabla_\phi z = \beta\big(\nabla_\phi \log \pi_\phi(a^+ \mid x)
- \nabla_\phi \log \pi_\phi(a^- \mid x)\big)$$
The per-sample loss components have gradients:
\begin{itemize}
    \item Positive loss ($R_p^+$):
    $\nabla_\phi[-\log \sigma(z)]
    = -(1 - \sigma(z))\,\nabla_\phi z$
    \item Flipped positive loss ($R_p^-$):
    $\nabla_\phi[-\log(1 - \sigma(z))]
    = \sigma(z)\,\nabla_\phi z$
    \item Marginal negative loss ($R_X^-$):
    $\nabla_\phi[-\log(1 - \sigma(z))]
    = \sigma(z)\,\nabla_\phi z$
\end{itemize}

The empirical PU loss over a batch of $n$ samples, with $n_p$
labeled positives and $n_u = n - n_p$ unlabeled, is
$$\widehat{R}(\phi) = \alpha\, \widehat{R}_p^+(\phi)
+ \max\!\left(0,\; \widehat{R}_X^-(\phi)
- \alpha\, \widehat{R}_p^-(\phi)\right)$$
The gradient contributions per
sample are as follows.

\textbf{Labeled positive sample $i$:} contributes to all three
terms.
\begin{align*}
    \nabla_\phi \mathcal{L}_i^{\text{pos}}
    &= \frac{\alpha}{n_p}\underbrace{\big[-(1 - \sigma(z_i))\,
    \nabla_\phi z_i\big]}_{\text{from } R_p^+}
    - \frac{\alpha}{n_p}\underbrace{\big[\sigma(z_i)\,
    \nabla_\phi z_i\big]}_{\text{from } R_p^-}
    + \frac{1}{n}\underbrace{\big[\sigma(z_i)\,
    \nabla_\phi z_i\big]}_{\text{from } R_X^-} \\
    &= \left(-\frac{\alpha}{n_p}
    + \frac{1}{n}\sigma(z_i)\right) \beta\big(\nabla_\phi \log \pi_\phi(a_i^+ \mid x_i)
- \nabla_\phi \log \pi_\phi(a_i^- \mid x_i)\big)
\end{align*}

\textbf{Unlabeled sample $i$:} contributes only to $R_X^-$.
\begin{align*}
    \nabla_\phi \mathcal{L}_i^{\text{unl}}
    = \frac{1}{n}\sigma(z_i)\,\beta\big(\nabla_\phi \log \pi_\phi(a_i^+ \mid x_i)
- \nabla_\phi \log \pi_\phi(a_i^- \mid x_i)\big)
\end{align*}
Therefore, the update via gradient descent increases the likelihood of $a^-$ for all samples, acting as a constant downward push. However, the upward correction push of the $-\alpha/n_p$ term for known true positives is what reconstructs the noiseless gradient.

\paragraph{Deriving The Batch Gradient.}
We now derive the batch gradient by summing  the gradients over the positive and unlabeled samples in a batch.
\begin{align*}
    &\sum_{i \in P}\!\left(-\frac{\alpha}{n_p}
    + \frac{\sigma(z_i)}{n}\right)\nabla_{\phi} z_i + \sum_{i \in U}\left(\frac{\sigma(z_i)}{n}\right)\nabla_{\phi} z_i
    \\&= \underbrace{-\frac{\alpha}{n_p}\!\sum_{i \in P}\nabla_{\phi} z_i}_{\text{(I)}}
    + \underbrace{\frac{1}{n}\!\sum_{i \in P}\sigma(z_i)\nabla_{\phi} z_i}_{\text{(II)}} + \underbrace{\frac{1}{n}\!\sum_{i \in U}\sigma(z_i)\nabla_{\phi} z_i}_{\text{(III)}}
    \\&=-\frac{\alpha}{n_p}\!\sum_{i \in P}\nabla_{\phi} z_i
    + \frac{1}{n}\!\sum_{i =1}^n\sigma(z_i)\nabla_{\phi} z_i
\end{align*}

\paragraph{Comparison with Standard DPO}
To understand how \method compensates for hidden positives in the
unlabeled set, we examine the batch-average gradient and contrast
it with naive DPO. We decompose the unlabeled set as
$U = U_+ \cup U_-$, where $U_+$ contains hidden positives drawn
from $\mathbb{P}_p$ and $U_-$ contains true negatives drawn from
$\mathbb{P}_n$.

Under \method, the batch gradient decomposes as
\begin{align*}
    \nabla \mathcal{L}_{\text{PU}}
    = \underbrace{\sum_{i \in P}\!\left(-\frac{\alpha}{n_p}
    + \frac{\sigma(z_i)}{n}\right)\!\nabla_{\phi} z_i}_{\text{labeled positives}}
    + \underbrace{\sum_{i \in U_+}\!\frac{\sigma(z_i)}{n}\nabla_{\phi} z_i}_{\text{hidden positives}}
    + \underbrace{\sum_{i \in U_-}\!\frac{\sigma(z_i)}{n}\nabla_{\phi} z_i}_{\text{true negatives}}
\end{align*}
For comparison, naive DPO, which treats all unlabeled samples as
true negatives, gives
\begin{align*}
    \nabla \mathcal{L}_{\text{naive}}
    = \underbrace{\sum_{i \in P}\!\left(-\frac{1 - \sigma(z_i)}{n_p}\right)\!\nabla_{\phi} z_i}_{\text{labeled positives}}
    + \underbrace{\sum_{i \in U_+}\!\frac{\sigma(z_i)}{n_u}\nabla_{\phi} z_i}_{\text{hidden positives}}
    + \underbrace{\sum_{i \in U_-}\!\frac{\sigma(z_i)}{n_u}\nabla_{\phi} z_i}_{\text{true negatives}}
\end{align*}

In both objectives, the hidden positives in $U_+$ contribute a wrong-direction gradient that pushes the model toward $a^-$, even though these samples are truly positive. Naive DPO has no mechanism to counteract this contamination, and the wrong-direction gradient accumulates over
training, leading to systematic suppression of true positives.

\paragraph{Example} Consider a scenario where the $n = 64$, $n_p = 32$, and $\alpha = 0.6$. For illustration we can use a hypothetical model confidence: $\sigma(z) = 0.9$ for all true positives, and $\sigma(z) = 0.2$ for true negatives. Recall that each sample has a gradient $\nabla_\phi z_i = \beta \big( \nabla_\phi \log \pi_\phi(a^+ \mid x_i) - \nabla_\phi \log \pi_\phi(a^- \mid x_i) \big)$. We can define $\mathbf{g}^+ = \mathbb{E}_{x \sim \mathbb{P}_p}[\nabla_\phi z]$ and $\mathbf{g}^- = \mathbb{E}_{x \sim \mathbb{P}_n}[\nabla_\phi z]$ as the average gradients for a true positive and negative sample respectively. 

We first consider the true gradient for standard DPO if there was no label noise. We can compute the expected theoretical gradient (oracle) by the weighted sum of the average positive and negative gradient:
\begin{itemize}
\item Expected Gradient for all True Positives: $\alpha \times [-(1 - \sigma(z))]\mathbf{g}^+ = 0.6 \times -(1 - 0.9)\mathbf{g}^+ = \mathbf{-0.06}\mathbf{g}^+$

\item Expected Gradient for all True Negatives: $(1 - \alpha) \times [\sigma(z)]\mathbf{g}^- = 0.4 \times (0.2)\mathbf{g}^-  = \mathbf{+0.08}\mathbf{g}^- $
\end{itemize}
Net Oracle Expected Gradient: $-0.06\mathbf{g}^+  + 0.08\mathbf{g}^- $

For the PU corrected loss with hidden positives, we have the following:

\begin{itemize}
\item Contribution from Labeled Positives ($n_p$): $32 \times \left(-\frac{0.6}{32} + \frac{0.9}{64}\right)\mathbf{g}^+ = -0.6\mathbf{g}^+ + 0.45\mathbf{g}^+ = \mathbf{-0.15}\mathbf{g}^+$

\item Contribution from Hidden Positives ($U_+$): 
$6.4 \times \left(\frac{0.9}{64}\right)\mathbf{g}^+ = \mathbf{+0.09}\mathbf{g}^+$

\item Contribution from True Negatives ($U_-$):
$25.6 \times \left(\frac{0.2}{64}\right)\mathbf{g}^- = \mathbf{+0.08}\mathbf{g}^-$
\end{itemize}

Net \method Batch Gradient: $-0.15\mathbf{g}^+ + 0.09\mathbf{g}^+ + 0.08\mathbf{g}^- = -0.06\mathbf{g}^+ +0.08\mathbf{g}^-$

% \subsection{Proof of Theorem~\ref{thm:global_optimality}}
\subsection{Discussion on Marginal Editing and Multi-task Loss}
\label{app:theorem-prroof}
In this section, we provide theoretical intuition for our multi-task preference optimization algorithm by reframing report generation as the task of jointly determining an optimal set of finding-level decisions. We note that this set-based formulation is an analytical abstraction. In practice, VLMs generate reports as token sequences and maximize the joint likelihood over those sequences. Our framing treats each clinical finding as a discrete action to expose the structure of the multi-task objective, but the underlying model still operates over token-level distributions.

\paragraph{Setup.}
Treat each report as a $d$-dimensional binary vector $a = (a_1, \dots, a_d)$ with $a_j \in \{-1, +1\}$ indicating the absence or presence of pathology $j$. Given an input image $x$, we sample a base report $a \sim \pi_{\text{ref}}(\cdot \mid x)$, draw an index $j \sim \text{Unif}(1, d)$, and form a contrastive pair $(a_j^+, a_j^-)$ by setting the $j$-th entry to $+1$ and $-1$ respectively while holding the other entries $a_{\setminus j}$ fixed. Write
$f_\phi(x, a_{\setminus j}) = \mathbb{P}_\phi(a_j^+ \succ a_j^- \mid x, a_{\setminus j})$
for the model's predicted preference probability and
$p^*(x, a_{\setminus j}) = \mathbb{P}(a_j^+ \succ a_j^- \mid x, a_{\setminus j})$
for the true preference under the Bradley-Terry model. The \method objective then takes the form
\begin{align*}
    R(\phi) = \mathbb{E}_{j \sim \text{Unif}(1,d)} \Big[
    \mathbb{E}_{X, A_{\setminus j}} \big[
    -p^* \log f_\phi - (1 - p^*) \log(1 - f_\phi)
    \big] \Big].
\end{align*}

\paragraph{The objective recovers per-finding preferences.}
The outer expectation over $j$ is a uniform average of $d$ terms, each depending on $\phi$ only through $f_\phi$ at index $j$. Minimization therefore decomposes across indices: for each fixed $j$ and context $(x, a_{\setminus j})$, the inner term is exactly the cross-entropy between the true preference $p^*$ and the model's estimate $f_\phi$. Decomposing this cross-entropy into entropy plus KL divergence,
\[
-p^* \log f_\phi - (1 - p^*) \log(1 - f_\phi) = \mathcal{H}(p^*) + D_{\mathrm{KL}}(p^* \,\|\, f_\phi),
\]
the first term does not depend on $\phi$, and the second is non-negative with equality only when $f_\phi = p^*$. The minimizer therefore matches the true conditional preference at every $(x, a_{\setminus j})$ and every index $j$. In other words, the per-finding training signal, even though it only ever compares two reports differing in a single action, is enough for the model to recover the correct local preference structure across all positions and all contexts under sufficient coverage.

\paragraph{When does per-finding preferences compose into a global optimum?}
Recovering pointwise preferences is a useful property, but the quantity we ultimately care about is the globally optimal report $a^* = \arg\max_a r^*(a \mid x)$, i.e. the set of actions that maximize the latent reward. We can argue that under strong assumptions on coverage and reward, any alternative report $a' \neq a^*$ can be reached from $a^*$ via a sequence of single-action edits, and the model's per-finding preferences correctly identify which direction is better at every step.

Concretely, fix any $a' \neq a^*$, and let $K$ be the set of indices where $a'$ and $a^*$ disagree. Construct a path $a' = a^{(0)}, a^{(1)}, \dots, a^{(|K|)} = a^*$ by flipping one disagreeing index at a time. Each consecutive pair $(a^{(t-1)}, a^{(t)})$ differs in exactly one index $k_t$, with shared context $a^{(t)}_{\setminus k_t}$. We assume that the latent reward $r^*$ admits a path of monotone single-action improvements from any $a'$ to $a^*$, i.e. the indices in $K$ can be ordered such that each step strictly increases $r^*$. Under this assumption, the Bradley-Terry model implies $\mathbb{P}(a_{k_t}^* \succ a'_{k_t} \mid x, a^{(t)}_{\setminus k_t}) > 0.5$ at every step, and the previous paragraph guarantees that the trained model recovers exactly this preference direction. Following the model's learned preferences therefore traces out a monotone path of single-action improvements from any starting report to $a^*$, identifying the global optimum without ever requiring the model to compare reports differing in more than one index simultaneously.

\paragraph{Example}
Consider a diagnostic setting with two actions
$(a_1, a_2) \in \{-1, +1\}^2$ and suppose the globally optimal
report is $a^* = (+1, -1)$. For the alternative $a' = (-1, +1)$,
which differs in both coordinates, we must assume there exists the path
$(-1, +1) \to (+1, +1) \to (+1, -1)$ which strictly increases the latent reward. Then at the first step, the
model evaluates $\mathbb{P}(a_1^+ \succ a_1^- \mid x,
a_2 = +1)$ and correctly prefers $a_1 = +1$. At the second step,
it evaluates $\mathbb{P}(a_2^+ \succ a_2^- \mid x, a_1 = +1)$
and correctly prefers $a_2 = -1$.

\subsection{Mixture Proportion Estimation with Best Bin Estimation}
\label{app:bbe} 
Our method requires an strategy to estimate $\mathbb{P}(Y={+}1)$ or equivalently $\alpha$, the underlying marginal prevalence. While our approach is general and works with any PU estimator, we use the Best-Bin-Estimation (BBE) \citep{garg2021mixture} method, which we briefly summarize here.

The goal is to estimate $\alpha$ in the formulation $\mathbb{P}_u = \alpha \mathbb{P}_p + (1 - \alpha)\mathbb{P}_n$. Given a classifier $f: \mathcal{X} \rightarrow [0, 1]$, let $q(c) = \mathbb{P}(f(X) \geq c)$ denote the true probability that a sample scores above a threshold $c \in [0, 1]$. For a given dataset of size $n_s$, this tail probability is approximated using the empirical estimator $\hat{q}(c) = \frac{1}{n_s}\sum_{i=1}^{n_s}\mathbb{I}[f(x_i) \geq c]$.

The method estimates the optimal mixture proportion by finding the threshold that minimizes the ratio of these tail probabilities between the unlabeled and positive distributions:
$$\hat{\alpha} = \min_{c \in [0, 1]} \frac{{\hat{q}}_u(c)}{{\hat{q}}_p(c)}$$ 
Mathematically, because the unlabeled distribution is a mixture, the true theoretical ratio expands to: 
$$\frac{q_u(c)}{q_p(c)} = \alpha + (1-\alpha) \frac{q_n(c)}{q_p(c)}$$
Since $q_n(c) \geq 0$ and $q_p(c) > 0$, this ratio naturally serves as an upper bound for $\alpha$. Intuitively, the minimization procedure searches for a sufficiently high threshold $c$ where the classifier perfectly isolates positive samples such that no true negatives exceed the threshold (i.e., $q_n(c) = 0$). At this critical threshold, assuming positive samples still exist ($q_p(c) > 0$), the bound becomes tight, and the calculated ratio exactly recovers the true mixture proportion $\alpha$.

% \subsection{Variational PU learning and Dr-DPO}
% \label{app:variational}

\section{Training Procedures}
\label{app:training-details}

\subsection{Datasets}
\label{app:datasets}
We evaluate our methodology on two large-scale publicly available chest X-ray datasets to show the generalizability of our approach.

\textbf{Mimic-CXR} contains 227,827 chest X-ray imaging studies from the Beth Israel Deaconess Medical Center, each paired with a structured free-text radiology report \citep{johnson2019mimic}. 

\textbf{CheXpert/ CheXpert plus} contains 224,316 chest radiographs from Stanford University labeled across 14 pathology categories using a rule-based extractor; we use the associated reports from CheXpert Plus \citep{irvin2019chexpert, chambon2024chexpert}.

A consistent 60/30/10 train/test/validation is applied on these datasets for the training and evaluation methodology described in the upcoming sections. We also filter to frontal views (PA/AP) to ensure compatibility across all pretrained models described in the next section.

\subsection{Model Backbones}
We conduct experiments across three different state of the art pretrained VLM models to cover different architectures and training methods.

\textbf{LLava-Rad} is a CXR adapted VLM that uses a CLIP-style vision encoder that is paired with a language decoder via contrastive alignment on large-scale radiology datasets \citep{zambrano2025clinically}.

\textbf{MedGemma} is a medical foundational model built on the Gemma-3 architecture with a specialized SigLIP vision encoder. The model is instruction-tuned on diverse clinical text and image data and provides a strong zero shot report generation baseline across tasks like radiology report generation \citep{sellergren2025medgemma}.

\textbf{NV-Reason} is a Qwen-2.5 VL based chain of thought augmented VLM that generates reasoning tokens prior to generating radiology reports and labels. The model uses supervised fine-tuning followed by GRPO reinforcement learning to improve model performance \citep{myronenko2025reasoning}.

\subsection{Model Hyperparameters}
We detail the hyperparameters for supervised fine-tuning, Radcliq-GRPO, and DPO training in Tables \ref{tab:sft-hparams} and  \ref{tab:consolidated-hparams} respectively.

\begin{table}[!ht]
\centering
\resizebox{\linewidth}{!}{%
\begin{tabular}{lccc}
\toprule
\textbf{Hyperparameter} & \textbf{LLaVA-Rad} & \textbf{NV-Reason-CXR} & \textbf{MedGemma} \\
\midrule
Base model & Vicuna-7B v1.5 & NV-Reason-CXR-3B & MedGemma-4B-IT \\
Vision encoder & BiomedCLIP-518px & Qwen2.5-VL & SigLIP (built-in) \\
Learning rate & $1 \times 10^{-4}$ & $1 \times 10^{-4}$ & $1 \times 10^{-4}$ \\
LR scheduler & Cosine & Cosine & Cosine \\
Warmup ratio & 0.03 & 0.03 & 0.03 \\
Weight decay & 0.0 & 0.0 & 0.0 \\
Max grad norm & 1.0 & 0.3 & 0.3 \\
Epochs & 3 & 3 & 3 \\
Effective batch size & 16 & 32 & 32 \\
Max sequence length & 2048 & Default & Default \\
LoRA rank $r$ & 64 & 16 & 16 \\
LoRA $\alpha$ & 128 & 32 & 32 \\
LoRA dropout & 0.05 & 0.05 & 0.05 \\
LoRA target modules & All Linear & Q,K,V,O proj & Q,K,V,O proj \\
Optimizer & AdamW & AdamW (fused) & AdamW (fused) \\
Attention implementation & SDPA & SDPA & Flash Attention 2 \\
Precision & BF16 + TF32 & BF16 + TF32 & BF16 + TF32 \\
\bottomrule
\end{tabular}}
\vspace{6pt}
\caption{SFT Hyperparameters by Model}
\label{tab:sft-hparams}
\end{table}

\begin{table}[!ht]
\centering
\small
\begin{tabular}{lcc}
\toprule
\textbf{Hyperparameter} & \textbf{GRPO} & \textbf{DPO} \\
\midrule
\multicolumn{3}{l}{\textit{Global Shared Parameters (All Models \& Methods)}} \\
Actor learning rate & \multicolumn{2}{c}{$1 \times 10^{-5}$} \\
Gradient clipping & \multicolumn{2}{c}{1.0} \\
Rollout samples per prompt ($n$) & \multicolumn{2}{c}{4} \\
Rollout temperature / top-$p$ & \multicolumn{2}{c}{0.7 / 0.95} \\
LoRA rank $r$ / $\alpha$ & \multicolumn{2}{c}{16 / 32} \\
LoRA target modules & \multicolumn{2}{c}{Q, K, V, O proj} \\
Mixed precision & \multicolumn{2}{c}{BF16} \\
\midrule
\multicolumn{3}{l}{\textit{Method-Specific Training Parameters}} \\
Max training steps & 300 & 200 \\
Effective batch size & 32 & 64 \\
Warmup steps & 50 & -- \\
KL penalty  & 0.01 & 0.01 \\
GRPO clip $\epsilon_{\text{low}}$ / $\epsilon_{\text{high}}$ & 0.20 / 0.28 & -- \\
Reward function & RadCliQ & -- \\
DPO $\beta$ & -- & 0.05 \\
\midrule
\multicolumn{3}{l}{\textit{Model-Specific Parameters}} \\
Max response length (LLaVA-Rad, MedGemma) & 1024 & 1024 \\
Max response length (NV-Reason-CXR) & 4096 & 4096 \\
\bottomrule
\end{tabular}
\vspace{6pt}
\caption{Consolidated Hyperparameters for GRPO and DPO across all models}
\label{tab:consolidated-hparams}
\end{table}

\FloatBarrier

\subsection{Supervised Fine-Tuning Protocol}

For all model-dataset combinations, SFT is performed by maximizing the conditional log-likelihood of the reference radiology report given the input image and instruction prompts. All three models are fine-tuned with LoRA adapters injected into the language-model attention projections; the vision encoder and the multimodal projector are kept frozen \citep{hu2022lora}. All models are fine-tuned using the AdamW optimizer with a cosine learning rate scheduler and a linear warmup over the first 3 \% of steps.

\textbf{Model Selection}: We perform a grid search over learning rates $ \in \{10^{-4},\; 10^{-5}\}$ yielding two candidate runs per model-dataset combinations. For each run, we retain the two best checkpoints with the lowest validation loss as evaluated at regular intervals. Prior work has widely adopted micro/macro-averaged F1 scores of Chexpert labels as a primary factual correctness metric and directly optimize  objectives \citep{tanno2025collaboration, myronenko2025reasoning}. We use micro-averaged CheXbert F1 for checkpoint selection due to its greater stability under class imbalance. All candidate checkpoints are thus evaluated on the validation split using the micro-averaged CheXbert F1 score as the model selection criterion. This score is computed by running the CheXbert labeler on reference and generated reports and comparing the resulting 14 labels \citep{smit2020combining}. The checkpoint achieving the highest point-estimate validation CheXbert-F1, is selected as the final SFT model. For the final evaluation, we report both micro-averaged and macro-averaged CheXbert-F1. Information from the test set is not used at any stage of model selection. All experiments are trained for a maximum of 3 epochs with gradient checkpointing and early stopping to manage memory.

\subsection{Radcliq-GRPO Baseline Protocol}

We adapted the training paradigm of RadVLM-GRPO as the basis for our implementation for applying GRPO on VLMs for Chext X-Ray report generation tasks \citep{gundersen2026radvlm}. The training setup seeks to mimic the RadVLM-GRPO framework with the same optimizer, rollout and KL-penalty configuration as the RadVLM-GRPO codebase. GRPO is applied on the base models of Llava-Rad, Medgemma and NV-Reason.

The RadVLM setup is modified to operate in an offline GRPO setup. All rollouts are generated once using the base model initialized reference policy prior to training. For each prompt, 4 rollouts are drawn from $\pi_{\text{ref}}$ and are used to compute the group-relative advantages for the GRPO update. All other hyperparameters are obtained from the RadVLM-GRPO setup. To manage computational costs, training was capped at 9000 training samples and 1000 validation samples. This design choice is deliberate: the dataset size is the same as the size in our DPO framework and ensures consistency across methods. The use of offline GRPO is motivated by the computational expense of online RadCliQ reward scoring and is used to ensure a consistent data budget with the DPO framework. However, we do acknowledge a known limitation of offline RL approaches as off-policy rollout distribution may drift from the improving policy over the course of training.

\textbf{Reward Function}: Consistent with the RadVLM-GRPO paper, which demonstrates that Radcliq achieves the most effective behavior amongst evaluated reward signals, we adopt RadCliq as the reward for report generation. RadCliq is a composite metric that aggregates BERTScore (lexical overlap), CheXbert similarity (14-label pathology classification) and RadGraph-F1 (granular clinical entity-relation overlap) \citep{yu2023evaluating}.

\textbf{Model Selection}: To maintain consistency with the SFT procedure, we apply the same checkpoint strategy: the two checkpoints with the lowest validation loss are evaluated on the validation set. The micro-averaged CheXbert F1 point estimate on the validation set is used as the model selection criterion.

\subsection{\method and Ablation Protocol}
\label{app:pu-training-protocol}

\paragraph{$\alpha$-Estimation.}
We estimate $\alpha_j$ separately for each finding subtype $j$ using the training split of each dataset. Labeled positives are training samples whose reference report explicitly mentions subtype $Y_j$; the remaining samples form the unlabeled set, which contains both true negatives and unreported positives (i.e., images where the finding is present but its subtype is not mentioned). For each sample, we extract a chest X-ray embedding using the frozen MedGemma image encoder. We then train a positive-vs-unlabeled classifier on these embeddings using LightGBM, fitting a separate classifier per subtype with 3-fold cross-fitting. The resulting held-out prediction probabilities are passed to BBE (Appendix~\ref{app:bbe}).

Note that this is the estimate of the fraction of hidden positives strictly within the unlabeled data, $P(Y=1 \mid S=-1)$. Because our setup follows the one-sample PU learning setting, this fraction does not equal the overall class prior $P(Y=1)$. Therefore, for each training batch during PU-DPO, we compute the empirical prior $\hat{\alpha}$ (estimating $P(Y=1)$) by combining the known labeled positives ($n_p$) with the estimated hidden positives from the unlabeled set ($n_u$). Specifically, we compute $\hat{\alpha} = \frac{n_p + n_u \hat{P}(Y=1 \mid S=-1)}{n}$, where $n$ is the total batch size. This batch-level $\hat{\alpha}$ is then used to scale the PU loss correction.

\textbf{Preference Dataset}: After sampling base responses from $\pi_{\text{ref}}$, we generate pairs using GPT-OSS 120B for editing the base response. We use the prompts in Appendix~\ref{app:prompts} and perform a two stage generation process to first correct the base-model response to faithfully represent the diagnosis in the reference clinician report, and then insert and remove findings from the base-model response.

We applied a length variation filter to ensure the generated responses do not deviate excessively from the original base response. Specifically, we enforce that the relative absolute difference in length between both candidate responses (chosen and rejected) and the base response is bounded by $\tau = 0.5$. Formally, a sample is kept if $\max(\Delta L_{a^+}, \Delta L_{a^-}) \leq 0.5$, where $\Delta L_{a} = \frac{||a| - |base||}{\max(1, |base|)}$. 

\paragraph{Training.}
Our training procedure deliberately deviates from the natural prevalence of pathologies in the preference dataset along two axes. First, at each training step we sample the target pathology $j$ uniformly across types rather than according to its marginal frequency. Second, within each batch we use stratified sampling to enforce a balanced 50-50 split between labeled positives and unlabeled examples. The latter is a practical necessity: the marginal prevalence of fine-grained pathology subtypes (e.g., small left-sided atelectasis) is typically 1--5\%, and unstratified batches would yield gradient estimates dominated by unlabeled examples and high-variance gradient estimates for the positive sample loss terms.

This sampling scheme introduces a distributional shift relative to the natural data distribution, which in principle requires an off-policy correction. We do not observe practical issues from this shift in our setting, in part because \method is intended as a lightweight fine-tuning step on top of an already-trained SFT base model, analogous to the standard SFT $\rightarrow$ DPO pipeline, rather than as a from-scratch training procedure. Consistent with this scope, we fine-tune the base VLM for only 200 gradient steps. Longer training runs or applications outside this short-horizon fine-tuning regime may require principled importance reweighting or related corrections, which we leave to future work.

\subsection{Prompts}\label{app:prompts}

The following prompts are used consistently across models and across protocols. Throughout this section, \texttt{< display\_name>} and \texttt{<dn>} refer to the name of the specific finding (e.g., pleural effusion).

\definecolor{promptblue}{HTML}{2B4C7E}
\definecolor{promptbg}{HTML}{FAFBFC}

\tcbset{
  promptbox/.style={
    enhanced,
    breakable,
    colback=promptbg,
    colframe=promptblue,
    boxrule=0.45pt,
    arc=1mm,
    left=1.5mm,
    right=1.5mm,
    top=3mm,
    bottom=1mm,
    before skip=8pt,
    after skip=8pt,
    fonttitle=\bfseries\small,
    coltitle=white,
    colbacktitle=promptblue,
    attach boxed title to top left={
      xshift=2mm,
      yshift=-1.5mm
    },
    boxed title style={
      sharp corners,
      boxrule=0pt,
      size=minimal,
      top=0.5mm,
      bottom=0.5mm,
      left=2mm,
      right=2mm
    }
  }
}

\setlist[itemize]{
  leftmargin=1.4em,
  itemsep=0.2em,
  topsep=0.2em,
  parsep=0pt,
  partopsep=0pt
}

\newcommand{\promptsection}[1]{%
  \vspace{0.4em}
  \noindent{\color{promptblue}\textbf{#1:}}\hspace{0.4em}
}

\newcommand{\promptinput}[1]{%
{\footnotesize\ttfamily
\begin{quote}
#1
\end{quote}
}
}

\begin{tcolorbox}[promptbox,title=Report Generation Prompt Template]
\footnotesize

Provide a description of the findings and impressions in the chest X-ray image \texttt{<image>}.

\end{tcolorbox}

% =========================================================
% CheXpert Extraction Prompt
% =========================================================

\begin{tcolorbox}[promptbox,title=CheXpert Label Extraction Prompt Template]
\footnotesize

You are extracting CheXpert labels from a chest X-ray report. For each label, output 1 if the finding is present, otherwise 0. If the finding is uncertain, output -1. If the finding is not mentioned, output 0. Only output valid JSON with keys exactly matching these labels: \texttt{<labels>}.

\promptsection{Report}
\texttt{<report\_text>}

\end{tcolorbox}

\begin{tcolorbox}[promptbox,title=Clinical Correction Prompt (1st Stage of Preference Dataset Construction)]
\footnotesize

You are reviewing a chest X-ray report for \texttt{<display\_name>}. The following is the original clinician report for this study. Use it ONLY as a factual reference to align the base response.

\promptsection{Reference Report}
\texttt{<gt\_report>}

\promptsection{Task}

Edit the base response below so its treatment of \texttt{<dn>} is consistent with the reference report.

\begin{itemize}
    \item If the reference report DESCRIBES \texttt{<dn>}: ensure the base response also describes \texttt{<dn>}, using phrasing that captures the reference's key clinical characteristics (size, laterality, severity, etc.).

    \item If the reference report DOES NOT describe \texttt{<dn>} (or explicitly states it is absent / normal / clear): DELETE any sentence, bullet, or clause in the base response that mentions \texttt{<dn>}. This includes BOTH positive mentions (e.g., `small left pleural effusion') AND negation/absence statements (e.g., `no pleural effusion'). If a sentence mentions \texttt{<dn>} alongside other findings, drop only the \texttt{<dn>} portion while keeping the rest intact (e.g., `small left pleural effusion and associated atelectasis' $\rightarrow$ `atelectasis'). Do NOT replace \texttt{<dn>} with a normal/absent descriptor --- the corrected response should contain no reference to \texttt{<dn>} at all.

    \item Leave all sentences that do not mention the finding completely unchanged, word for word.

    \item Do NOT copy sentences verbatim from the reference report; borrow only clinical characteristics.

    \item Do NOT include any temporal comparisons from the reference report (e.g., `unchanged from prior', `increased since previous study', `new compared to prior exam'). Describe findings as they appear on the current study only.
\end{itemize}

\promptsection{Rules}

\begin{itemize}
    \item Return valid JSON only (no markdown, no explanation).

    \item JSON must include the key: \texttt{response}.

    \item \texttt{response} must be the full corrected report text.
\end{itemize}

\promptsection{Base Response}
\texttt{<base\_response>}

\promptsection{Output Schema}
\texttt{<json\_schema>}

\end{tcolorbox}

\begin{tcolorbox}[promptbox,title=CheXpert Variant Generation Template (Finding Removal): 2nd Stage of Preference Dataset Generation]
\footnotesize

You are editing a chest X-ray report.

\promptsection{Task}

Remove all mention of \texttt{<dn>} from the report below so that the resulting report simply does not discuss \texttt{<dn>} at all (neither as present nor as absent). Specifically:

\begin{itemize}
    \item DELETE any sentence, bullet, or clause that mentions \texttt{<dn>} (including positive mentions like `small left pleural effusion' AND negation/absence statements like `no pleural effusion' etc.).

    \item If a sentence mentions \texttt{<dn>} alongside other findings, drop only the \texttt{<dn>} portion and keep the rest of the sentence intact (e.g., `cardiomegaly and small pleural effusion' $\rightarrow$ `cardiomegaly').

    \item Do NOT replace \texttt{<dn>} with a normal/absent descriptor --- the final report should contain no reference to \texttt{<dn>} at all.

    \item Leave all sentences that do not mention the finding completely unchanged, word for word.

    \item Maintain the overall structure and style of the report (bullets, summary line, etc.); just omit the \texttt{<dn>} content.
\end{itemize}

\promptsection{Rules}

\begin{itemize}
    \item Return valid JSON only (no markdown, no explanation).

    \item JSON must include the key: \texttt{response}.

    \item \texttt{response} must be the full edited report text.
\end{itemize}

\promptsection{Base Report}
\texttt{<corrected\_response>}

\promptsection{Output Schema}
\texttt{<json\_schema>}
\end{tcolorbox}

\begin{tcolorbox}[promptbox,title=CheXpert Variant Generation Template (Pathology Addition): 2nd Stage of Preference Dataset Generation]
\footnotesize

You are editing a chest X-ray report.

\promptsection{Task}

Modify the report below so that it is consistent with a study where \texttt{<dn>} IS present. Specifically:

\begin{itemize}
    \item If the report states that \texttt{<dn>} is absent/normal, rewrite that sentence to describe \texttt{<dn>} using EXACTLY the target attributes above.

    \item If the report does not mention \texttt{<dn>} at all, add a description of \texttt{<dn>} using EXACTLY the target attributes above, integrated into the appropriate anatomical section of the report.

    \item Leave all sentences that do not mention the finding completely unchanged, word for word.

    \item Maintain the overall structure and style of the report.
\end{itemize}

\promptsection{Rules}

\begin{itemize}
    \item Return valid JSON only (no markdown, no explanation).

    \item JSON must include the key: \texttt{response}.

    \item \texttt{response} must be the full edited report text.
\end{itemize}

\promptsection{Target attributes for the inserted \texttt{<dn>}}
\texttt{<target\_confounder\_json>}

Each characteristic category maps attributes to true/false. Attributes set to true MUST be present in the inserted description; attributes set to false MUST NOT be present. If all attributes in a category are false, do not mention that category at all.

\promptsection{Base Report}
\texttt{<corrected\_response>}

\promptsection{Output Schema}
\texttt{<json\_schema>}

\end{tcolorbox}

\subsection{Evaluation}
\label{app:evaluation}
The final evaluation uses a combination of clinical correctness metrics and linguistic metrics on the test set.

\textbf{BLEU} measures the n-gram precision between a generated report and reference reports. We report corpus-level BLEU-1 and BLEU-4 computed with SacreBLEU \citep{papineni2002bleu, post2018call}.

\textbf{ROUGE} measures the recall-oriented n-gram overlap between the generated and reference text by computing the fraction of reference n-grams covered by the prediction \citep{lin2004rouge}. We report ROUGE-1, ROUGE-2 and ROUGE-L (longest common subsequence).

\textbf{CheXbert-F1} scores are computed using a BERT-based labeler that extract 14 radiological conditions from free-text reports \citep{smit2020combining}. We compute micro and macro averaged F1 between the predicted and reference condition labels. We report results under two condition subsets: the full 14 label set and a 3 label subset (Cardiomegaly, Atelectasis, Pleural Effusion). Additionally we report the omission rate (conditions where the reference is labeled as positive/uncertain that the prediction misses and are labeled as blank), wrong value rate (conditions where the reference is labeled as positive/uncertain but the predictions are labeled as explicit negatives) and hallucination rates (conditions where the reference is labeled as negative, but the predictions are labeled as positive/uncertain).

\textbf{Radgraph F1} measures the overlap between clinical entities and their relationship using a knowledge graph \citep{jain2021radgraph}. We use the partial reward level which award partial credit for correctly identified entities where relationship labels vary. Radgraph F1 scores compute F1 over matched graph elements.

\textbf{Statistical reporting:} All metrics are calculated with 95 \% confidence intervals (1000 resamples) on the test set.

\subsection {Reproducibility}
All experiments are implemented in PyTorch and are publicly available at \url{https://github.com/reAIM-Lab/CXR-reason}. Inference for all models uses greedy decoding with a standard report generation length. We report results on a single seed and use standardized configuration files across experiments to ensure reproducibility.

\subsection{Compute Details}
\label{app:compute}
All experiments were run on a server with 4 NVIDIA H100 NVL GPUs, 2 Intel(R) Xeon(R) Platinum 8480+ CPUs (56 cores each) with 2Tb of memory.

\subsection{Runtime}
\label{app:runtime}
This section details runtime statistics for a sample model Medgemma:

\begin{table}[h]
\centering
\small
\setlength{\tabcolsep}{8pt}
\begin{tabular}{lccc}
\toprule
 \textbf{Statistics} & \textbf{SFT} & \textbf{GRPO} & \textbf{\method} \\
\midrule
Training steps & 7{,}902 & 300 & 200\\
Total wall-clock time & 22.4\,h & 8.1\,h & 2.5\,h \\
\addlinespace
Mean Time/training step & 10.2\,s & 35.0\,s & 32.3\,s \\
Median Time/training step & 11.1\,s & 31.7\,s & -- \\
\addlinespace
Validation passes & 7 & 12 & 4\\
Mean Time/validation pass & 1{,}216\,s & 906\,s & 904.8\,s \\
\bottomrule
\end{tabular}
\vspace{6pt}
\caption{Runtime Statistics for MedGemma model on 1 NVIDIA H100 NVL GPU}
\label{tab:runtime}
\end{table}

\begin{table}[ht]
\centering
\caption{Runtime breakdown for preference dataset construction (10,000 images, 4 samples/image).}
\label{tab:runtime}
\begin{tabular}{lr}
\toprule
\textbf{Pipeline Stage} & \textbf{Runtime (h)} \\
\midrule
Base sample generation (vLLM) & 1.4 \\
Verification (vLLM) & 2.6 \\
Variant Generation & 1.9 \\
\midrule
\textbf{Total} & \textbf{5.9} \\
\bottomrule
\end{tabular}
\end{table}

\newpage
\section{Radiologists annotations protocol}
\label{app:radio_study}

\paragraph{Annotation Protocol.} To validate the quality of reports in the CheXpert dataset, we enrolled four radiologists in a structured annotation effort.

\paragraph{Patient Sampling}
Patients were sampled from CheXpert and stratified by age (discretized into 20-year bins) and sex. As our initial focus was on pleural effusion, a condition frequently misdiagnosed, we further stratified on pleural effusion status as extracted from original reports using CheXpert's NLP labeler. For each stratum, we sampled 100 patients or the maximum available, yielding a final set of 332 chest X-rays.

\paragraph{Annotation Task}
Annotations were collected via Qualtrics, where the interface displayed clinical notes and different views of the X-ray. For each of the 332 images, radiologists independently assessed a predefined set of conditions using a 5-point Likert scale (definitely absent, probably absent, unsure, probably present, definitely present). When a condition was marked as present, radiologists additionally rated its severity as mild, moderate, or severe. For a random 10\% subset (~33 images), radiologists were also asked to write a full structured report, comprising separate findings and impression sections.

\paragraph{Radiologist Inclusion}
Annotations were performed independently. To ensure annotation quality, only radiologists who contributed more than 5 annotations were retained, resulting in 2 radiologists included in the final analysis.

\begin{table}[ht]
\centering
\caption{Demographic characteristics of the study population.}
\label{tab:demographics}
\begin{tabular}{lr}
\toprule
\textbf{Characteristic} & \textbf{Proportion (\%)} \\
\midrule
\textit{Sex} & \\
Male & 54.0 \\
Female & 46.0 \\
\addlinespace
\textit{Race} & \\
Other & 27.0 \\
White & 27.0 \\
Black & 23.0 \\
Asian & 23.0 \\
\addlinespace
\textit{Insurance Type} & \\
Private Insurance & 35.0 \\
Medicare & 29.0 \\
Medicaid & 18.0 \\
Unknown & 16.0 \\
Other & 2.0 \\
\bottomrule
\end{tabular}
\end{table}

\newpage
\section{Preference dataset audit}
\label{app:audit}

To assess the quality of the GPT-OSS generated preference dataset, we conducted a manual audit of the preference pairs used to train PU-DPO on the Chexpert Dataset.

\paragraph{Sampling} The audit study was performed on the preference pairs generated for the Chexpert dataset. We focused our efforts on two pathologies - pleural effusion and cardiomegaly. For each, we sampled 35 preference pairs in which it served as the target condition. We further stratified by the pair's original label status. Each sample is assigned one of three label statuses based on this extraction: \textit{labeled\_tp}, where the pathology is documented as present; \textit{tn}, where it is documented as absent; and \textit{unlabeled}, where the pathology is not mentioned at all, leaving its true presence unknown. The unlabeled stratum receives the highest sample allocation because it represents 64\% of the training dataset. Conversely, \textit{labeled\_tp} and \textit{tn} pairs exhibit minimal ground-truth ambiguity and comprise a smaller share of the dataset. However, they are oversampled relative to their natural distribution to ensure sufficient sample sizes for evaluation. Specifically, for each pathology, we construct a 10/10/15 split across \textit{labeled\_tp}, \textit{tn}, and \textit{unlabeled} strata, yielding a total of 70 preference pairs for our study.

\paragraph{Rubric} A board-certified radiologist reviewed each pair against four questions:
\begin{enumerate}
    \item \textbf{Chosen response accuracy:} Does the chosen response accurately describe the target pathology and the target attributes when checked against the original report?
    \item \textbf{Target pathology edit} Was the rejected side edit executed correctly? (The edit involves complete removal of the finding for \textit{labeled\_tp} pairs and accurate insertion of the specified attributed for \textit{unlabeled} and \textit{tn} pairs)
    \item \textbf{Preservation of unrelated attributes:} Did the edit leave all other findings in the report unchanged?
    \item \textbf{Overall usability:} Is the pair usable as clean training data overall?
\end{enumerate}

Alongside the reference report, we also provided the radiologist with the corresponding chest radiograph. While the editor never observes the image during the construction of these preference pairs, the image lets the radiologist verify that a counterfactual edit remains in-distribution. The radiologist was also instructed not to penalize a response for omitting a finding that the original report itself already omits to help isolate true editor failure. The radiologist was also provided a Notes section to help understand the reasoning behind the responses.

\paragraph{Results} Table \ref{tab:audit-rates} reports per-question rates with Wilson 95\% confidence intervals across the 70 audited pairs, broken down by label and pathology. We observe that the estimated correctness rate for the overall usability (97.1 \%) and the preservation of unrelated findings (97.1\%) score highly across every stratum.

The one stratum where clinical correctness meaningfully degrades is \textit{labeled\_tp} where only an estimated 70\% of chosen responses were judged to be clinically correct. An analysis of the radiologists' notes for these examples shows that the chosen response inherits overstated diagnostic certainty from the reference reports rather than by errors introduced during editing. This reflects a property of the source report that the editor is instructed to align with, rather than an artifact introduced by the editor. We note that the source report likely had access to multiple views (i.e. lateral views) which allowed for greater diagnostic certainty.

The \textit{tn} and the \textit{unlabeled} strata report high correctness rates across all questions. Results are also broadly consistent across pathologies: pleural effusion shows a modestly lower chosen-response correctness rate (85.7\%) than cardiomegaly (91.4\%) but the confidence intervals overlap and the difference is not statistically distinguishable.

\begin{table}[ht]
\centering
\caption{Radiologist audit of DPO preference pairs: per-question verdict counts and
the rate of \emph{yes} verdicts among answered items, with Wilson 95\% confidence
intervals. Rates are computed over answered items only; blank answers are excluded.}
\label{tab:audit-rates}
\small
\begin{tabular}{lrrrrrr}
\toprule
& \multicolumn{4}{c}{Verdicts} & \multicolumn{2}{c}{Yes rate (\%)} \\
\cmidrule(lr){2-5} \cmidrule(lr){6-7}
Question & $n$ & yes & unsure & no & Est. & 95\% CI \\
\midrule
\multicolumn{7}{l}{\emph{All reviewed pairs} ($n=70$ pairs)} \\
Chosen report clinically correct & 70 & 62 & 4 & 4 & 88.6 & [79.0, 94.1] \\
Target-pathology edit correct & 69 & 68 & 0 & 1 & 98.6 & [92.2, 99.7] \\
Unrelated findings preserved & 69 & 67 & 0 & 2 & 97.1 & [90.0, 99.2] \\
Pair overall usable & 69 & 67 & 1 & 1 & 97.1 & [90.0, 99.2] \\
\addlinespace
\multicolumn{7}{l}{\emph{Stratum: \texttt{labeled\_tp}} ($n=20$ pairs)} \\
Chosen report clinically correct & 20 & 14 & 4 & 2 & 70.0 & [48.1, 85.5] \\
Target-pathology edit correct & 19 & 19 & 0 & 0 & 100.0 & [83.2, 100.0] \\
Unrelated findings preserved & 19 & 18 & 0 & 1 & 94.7 & [75.4, 99.1] \\
Pair overall usable & 20 & 19 & 1 & 0 & 95.0 & [76.4, 99.1] \\
\addlinespace
\multicolumn{7}{l}{\emph{Stratum: \texttt{tn}} ($n=20$ pairs)} \\
Chosen report clinically correct & 20 & 20 & 0 & 0 & 100.0 & [83.9, 100.0] \\
Target-pathology edit correct & 20 & 20 & 0 & 0 & 100.0 & [83.9, 100.0] \\
Unrelated findings preserved & 20 & 19 & 0 & 1 & 95.0 & [76.4, 99.1] \\
Pair overall usable & 20 & 20 & 0 & 0 & 100.0 & [83.9, 100.0] \\
\addlinespace
\multicolumn{7}{l}{\emph{Stratum: \texttt{unlabeled}} ($n=30$ pairs)} \\
Chosen report clinically correct & 30 & 28 & 0 & 2 & 93.3 & [78.7, 98.2] \\
Target-pathology edit correct & 30 & 29 & 0 & 1 & 96.7 & [83.3, 99.4] \\
Unrelated findings preserved & 30 & 30 & 0 & 0 & 100.0 & [88.6, 100.0] \\
Pair overall usable & 29 & 28 & 0 & 1 & 96.6 & [82.8, 99.4] \\
\addlinespace
\multicolumn{7}{l}{\emph{Pathology: cardiomegaly} ($n=35$ pairs)} \\
Chosen report clinically correct & 35 & 32 & 1 & 2 & 91.4 & [77.6, 97.0] \\
Target-pathology edit correct & 35 & 35 & 0 & 0 & 100.0 & [90.1, 100.0] \\
Unrelated findings preserved & 35 & 34 & 0 & 1 & 97.1 & [85.5, 99.5] \\
Pair overall usable & 35 & 34 & 1 & 0 & 97.1 & [85.5, 99.5] \\
\addlinespace
\multicolumn{7}{l}{\emph{Pathology: pleural effusion} ($n=35$ pairs)} \\
Chosen report clinically correct & 35 & 30 & 3 & 2 & 85.7 & [70.6, 93.7] \\
Target-pathology edit correct & 34 & 33 & 0 & 1 & 97.1 & [85.1, 99.5] \\
Unrelated findings preserved & 34 & 33 & 0 & 1 & 97.1 & [85.1, 99.5] \\
Pair overall usable & 34 & 33 & 0 & 1 & 97.1 & [85.1, 99.5] \\
\bottomrule
\end{tabular}
\end{table}

\clearpage
\section{Additional Results}
\label{app:additional_results}

\subsection{Subtype TPR}
To assess model performance at finer granularity, we partition each pathology into clinically meaningful subtypes based on presentation characteristics curated with a board-certified radiologist. For each generated report, we extract a diagnostic prediction using CheXbert \citep{smit2020combining}, a widely used automated label extractor, and compare it against the CheXbert-extracted label from the corresponding ground-truth report in the dataset. We restrict evaluation to samples for which the ground-truth report explicitly asserts the presence or absence of the finding, excluding cases where the label is uncertain or unmentioned. We use these results to select the set of target pathology subtypes that we use for illustrating our multi-task \method approach. 

\begin{figure}[htbp]
    \centering
    \begin{tabular}{ccc}
        \includegraphics[width=0.38\linewidth]{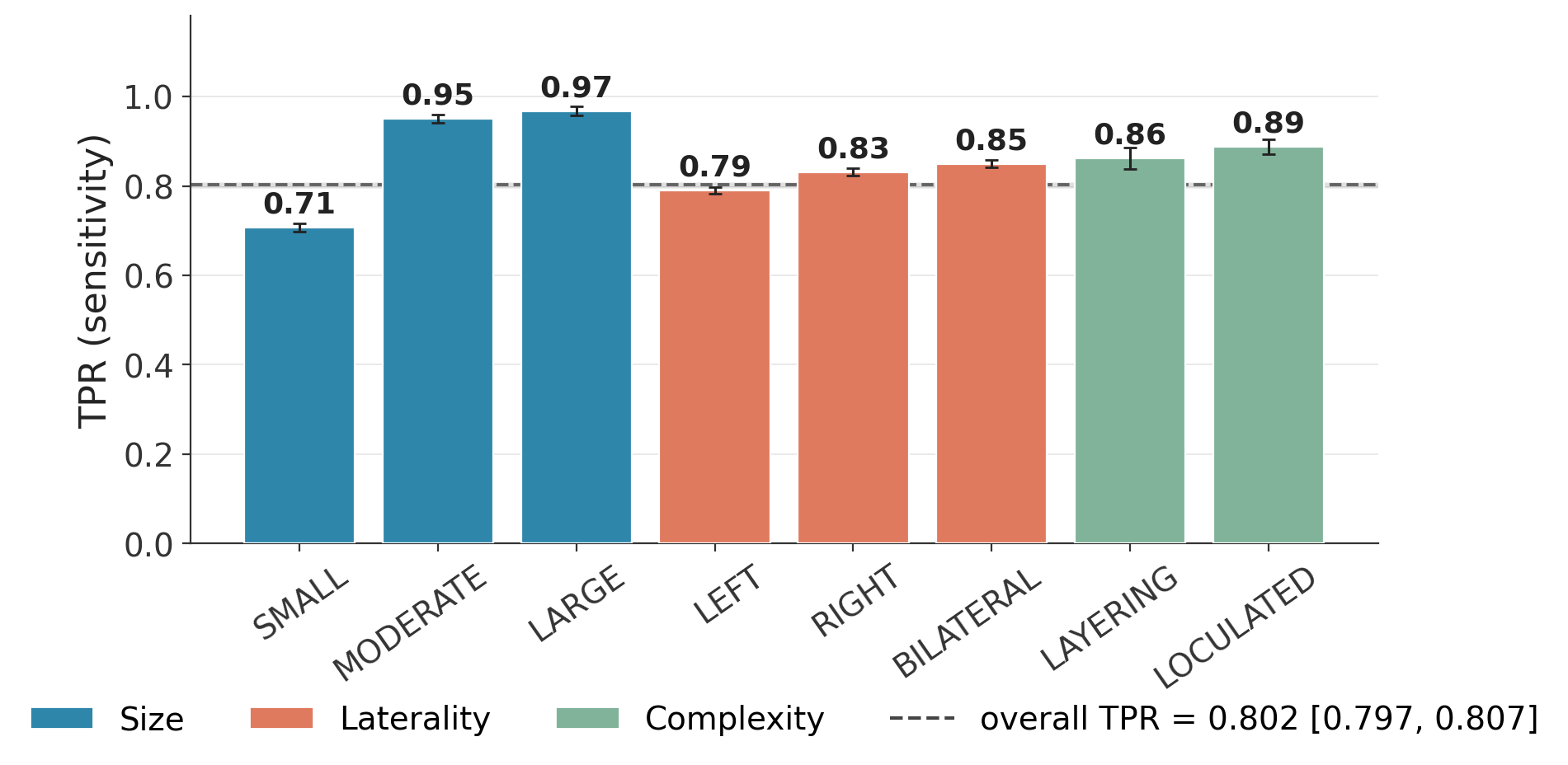} & 
        \includegraphics[width=0.38\linewidth]{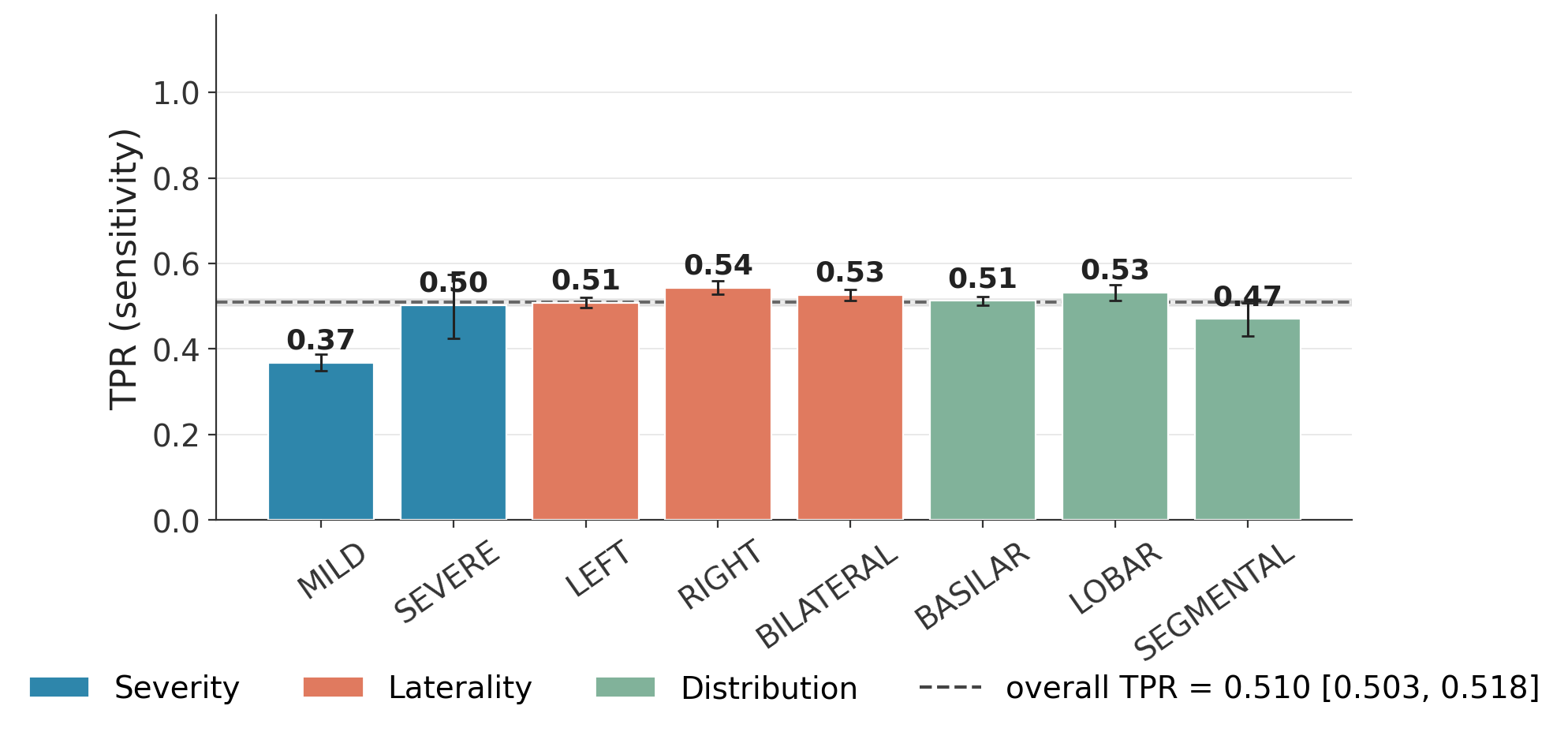} & 
        \includegraphics[width=0.22\linewidth]{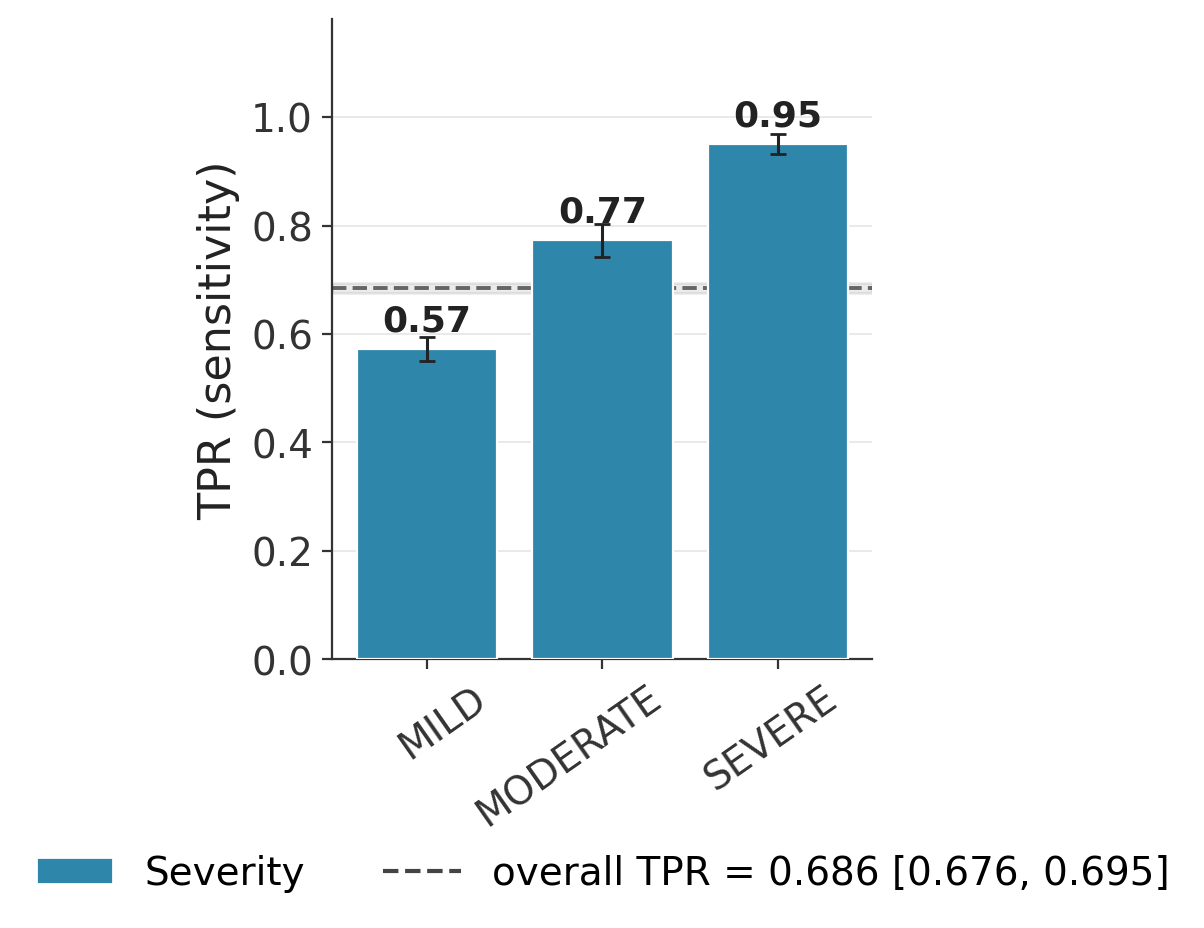} \\
        (a) Pleural Effusion & (b) Atelectasis & (c) Cardiomegaly
    \end{tabular}
    \caption{Certain pathologies (atelectasis) and presentation subtypes (small unilateral pleural effusions) are prone to a lower detection rate, demonstrating the possibility that annotation noise in training data is inherited by the model.}
    \label{fig:tpr_attributes}
\end{figure}

\paragraph{Different model} We demonstrate that these empirical insights are model agnostic by replicating the results using a different model (NV-reason). 

\begin{figure}[!h]
    \centering
    \begin{tabular}{ccc}
        \includegraphics[width=0.38\linewidth]{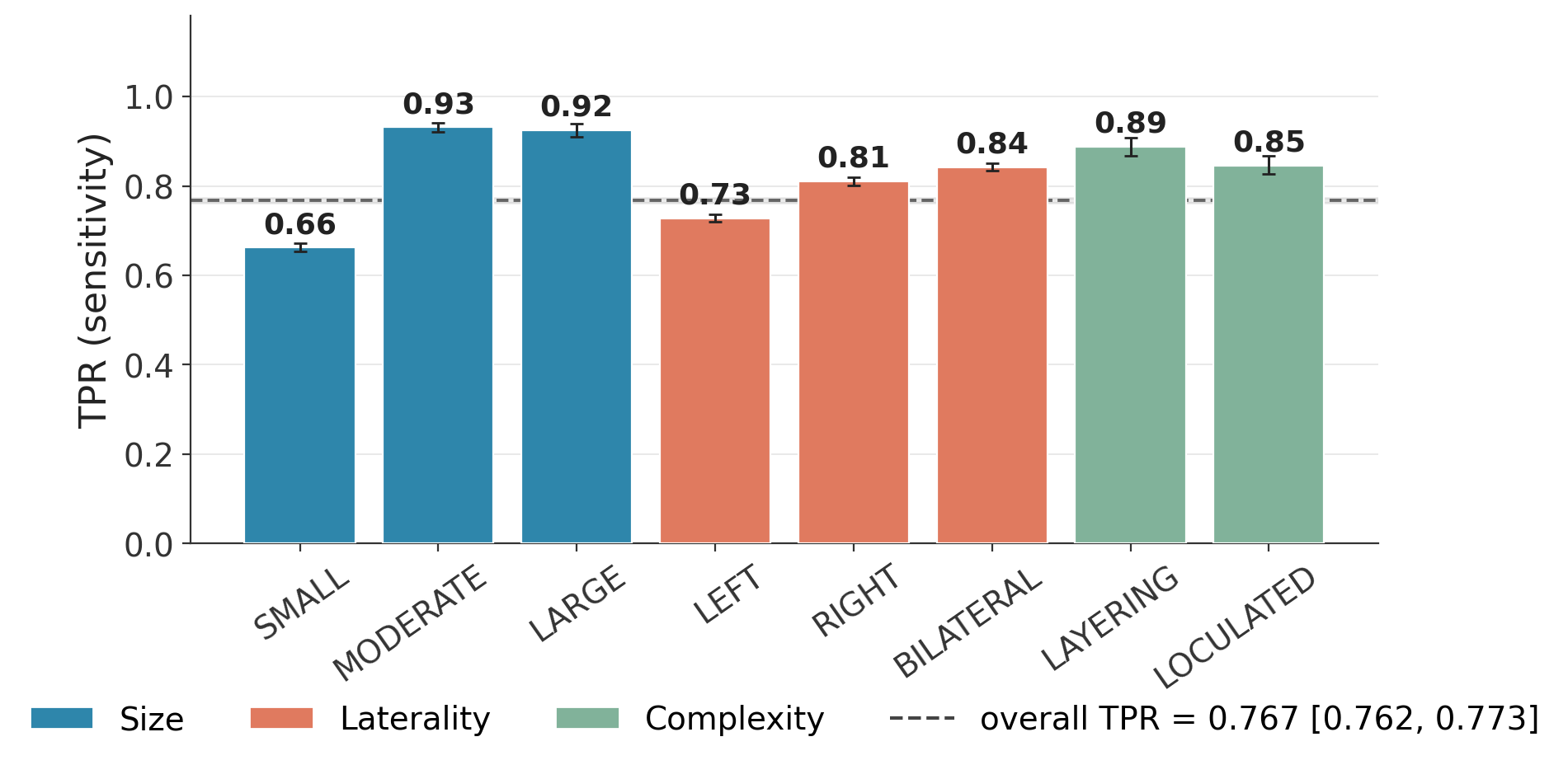} & 
        \includegraphics[width=0.38\linewidth]{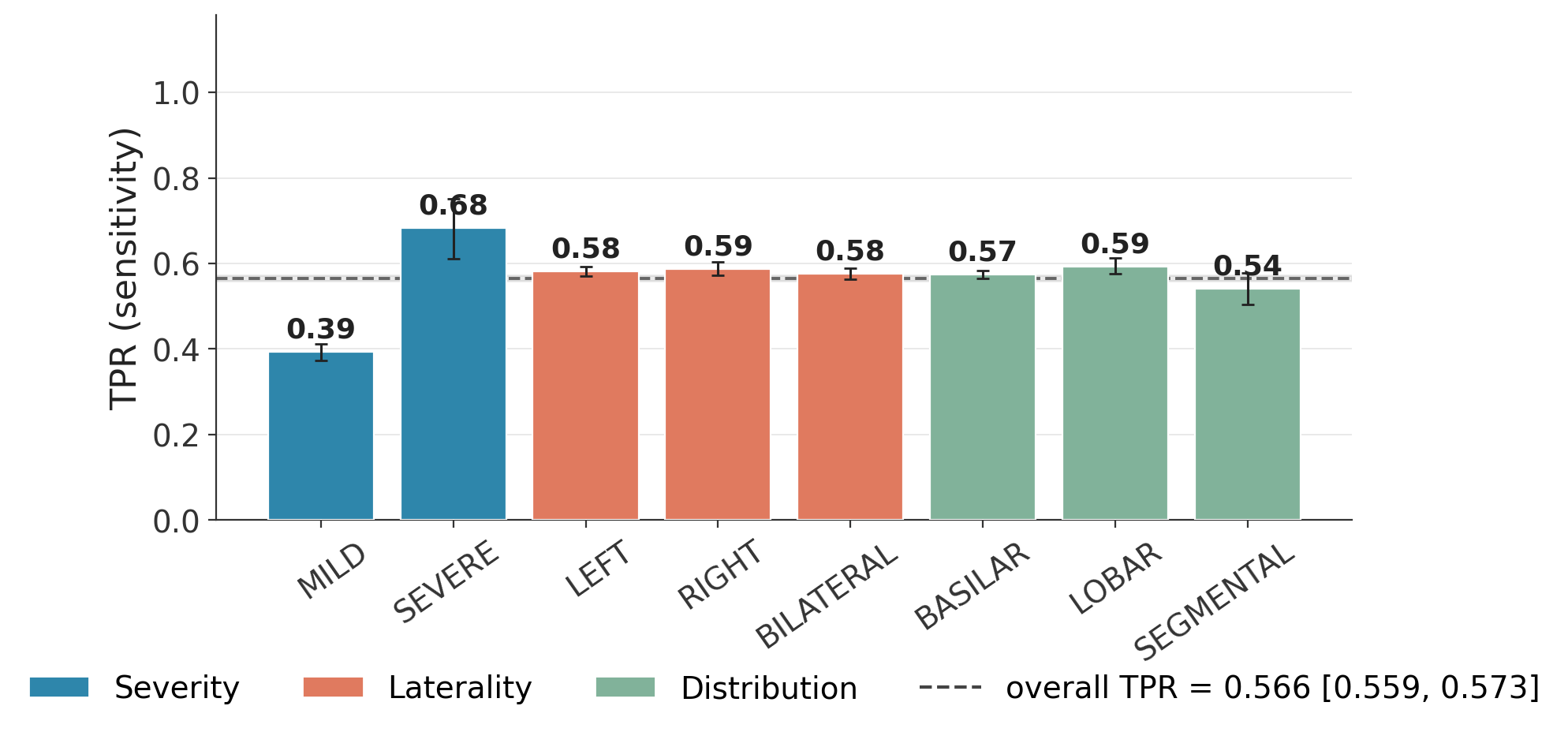} & 
        \includegraphics[width=0.22\linewidth]{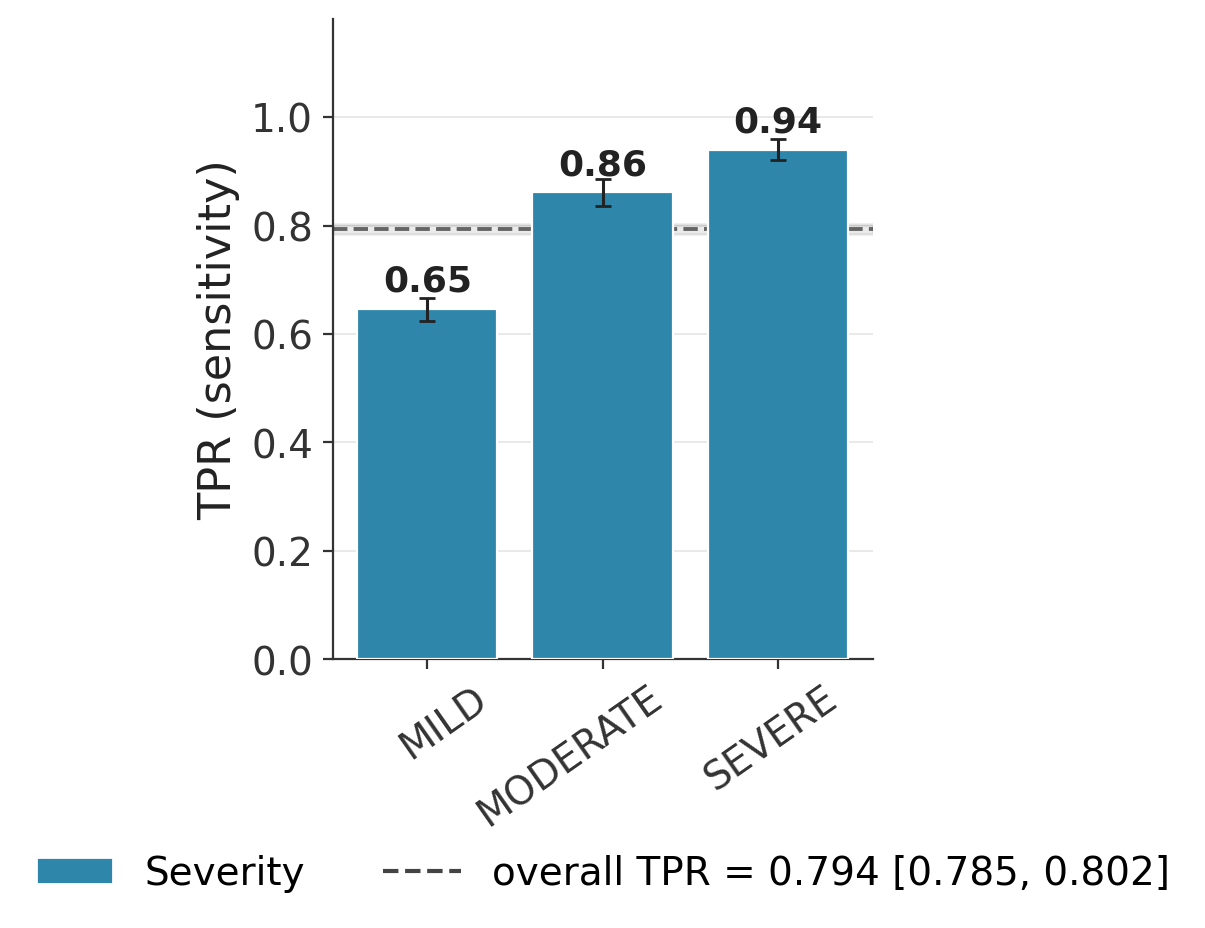} \\
        (a) Pleural Effusion & (b) Atelectasis & (c) Cardiomegaly
    \end{tabular}
    \caption{Subgroup-wise marginal sensitivity for the NV-reason model, showing consistent patterns in subtypes with low detection rate.}
    \label{fig:tpr_attributes_nvreason}
\end{figure}

\paragraph{Different label extractor} We also investigate whether the results are reproduced using an alternative label extraction procedure using GPT-OSS 120B. 

\begin{figure}[htbp]
    \centering
    \begin{tabular}{ccc}
        \includegraphics[width=0.38\linewidth]{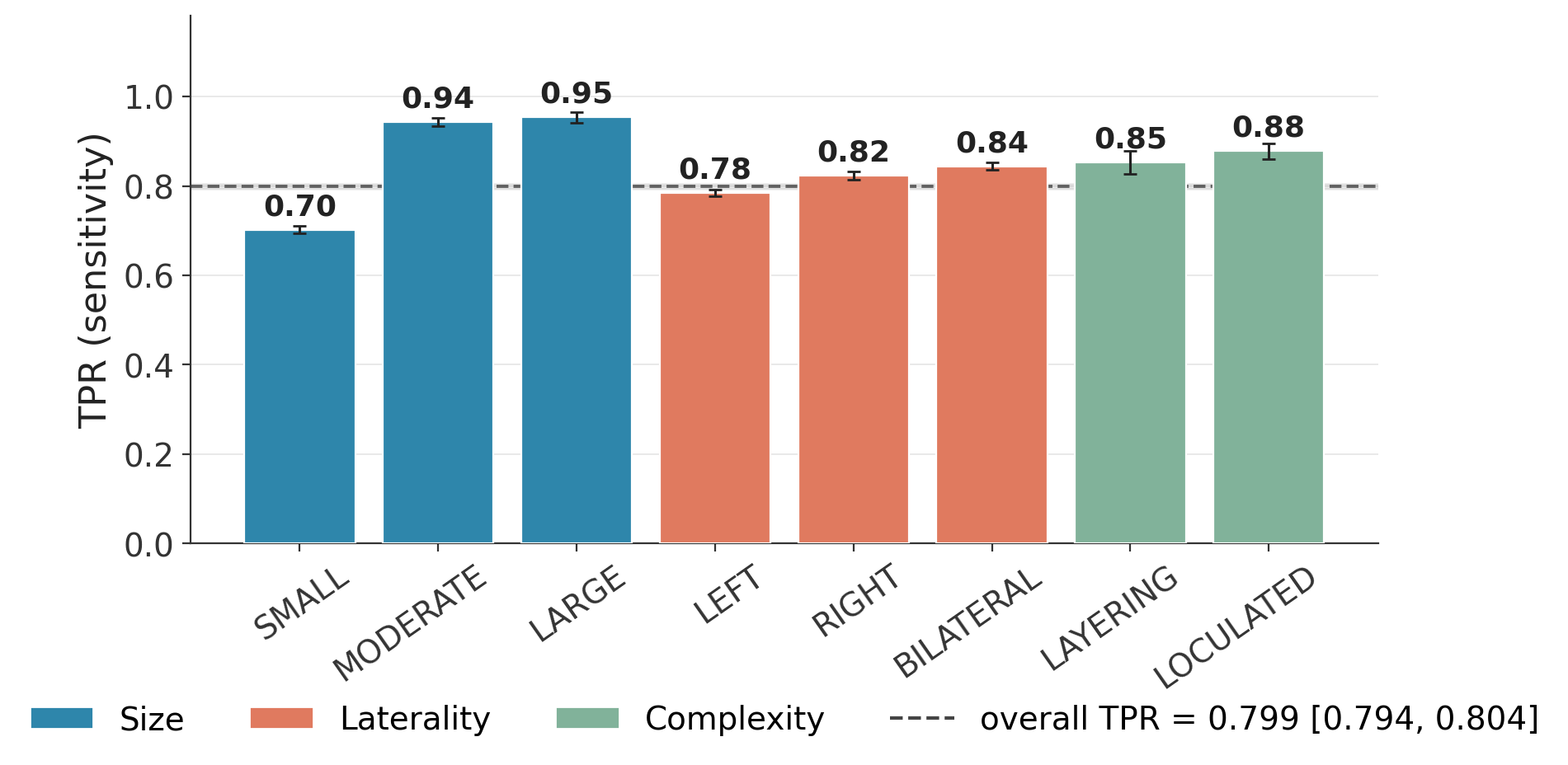} & 
        \includegraphics[width=0.38\linewidth]{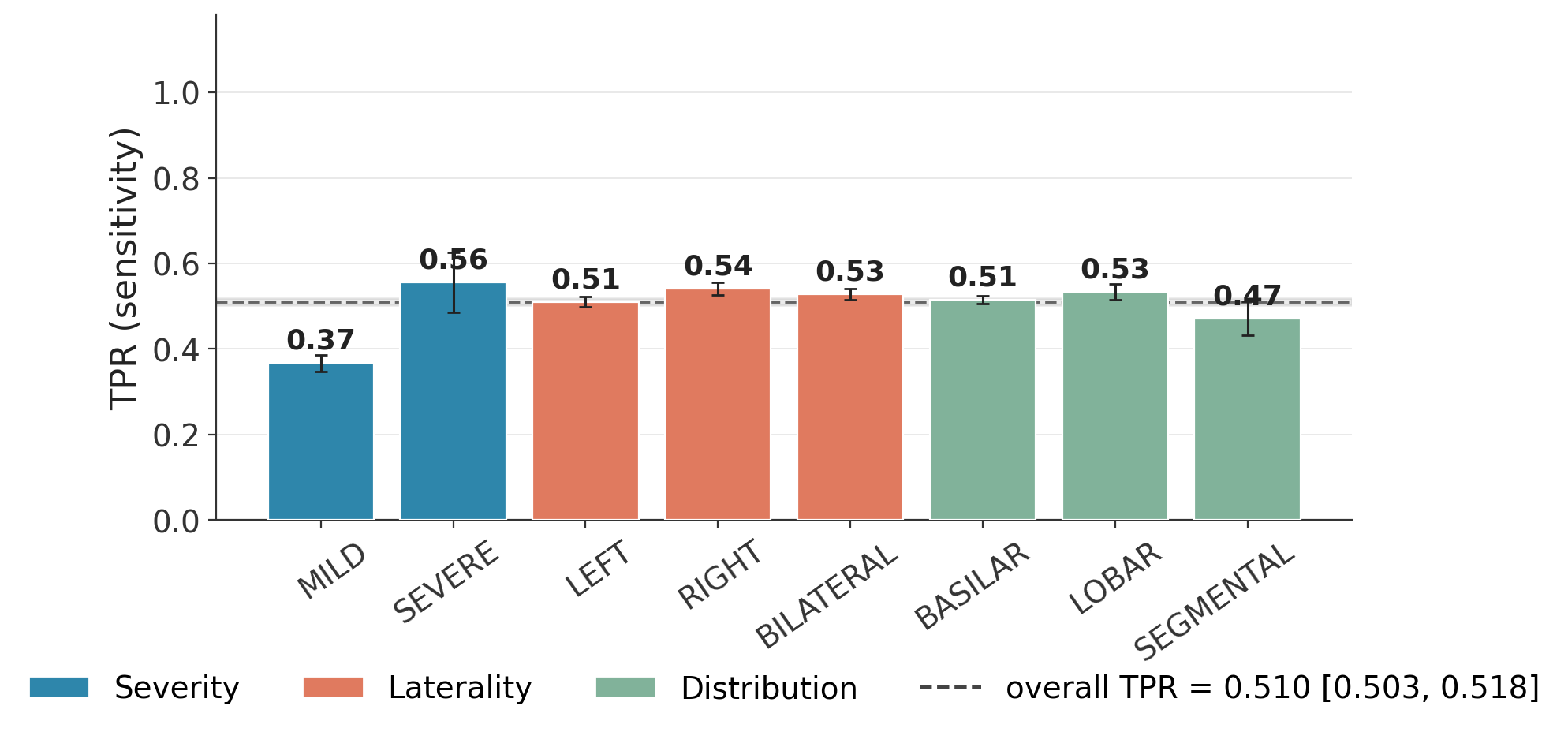} & 
        \includegraphics[width=0.22\linewidth]{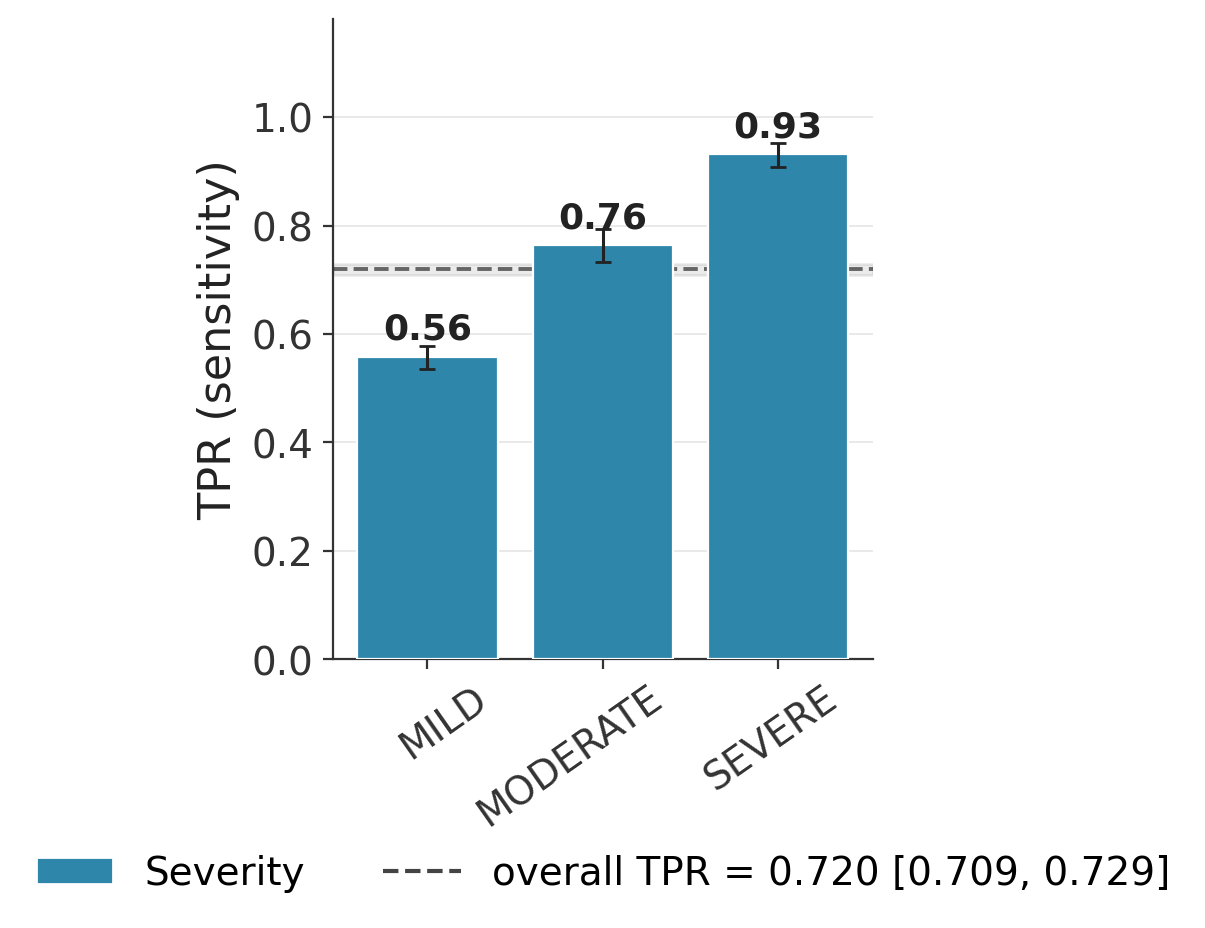} \\
        (a) Pleural Effusion & (b) Atelectasis & (c) Cardiomegaly
    \end{tabular}
    \caption{We find consistent patterns with the GPT-OSS 120B model extracting clinical labels from the generated predictions and ground truth reports.}
    \label{fig:tpr_attributes_gptoss}
\end{figure}

\paragraph{Different dataset}
Finally, we reproduce consistent results using the MIMIC dataset, a separate dataset.

\begin{figure}[htbp]
    \centering
    \begin{tabular}{ccc}
        \includegraphics[width=0.38\linewidth]{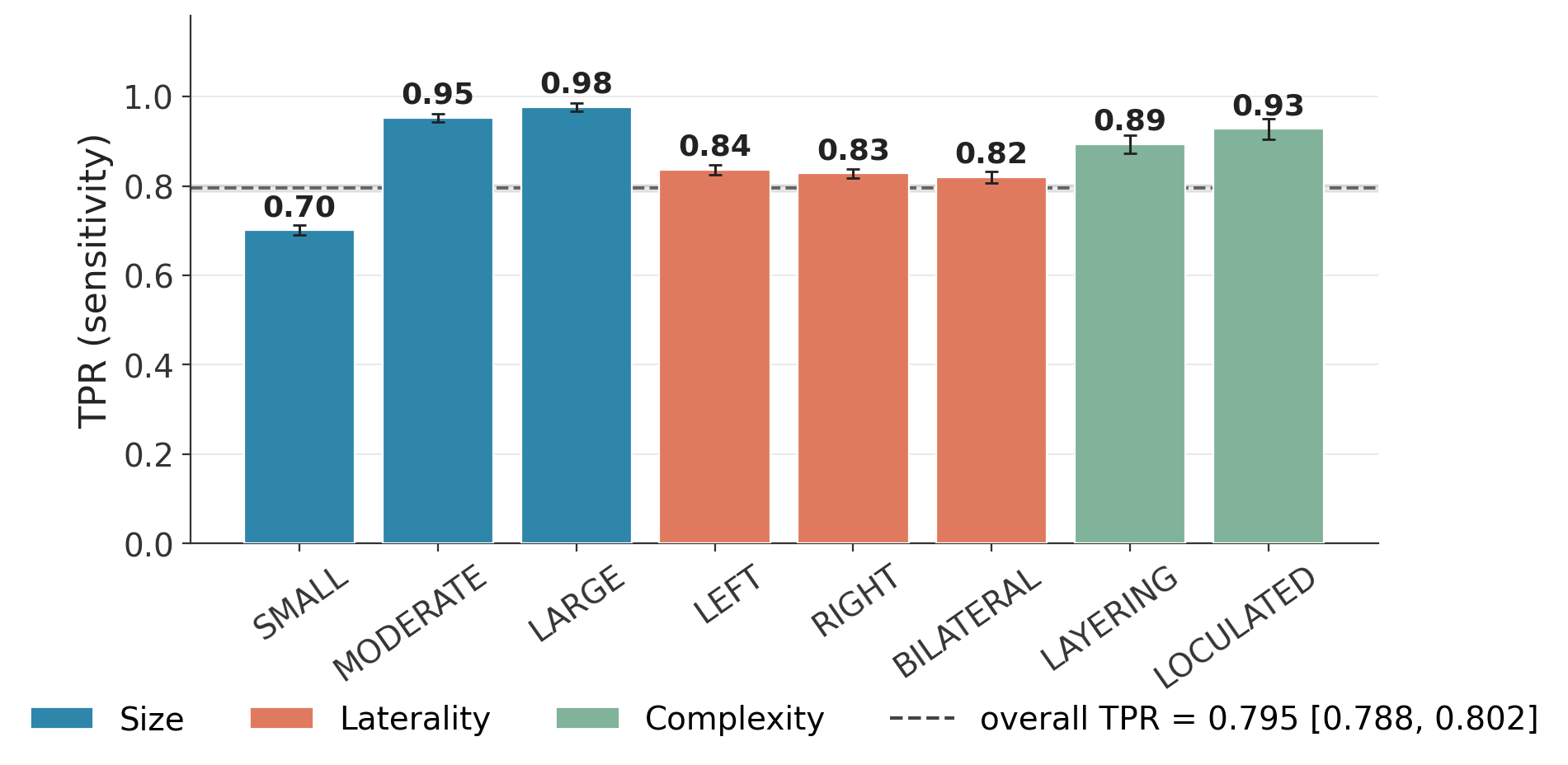} & 
        \includegraphics[width=0.38\linewidth]{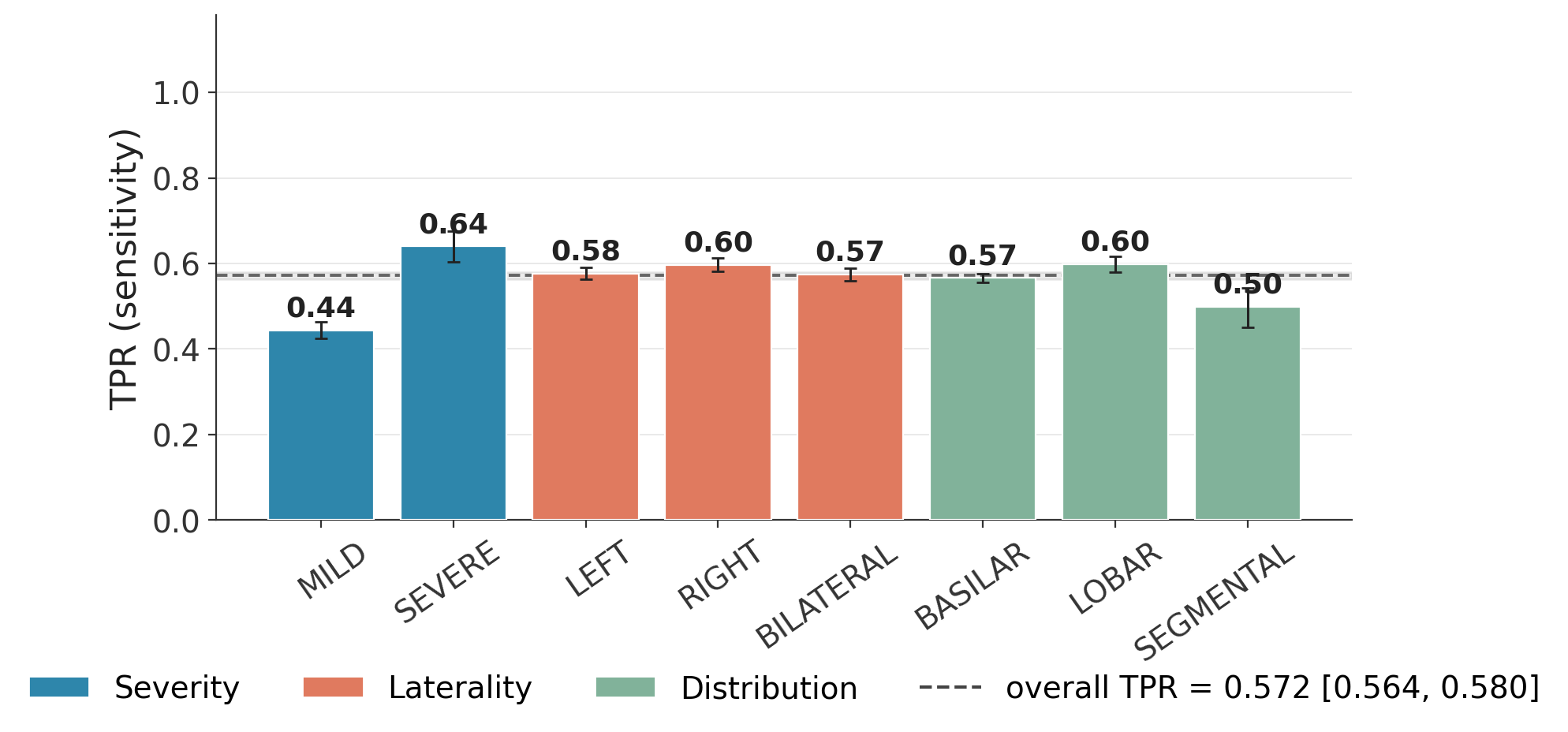} & 
        \includegraphics[width=0.22\linewidth]{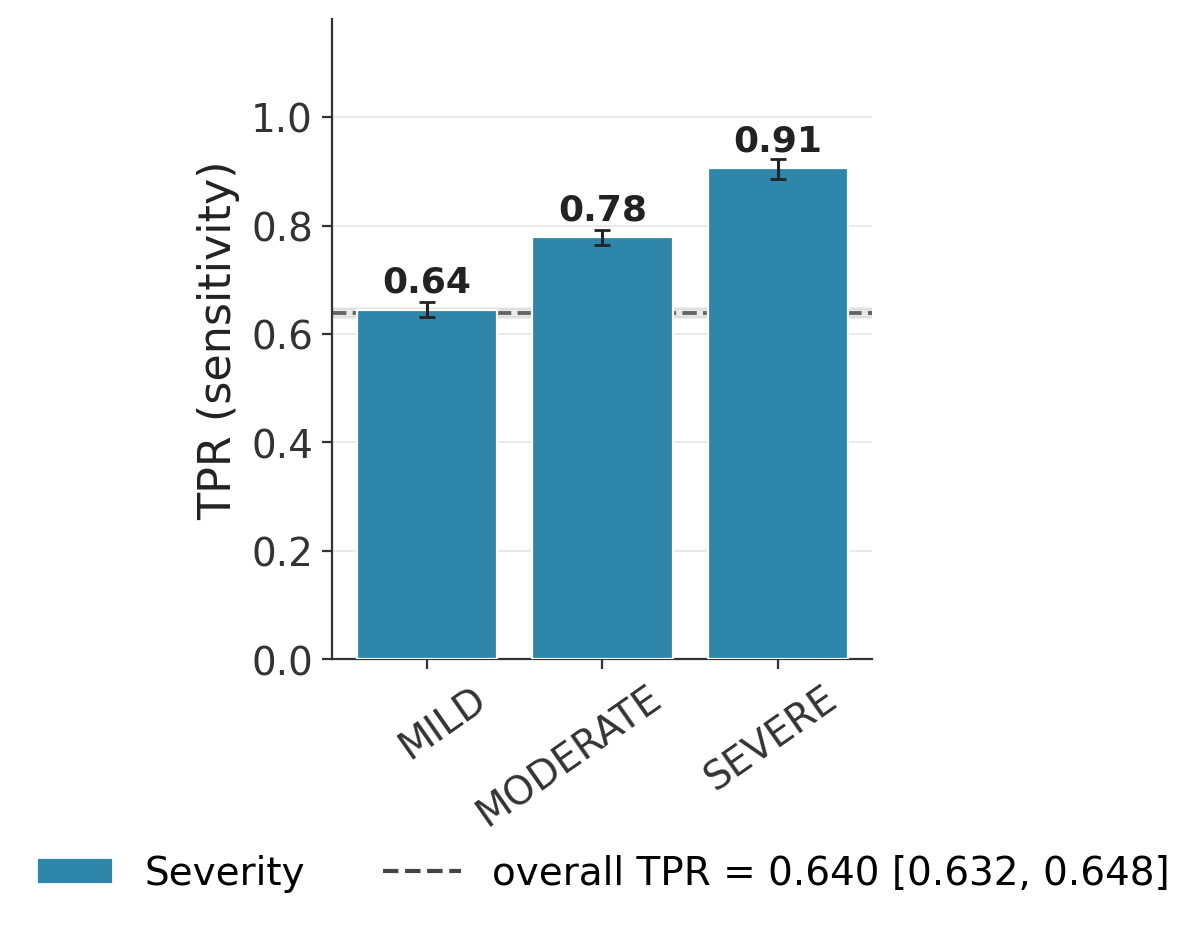} \\
        (a) Pleural Effusion & (b) Atelectasis & (c) Cardiomegaly
    \end{tabular}
    \caption{We find consistent patterns in the MIMIC dataset.}
    \label{fig:tpr_attributes}
\end{figure}

\subsection{Within-pathology Variability in Sensitivity}
We demonstrate that pathology-level detection rates (only considering pleural effusion F1 score) hides the granularity of pathology presentation. We find that models struggle with detecting certain pathology subtypes. 

\begin{figure}[!h]
    \centering
    \includegraphics[width=0.9\linewidth]{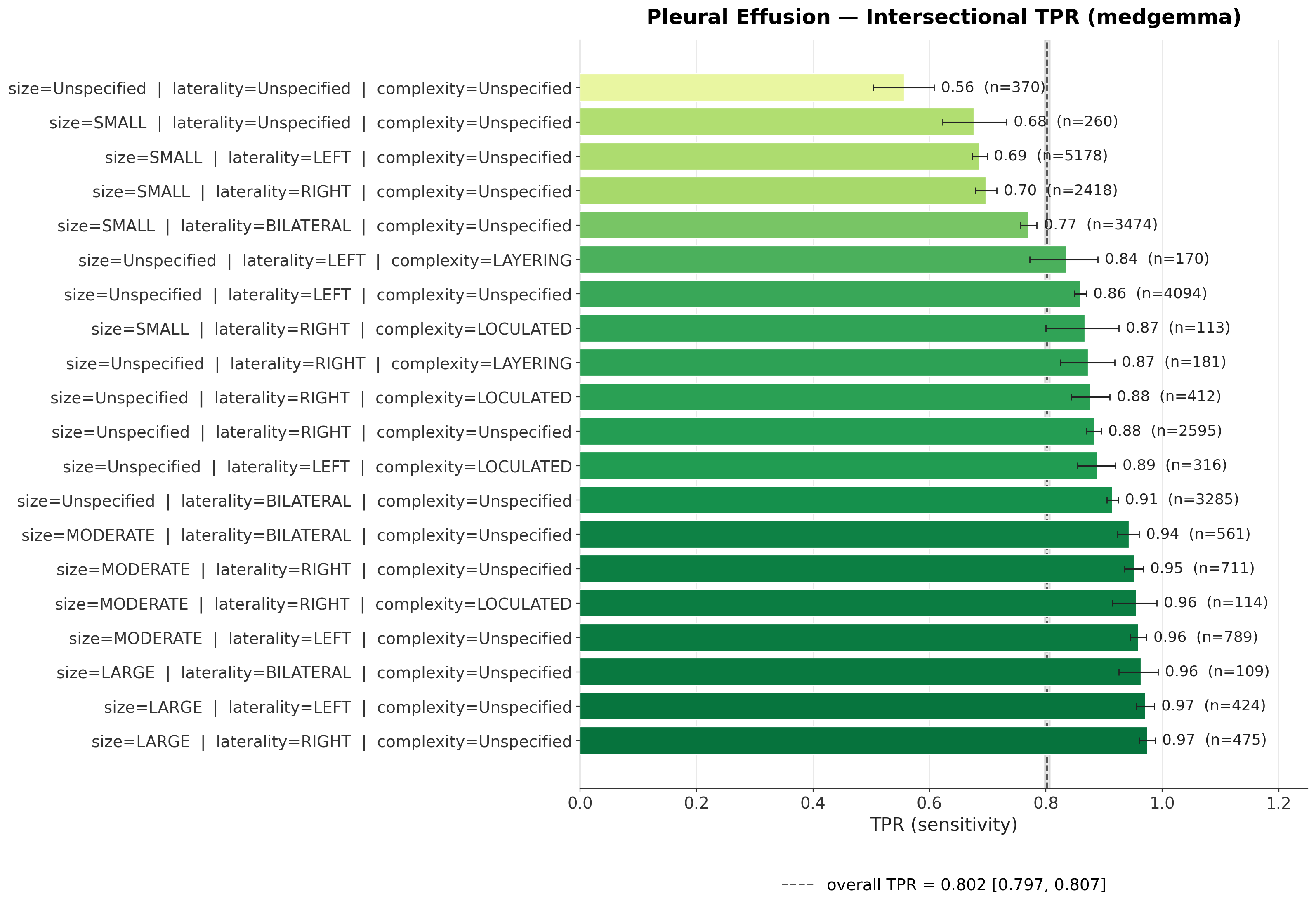}
    \caption{We plot the TPR of pleural effusion for the top 20 most frequent intersectional subtypes defined by size, laterality, and complexity.}
    \label{fig:intersection_attribute_effusion}
\end{figure}

\begin{figure}[!h]
    \centering
    \includegraphics[width=0.9\linewidth]{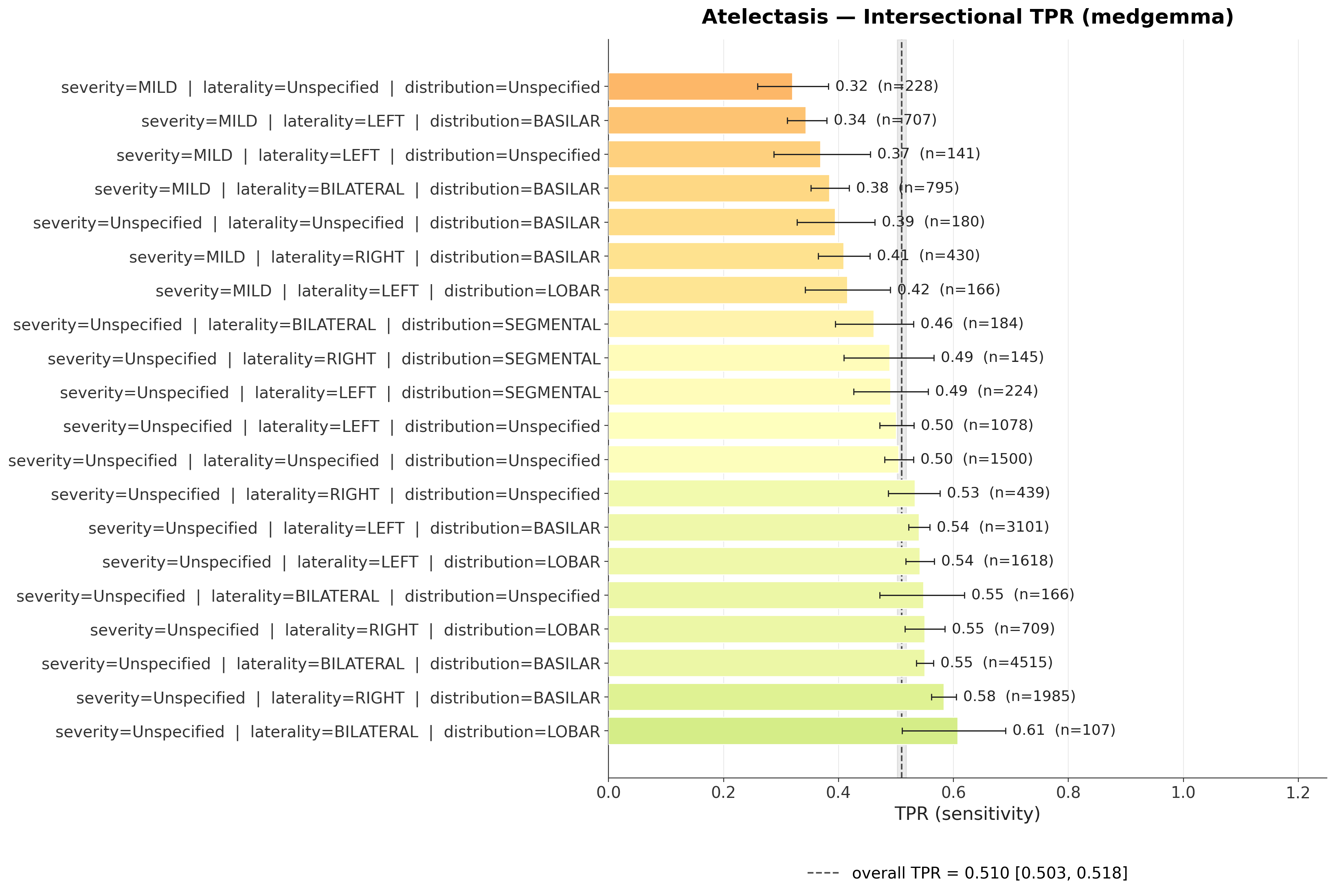}
    \caption{We plot the TPR of atelectasis for the top 20 most frequent intersectional subtypes defined by size, laterality, and distribution.}
    \label{fig:intersection_attribute_atelectasis}
\end{figure}
\FloatBarrier

\subsection{BBE Estimation Results}
\label{app:alpha_estimation_results}
We provide the estimated mixture proportion (hidden positive rates among unlabeled samples) from BBE. 

\begin{table}[htbp]
\centering
\caption{Estimated prior probabilities ($\alpha$) by pathology and specific clinical subtype for CheXpert Dataset.}
\label{tab:estimated_alphas}
\begin{tabular}{@{} l c @{}}
\toprule
Pathology Subtype & Estimated $\alpha$ \\
\midrule

% --- ATELECTASIS ---
\multicolumn{2}{@{}l}{\textbf{Atelectasis}} \\
Mild, Bilateral & 0.1149 \\
Mild, Left & 0.0874 \\
Mild, Right & 0.1122 \\
\addlinespace

% --- CARDIOMEGALY ---
\multicolumn{2}{@{}l}{\textbf{Cardiomegaly}} \\
Mild & 0.0894 \\
\addlinespace

% --- PLEURAL EFFUSION ---
\multicolumn{2}{@{}l}{\textbf{Pleural Effusion}} \\
Small, Bilateral & 0.1059 \\
Small, Left & 0.1692 \\
Small, Right & 0.1016 \\

\bottomrule
\end{tabular}
\end{table}

\begin{table}[htbp]
\centering
\caption{Estimated prior probabilities ($\alpha$) by pathology and specific clinical subtype for the MIMIC dataset.}
\label{tab:mimic_estimated_alphas}
\begin{tabular}{@{} l c @{}}
\toprule
Pathology Subtype & Estimated $\alpha$ \\
\midrule

% --- ATELECTASIS ---
\multicolumn{2}{@{}l}{\textbf{Atelectasis}} \\
Mild, Bilateral & 0.0905 \\
Mild, Left & 0.0885 \\
Mild, Right & 0.0935 \\
\addlinespace

% --- CARDIOMEGALY ---
\multicolumn{2}{@{}l}{\textbf{Cardiomegaly}} \\
Mild & 0.1553 \\
\addlinespace

% --- PLEURAL EFFUSION ---
\multicolumn{2}{@{}l}{\textbf{Pleural Effusion}} \\
Small, Bilateral & 0.1016 \\
Small, Left & 0.1419 \\
Small, Right & 0.1041 \\

\bottomrule
\end{tabular}
\end{table}

\FloatBarrier

\subsection{Semi-synthetic Experiment}\label{sec:synth_results}

\begin{table}[!ht]
\centering
\caption{Results on \textbf{Pleural Effusion} under simulated preference noise. 95\% bootstrap CIs in parentheses.}
\label{tab:results-pleural-effusion}
\resizebox{\linewidth}{!}{%
\begin{tabular}{l ccc ccc ccc}
\toprule
 & \multicolumn{3}{c}{Standard DPO} & \multicolumn{3}{c}{Dr-DPO} & \multicolumn{3}{c}{\method} \\
\cmidrule(lr){2-4} \cmidrule(lr){5-7} \cmidrule(lr){8-10}
Noise Rate & F1 & Sens. & Spec. & F1 & Sens. & Spec. & F1 & Sens. & Spec. \\
\midrule
0\% & 0.83 {\scriptsize(0.82--0.84)} & 0.75 {\scriptsize(0.74--0.77)} & 0.94 {\scriptsize(0.94--0.95)} & 0.84 {\scriptsize(0.83--0.85)} & 0.76 {\scriptsize(0.75--0.78)} & 0.94 {\scriptsize(0.93--0.95)} & 0.84 {\scriptsize(0.83--0.85)} & 0.76 {\scriptsize(0.74--0.78)} & 0.95 {\scriptsize(0.94--0.96)} \\
10\% & 0.82 {\scriptsize(0.80--0.83)} & 0.73 {\scriptsize(0.71--0.74)} & 0.95 {\scriptsize(0.94--0.96)} & 0.82 {\scriptsize(0.81--0.84)} & 0.74 {\scriptsize(0.72--0.75)} & \textbf{0.95 {\scriptsize(0.94--0.96)}} & \textbf{0.84 {\scriptsize(0.83--0.86)}} & \textbf{0.77 {\scriptsize(0.75--0.79)}} & 0.94 {\scriptsize(0.94--0.95)} \\
20\% & 0.80 {\scriptsize(0.78--0.81)} & 0.69 {\scriptsize(0.67--0.70)} & 0.96 {\scriptsize(0.95--0.97)} & 0.79 {\scriptsize(0.78--0.80)} & 0.68 {\scriptsize(0.66--0.70)} & \textbf{0.96 {\scriptsize(0.95--0.97)}} & \textbf{0.84 {\scriptsize(0.82--0.85)}} & \textbf{0.76 {\scriptsize(0.74--0.77)}} & 0.95 {\scriptsize(0.94--0.96)} \\
30\% & 0.81 {\scriptsize(0.80--0.82)} & 0.73 {\scriptsize(0.71--0.74)} & 0.93 {\scriptsize(0.92--0.94)} & 0.76 {\scriptsize(0.74--0.77)} & 0.63 {\scriptsize(0.61--0.65)} & \textbf{0.97 {\scriptsize(0.96--0.97)}} & \textbf{0.84 {\scriptsize(0.83--0.85)}} & \textbf{0.76 {\scriptsize(0.75--0.78)}} & 0.95 {\scriptsize(0.94--0.96)} \\
\midrule
Base & \multicolumn{9}{c}{F1: 0.81 {\scriptsize(0.79--0.82)} \quad Sens.: 0.74 {\scriptsize(0.73--0.76)} \quad Spec.: 0.90 {\scriptsize(0.89--0.91)}} \\
\bottomrule
\end{tabular}
}
\end{table}

\begin{table}[!ht]
\centering
\caption{Results on \textbf{Cardiomegaly} under simulated preference noise. 95\% bootstrap CIs in parentheses.}
\label{tab:results-cardiomegaly}
\resizebox{\linewidth}{!}{%
\begin{tabular}{l ccc ccc ccc}
\toprule
 & \multicolumn{3}{c}{Standard DPO} & \multicolumn{3}{c}{Dr-DPO} & \multicolumn{3}{c}{\method} \\
\cmidrule(lr){2-4} \cmidrule(lr){5-7} \cmidrule(lr){8-10}
Noise Rate & F1 & Sens. & Spec. & F1 & Sens. & Spec. & F1 & Sens. & Spec. \\
\midrule
0\% & 0.82 {\scriptsize(0.80--0.83)} & 0.77 {\scriptsize(0.75--0.79)} & 0.88 {\scriptsize(0.87--0.89)} & 0.82 {\scriptsize(0.81--0.83)} & 0.77 {\scriptsize(0.76--0.79)} & 0.89 {\scriptsize(0.87--0.90)} & 0.83 {\scriptsize(0.81--0.84)} & 0.80 {\scriptsize(0.78--0.82)} & 0.86 {\scriptsize(0.85--0.88)} \\
10\% & 0.82 {\scriptsize(0.81--0.83)} & \textbf{0.79 {\scriptsize(0.77--0.80)}} & 0.87 {\scriptsize(0.86--0.89)} & 0.82 {\scriptsize(0.81--0.83)} & 0.77 {\scriptsize(0.75--0.79)} & \textbf{0.89 {\scriptsize(0.87--0.90)}} & \textbf{0.82 {\scriptsize(0.81--0.83)}} & 0.79 {\scriptsize(0.77--0.80)} & 0.88 {\scriptsize(0.86--0.89)} \\
20\% & 0.80 {\scriptsize(0.79--0.81)} & 0.75 {\scriptsize(0.73--0.76)} & 0.88 {\scriptsize(0.87--0.90)} & 0.81 {\scriptsize(0.79--0.82)} & 0.75 {\scriptsize(0.73--0.77)} & \textbf{0.89 {\scriptsize(0.88--0.90)}} & \textbf{0.83 {\scriptsize(0.82--0.84)}} & \textbf{0.81 {\scriptsize(0.80--0.83)}} & 0.85 {\scriptsize(0.84--0.86)} \\
30\% & 0.77 {\scriptsize(0.75--0.78)} & 0.68 {\scriptsize(0.67--0.70)} & 0.91 {\scriptsize(0.90--0.92)} & 0.78 {\scriptsize(0.76--0.79)} & 0.69 {\scriptsize(0.67--0.71)} & \textbf{0.91 {\scriptsize(0.90--0.92)}} & \textbf{0.82 {\scriptsize(0.81--0.83)}} & \textbf{0.78 {\scriptsize(0.77--0.80)}} & 0.87 {\scriptsize(0.85--0.88)} \\
\midrule
Base & \multicolumn{9}{c}{F1: 0.77 {\scriptsize(0.76--0.79)} \quad Sens.: 0.70 {\scriptsize(0.68--0.72)} \quad Spec.: 0.89 {\scriptsize(0.88--0.90)}} \\
\bottomrule
\end{tabular}
}
\end{table}

We further evaluate \method by examining the calibration of the model's learned preferences over contrastive-edit pairs. For each test image, we treat $\mathbb{P}_\phi(a^+ \succ a^- \mid x)$ — the model's probability of preferring the response that mentions the finding over the one that omits it, as a soft prediction that the pathology is present, and score it against the ground-truth label using log loss, Brier score, and expected calibration error (ECE).

\begin{table}[!ht]
\centering
\caption{Uncertainty quantification metrics on the \textbf{Pleural Effusion} semi-synthetic experiment, examining learned preference probabilities over a set of noiseless present-absent validation pairs. \method maintains low Brier score and log loss across training data noise rates, demonstrating that the model maintains accurate preference probabilities.}
\label{tab:results-uncertainty}
\resizebox{0.8\linewidth}{!}{%
\begin{tabular}{llllll}
\toprule
Flip Rate & Method & Mean Gap & Brier & ECE & Log Loss \\
\midrule
0.00 & Base & 0.269 & 0.238 {\scriptsize (0.227, 0.250)} & 0.138 {\scriptsize (0.108, 0.171)} & 0.665 {\scriptsize (0.638, 0.691)} \\
 & PU-SFT & 0.248 & 0.233 {\scriptsize (0.223, 0.243)} & \textbf{0.127 {\scriptsize (0.100, 0.164)}} & 0.654 {\scriptsize (0.631, 0.678)} \\
 & DPO & 0.618 & 0.171 {\scriptsize (0.161, 0.180)} & 0.184 {\scriptsize (0.158, 0.209)} & 0.517 {\scriptsize (0.495, 0.539)} \\
 & Dr. DPO & 0.623 & \textbf{0.169 {\scriptsize (0.160, 0.179)}} & 0.177 {\scriptsize (0.152, 0.202)} & \textbf{0.514 {\scriptsize (0.493, 0.536)}} \\
 & PU-DPO & 0.628 & 0.170 {\scriptsize (0.161, 0.180)} & 0.185 {\scriptsize (0.160, 0.210)} & 0.515 {\scriptsize (0.493, 0.538)} \\
\midrule
0.10 & Base & 0.169 & 0.238 {\scriptsize (0.227, 0.250)} & 0.138 {\scriptsize (0.108, 0.171)} & 0.665 {\scriptsize (0.638, 0.691)} \\
 & PU-SFT & 0.170 & 0.232 {\scriptsize (0.222, 0.242)} & \textbf{0.126 {\scriptsize (0.099, 0.161)}} & 0.651 {\scriptsize (0.629, 0.674)} \\
 & DPO & 0.472 & 0.171 {\scriptsize (0.163, 0.181)} & 0.178 {\scriptsize (0.153, 0.203)} & 0.520 {\scriptsize (0.500, 0.541)} \\
 & Dr. DPO & 0.477 & 0.170 {\scriptsize (0.161, 0.180)} & 0.176 {\scriptsize (0.149, 0.200)} & 0.517 {\scriptsize (0.498, 0.539)} \\
 & PU-DPO & 0.502 & \textbf{0.170 {\scriptsize (0.161, 0.180)}} & 0.192 {\scriptsize (0.167, 0.217)} & \textbf{0.515 {\scriptsize (0.493, 0.537)}} \\
\midrule
0.20 & Base & 0.072 & 0.238 {\scriptsize (0.227, 0.250)} & 0.138 {\scriptsize (0.108, 0.171)} & 0.665 {\scriptsize (0.638, 0.691)} \\
 & PU-SFT & 0.082 & 0.231 {\scriptsize (0.222, 0.242)} & \textbf{0.135 {\scriptsize (0.105, 0.167)}} & 0.650 {\scriptsize (0.628, 0.673)} \\
 & DPO & 0.321 & 0.178 {\scriptsize (0.169, 0.187)} & 0.170 {\scriptsize (0.146, 0.196)} & 0.535 {\scriptsize (0.516, 0.557)} \\
 & Dr. DPO & 0.331 & 0.176 {\scriptsize (0.167, 0.185)} & 0.168 {\scriptsize (0.151, 0.195)} & 0.531 {\scriptsize (0.512, 0.552)} \\
 & PU-DPO & 0.353 & \textbf{0.174 {\scriptsize (0.165, 0.184)}} & 0.175 {\scriptsize (0.149, 0.200)} & \textbf{0.525 {\scriptsize (0.504, 0.548)}} \\
\midrule
0.30 & Base & -0.049 & 0.238 {\scriptsize (0.227, 0.250)} & 0.138 {\scriptsize (0.108, 0.171)} & 0.665 {\scriptsize (0.638, 0.691)} \\
 & PU-SFT & -0.027 & 0.232 {\scriptsize (0.222, 0.242)} & \textbf{0.125 {\scriptsize (0.100, 0.162)}} & 0.651 {\scriptsize (0.628, 0.674)} \\
 & DPO & 0.188 & 0.182 {\scriptsize (0.173, 0.191)} & 0.154 {\scriptsize (0.137, 0.183)} & 0.544 {\scriptsize (0.524, 0.565)} \\
 & Dr. DPO & 0.194 & 0.179 {\scriptsize (0.171, 0.189)} & 0.163 {\scriptsize (0.147, 0.191)} & 0.540 {\scriptsize (0.520, 0.561)} \\
 & PU-DPO & 0.213 & \textbf{0.173 {\scriptsize (0.163, 0.182)}} & 0.177 {\scriptsize (0.153, 0.202)} & \textbf{0.521 {\scriptsize (0.499, 0.543)}} \\
\bottomrule
\end{tabular}
}
\end{table}

\begin{figure}[!h]
    \centering
    \includegraphics[width=0.95\linewidth]{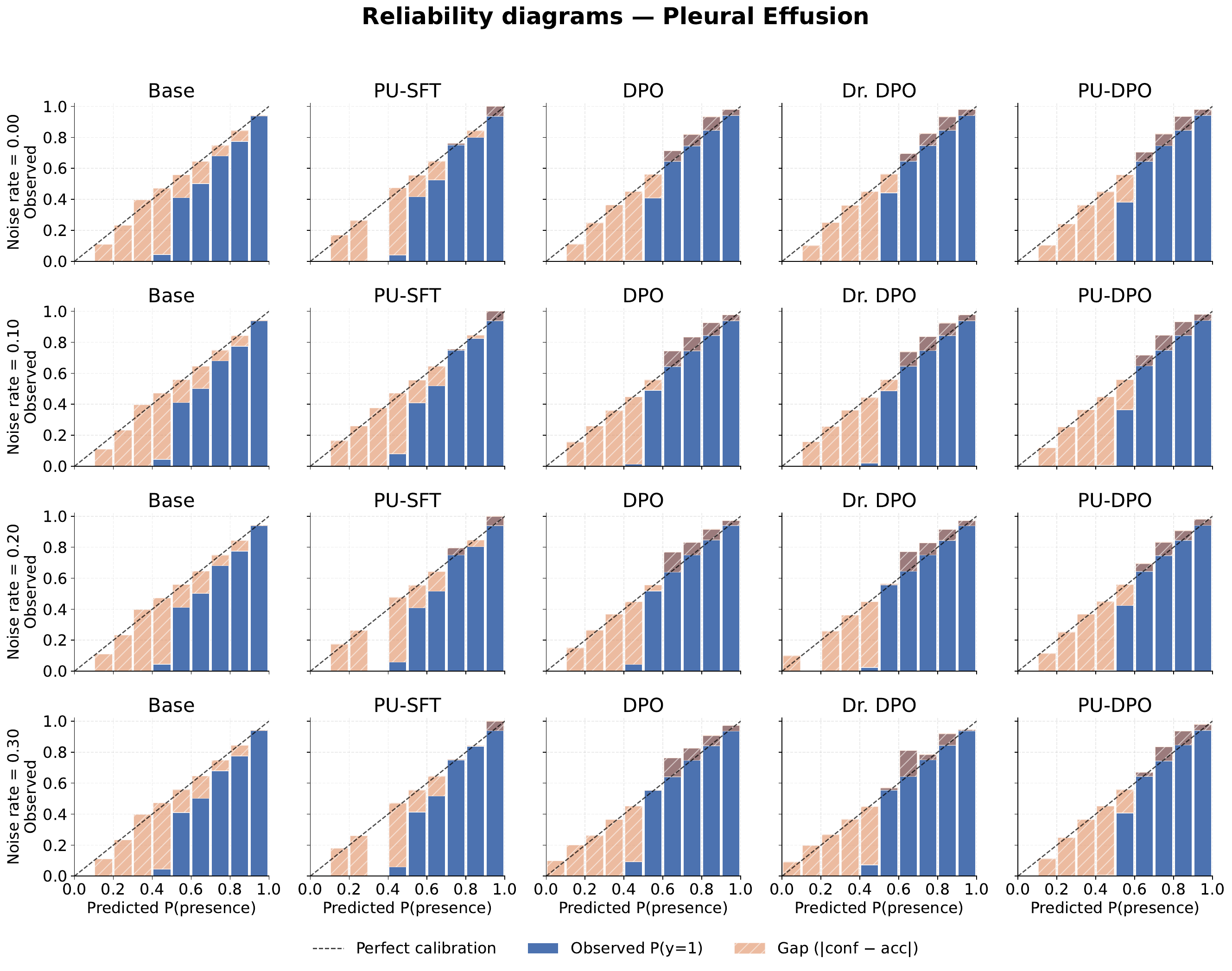}
    \caption{Calibration of learned preference probabilities for pleural effusion. \method leads to slight overconfidence in predicted probabilities.}
    \label{fig:calibration_pleural}
\end{figure}

\FloatBarrier

\textbf{Robustness to misspecification.}
A natural concern with PU-based objectives is sensitivity to the class prior $\alpha$: in practice $\alpha_j$ must be estimated, and downstream performance may degrade if the estimate is off. To probe this, we evaluate \method on the pleural effusion task under 20\% simulated omission noise, sweeping $\alpha$ values around the oracle setting (Table~\ref{tab:results-alpha-misspecification}).

The results show that \method is broadly robust to moderate misspecification of $\alpha$. Setting $\alpha = 0.6$ — overestimating the class prior — yields the best overall performance (F1 = 0.86, sensitivity = 0.81), exceeding both the oracle setting and standard DPO. Underestimating $\alpha$ at $0.4$ collapses \method's behavior toward DPO (F1 = 0.79, sensitivity = 0.69), which is consistent with the form of the PU objective: as $\alpha \to 0$, the contamination correction term vanishes and the loss reduces to standard DPO on the noisy preferences. Across all settings, specificity remains stable in the 0.93--0.96 range, indicating that misspecification primarily affects the trade-off between detection and conservatism rather than introducing spurious positives.

These results have a useful practical implication: erring slightly on the high side when estimating $\alpha_j$ is preferable to erring low, since underestimation reverts to the noise-corrupted baseline while overestimation continues to suppress omission noise effectively.

\begin{table}[!ht]
\centering
\caption{Results on \textbf{Pleural Effusion} under simulated preference noise. 95\% bootstrap CIs in parentheses.}
\label{tab:results-alpha-misspecification}
\begin{tabular}{ll ccc}
\toprule
Noise Rate & Method & F1 & Sens. & Spec. \\
\midrule
\multirow{4}{*}{20\%} 
 & DPO & 0.80 {\scriptsize(0.78--0.81)} & 0.69 {\scriptsize(0.67--0.70)} & \textbf{0.96 {\scriptsize(0.95--0.97)}} \\
 & PU-DPO (oracle) & 0.84 {\scriptsize(0.82--0.85)} & 0.76 {\scriptsize(0.74--0.77)} & 0.95 {\scriptsize(0.94--0.96)} \\
 & PU-DPO ($\alpha=0.4$) & 0.79 {\scriptsize(0.78--0.81)} & 0.69 {\scriptsize(0.67--0.70)} & 0.96 {\scriptsize(0.95--0.97)} \\
 & PU-DPO ($\alpha=0.6$) & \textbf{0.86 {\scriptsize(0.85--0.87)}} & \textbf{0.81 {\scriptsize(0.79--0.83)}} & 0.93 {\scriptsize(0.92--0.94)} \\
\bottomrule
\end{tabular}
\end{table}

\FloatBarrier

\subsection{Real-world Experiment}
\label{app:real_exp_results}

\paragraph{Finding-level F1}
We show the pathology level F1 scores showing reliable gains in \method for F1 score. The NV-reason model does not show the highest performance, it improves over naive DPO and base model for almost all dataset-pathology configurations.

\begin{table}[!ht]
\caption{F1 by dataset, model, pathology, and method (mean [95\% bootstrap CI], test-set resampling with 1000 draws)}
\label{tab:filtered_f1_by_dataset_pathology_method}
\resizebox{\linewidth}{!}{%
\begin{tabular}{llllllll}
\toprule
Dataset & Model & Pathology & Base & SFT & RadCliq-GRPO & DPO & PU-DPO \\
\midrule
\multirow[t]{9}{*}{CheXpert} & \multirow[t]{3}{*}{Llava-RAD} & Atelectasis & 0.728 {\scriptsize [0.721, 0.734]} & 0.633 {\scriptsize [0.625, 0.640]} & 0.742 {\scriptsize [0.736, 0.748]} & 0.730 {\scriptsize [0.724, 0.736]} & \textbf{0.743 {\scriptsize [0.737, 0.749]}} \\
 &  & Cardiomegaly & 0.743 {\scriptsize [0.735, 0.751]} & 0.669 {\scriptsize [0.661, 0.677]} & \textbf{0.745 {\scriptsize [0.738, 0.753]}} & 0.726 {\scriptsize [0.718, 0.733]} & 0.736 {\scriptsize [0.729, 0.744]} \\
 &  & Pleural Effusion & 0.845 {\scriptsize [0.842, 0.849]} & 0.877 {\scriptsize [0.874, 0.880]} & 0.854 {\scriptsize [0.850, 0.858]} & 0.921 {\scriptsize [0.919, 0.924]} & \textbf{0.929 {\scriptsize [0.927, 0.931]}} \\
\cline{2-8}
 & \multirow[t]{3}{*}{Medgemma} & Atelectasis & 0.675 {\scriptsize [0.668, 0.681]} & 0.559 {\scriptsize [0.551, 0.566]} & 0.704 {\scriptsize [0.697, 0.710]} & 0.690 {\scriptsize [0.683, 0.696]} & \textbf{0.738 {\scriptsize [0.732, 0.744]}} \\
 &  & Cardiomegaly & 0.788 {\scriptsize [0.781, 0.795]} & 0.691 {\scriptsize [0.683, 0.700]} & 0.767 {\scriptsize [0.759, 0.774]} & 0.815 {\scriptsize [0.808, 0.821]} & \textbf{0.817 {\scriptsize [0.811, 0.823]}} \\
 &  & Pleural Effusion & 0.874 {\scriptsize [0.870, 0.877]} & 0.888 {\scriptsize [0.886, 0.891]} & 0.899 {\scriptsize [0.896, 0.902]} & 0.907 {\scriptsize [0.904, 0.910]} & \textbf{0.909 {\scriptsize [0.907, 0.912]}} \\
\cline{2-8}
 & \multirow[t]{3}{*}{NV-reason} & Atelectasis & \textbf{0.746 {\scriptsize [0.740, 0.752]}} & 0.600 {\scriptsize [0.592, 0.606]} & 0.715 {\scriptsize [0.709, 0.722]} & 0.715 {\scriptsize [0.708, 0.721]} & 0.740 {\scriptsize [0.733, 0.745]} \\
 &  & Cardiomegaly & 0.735 {\scriptsize [0.728, 0.743]} & 0.677 {\scriptsize [0.668, 0.686]} & \textbf{0.751 {\scriptsize [0.744, 0.759]}} & 0.735 {\scriptsize [0.728, 0.743]} & 0.735 {\scriptsize [0.726, 0.742]} \\
 &  & Pleural Effusion & 0.880 {\scriptsize [0.877, 0.883]} & 0.872 {\scriptsize [0.869, 0.875]} & 0.880 {\scriptsize [0.877, 0.883]} & 0.882 {\scriptsize [0.879, 0.885]} & \textbf{0.892 {\scriptsize [0.889, 0.894]}} \\
\cline{1-8} \cline{2-8}
\multirow[t]{9}{*}{MIMIC-CXR} & \multirow[t]{3}{*}{Llava-RAD} & Atelectasis & 0.700 {\scriptsize [0.693, 0.707]} & 0.710 {\scriptsize [0.703, 0.717]} & 0.729 {\scriptsize [0.722, 0.735]} & 0.812 {\scriptsize [0.807, 0.818]} & \textbf{0.865 {\scriptsize [0.861, 0.870]}} \\
 &  & Cardiomegaly & 0.661 {\scriptsize [0.654, 0.668]} & 0.660 {\scriptsize [0.653, 0.667]} & 0.662 {\scriptsize [0.655, 0.669]} & 0.688 {\scriptsize [0.681, 0.695]} & \textbf{0.732 {\scriptsize [0.725, 0.737]}} \\
 &  & Pleural Effusion & 0.799 {\scriptsize [0.794, 0.805]} & 0.769 {\scriptsize [0.763, 0.775]} & 0.807 {\scriptsize [0.801, 0.812]} & 0.855 {\scriptsize [0.851, 0.860]} & \textbf{0.860 {\scriptsize [0.856, 0.865]}} \\
\cline{2-8}
 & \multirow[t]{3}{*}{Medgemma} & Atelectasis & 0.726 {\scriptsize [0.719, 0.732]} & 0.488 {\scriptsize [0.479, 0.496]} & 0.647 {\scriptsize [0.639, 0.655]} & 0.776 {\scriptsize [0.770, 0.782]} & \textbf{0.803 {\scriptsize [0.797, 0.809]}} \\
 &  & Cardiomegaly & 0.737 {\scriptsize [0.731, 0.744]} & 0.628 {\scriptsize [0.621, 0.636]} & 0.726 {\scriptsize [0.719, 0.732]} & 0.768 {\scriptsize [0.763, 0.774]} & \textbf{0.780 {\scriptsize [0.774, 0.785]}} \\
 &  & Pleural Effusion & 0.809 {\scriptsize [0.803, 0.814]} & 0.795 {\scriptsize [0.789, 0.801]} & 0.809 {\scriptsize [0.804, 0.815]} & 0.834 {\scriptsize [0.829, 0.839]} & \textbf{0.837 {\scriptsize [0.832, 0.842]}} \\
\cline{2-8}
 & \multirow[t]{3}{*}{NV-reason} & Atelectasis & 0.796 {\scriptsize [0.790, 0.801]} & 0.587 {\scriptsize [0.579, 0.595]} & 0.755 {\scriptsize [0.748, 0.761]} & 0.813 {\scriptsize [0.807, 0.818]} & \textbf{0.829 {\scriptsize [0.824, 0.834]}} \\
 &  & Cardiomegaly & 0.597 {\scriptsize [0.588, 0.604]} & \textbf{0.648 {\scriptsize [0.640, 0.655]}} & 0.625 {\scriptsize [0.618, 0.633]} & 0.622 {\scriptsize [0.615, 0.630]} & 0.642 {\scriptsize [0.635, 0.649]} \\
 &  & Pleural Effusion & 0.769 {\scriptsize [0.764, 0.774]} & \textbf{0.798 {\scriptsize [0.793, 0.804]}} & 0.739 {\scriptsize [0.733, 0.744]} & 0.762 {\scriptsize [0.756, 0.767]} & 0.770 {\scriptsize [0.765, 0.776]} \\
\cline{1-8} \cline{2-8}
\bottomrule
\end{tabular}
}
\end{table}

\paragraph{Additional metrics} We report standard chest X-ray report generation metrics from the literature. As expected, surface-level text-similarity metrics (BLEU, ROUGE) are highest for SFT, since SFT directly optimizes for token-level overlap with the reference report. However, this high text overlap masks a substantial degradation in clinical performance: SFT shows lower sensitivity and F1 on the three findings of interest — denoted micro/macro F1 (3) and sensitivity (3) — than the base model itself, indicating that imitating the reference reports containing omission noise suppresses detection. We further hypothesize that SFT disproportionately suppresses pathologies with lower prevalence in the training distribution, since the model receives correspondingly fewer positive mentions to learn from.

In contrast, DPO and \method preserve text-overlap performance relative to the base model on BLEU and ROUGE, and in some cases improve on it, while delivering the detection-rate gains reported in the main paper. Crucially, both methods also maintain micro/macro F1 across all 14 CheXpert findings and RadGraph-F1 at base-model levels, indicating that gains on the targeted findings do not come at the cost of degraded performance on the broader set of clinical content.

\begin{table}[!ht]
\caption{Eval metrics for CheXpert by model, metric, and method (mean [CI])}
\label{tab:eval_metrics_chexpert_by_model_metric_method}
\resizebox{\linewidth}{!}{%
\begin{tabular}{lllllll}
\toprule
 & Method & Base & SFT & Radcliq-GRPO & DPO & PU-DPO \\
\midrule
\multirow[t]{14}{*}{Llava-RAD} & BLEU-1 & 0.122 {\scriptsize [0.121, 0.123]} & \textbf{0.338 {\scriptsize [0.336, 0.340]}} & 0.119 {\scriptsize [0.118, 0.120]} & 0.123 {\scriptsize [0.123, 0.124]} & 0.122 {\scriptsize [0.121, 0.123]} \\
 & BLEU-4 & 0.013 {\scriptsize [0.013, 0.013]} & \textbf{0.132 {\scriptsize [0.131, 0.133]}} & 0.013 {\scriptsize [0.013, 0.013]} & 0.014 {\scriptsize [0.014, 0.015]} & 0.014 {\scriptsize [0.014, 0.015]} \\
 & ROUGE-1 & 0.254 {\scriptsize [0.253, 0.255]} & \textbf{0.378 {\scriptsize [0.377, 0.379]}} & 0.255 {\scriptsize [0.255, 0.256]} & 0.260 {\scriptsize [0.259, 0.261]} & 0.261 {\scriptsize [0.261, 0.262]} \\
 & ROUGE-2 & 0.063 {\scriptsize [0.062, 0.063]} & \textbf{0.156 {\scriptsize [0.155, 0.157]}} & 0.064 {\scriptsize [0.063, 0.064]} & 0.069 {\scriptsize [0.068, 0.069]} & 0.070 {\scriptsize [0.069, 0.070]} \\
 & ROUGE-L & 0.166 {\scriptsize [0.166, 0.166]} & \textbf{0.281 {\scriptsize [0.280, 0.282]}} & 0.167 {\scriptsize [0.167, 0.168]} & 0.168 {\scriptsize [0.167, 0.168]} & 0.169 {\scriptsize [0.168, 0.169]} \\
 & Micro F1 (3) & 0.790 {\scriptsize [0.787, 0.793]} & 0.769 {\scriptsize [0.766, 0.771]} & 0.799 {\scriptsize [0.795, 0.802]} & 0.829 {\scriptsize [0.827, 0.832]} & \textbf{0.840 {\scriptsize [0.837, 0.842]}} \\
 & Micro F1 (14) & 0.675 {\scriptsize [0.673, 0.677]} & \textbf{0.747 {\scriptsize [0.744, 0.749]}} & 0.681 {\scriptsize [0.679, 0.683]} & 0.692 {\scriptsize [0.690, 0.694]} & 0.695 {\scriptsize [0.693, 0.697]} \\
 & Macro F1 (3) & 0.772 {\scriptsize [0.769, 0.776]} & 0.726 {\scriptsize [0.722, 0.730]} & 0.780 {\scriptsize [0.777, 0.784]} & 0.792 {\scriptsize [0.789, 0.796]} & \textbf{0.803 {\scriptsize [0.799, 0.806]}} \\
 & Macro F1 (14) & 0.483 {\scriptsize [0.479, 0.486]} & \textbf{0.522 {\scriptsize [0.518, 0.525]}} & 0.482 {\scriptsize [0.479, 0.485]} & 0.503 {\scriptsize [0.499, 0.506]} & 0.504 {\scriptsize [0.500, 0.508]} \\
 & Sensitivity (3) & 0.726 {\scriptsize [0.722, 0.731]} & 0.658 {\scriptsize [0.654, 0.662]} & 0.734 {\scriptsize [0.729, 0.738]} & 0.773 {\scriptsize [0.770, 0.777]} & \textbf{0.793 {\scriptsize [0.790, 0.797]}} \\
 & Sensitivity (14) & 0.596 {\scriptsize [0.593, 0.598]} & \textbf{0.658 {\scriptsize [0.655, 0.660]}} & 0.599 {\scriptsize [0.596, 0.601]} & 0.599 {\scriptsize [0.596, 0.601]} & 0.604 {\scriptsize [0.602, 0.606]} \\
 & Specificity (3) & 0.890 {\scriptsize [0.884, 0.896]} & \textbf{0.899 {\scriptsize [0.893, 0.904]}} & 0.892 {\scriptsize [0.886, 0.898]} & 0.847 {\scriptsize [0.840, 0.854]} & 0.817 {\scriptsize [0.809, 0.824]} \\
 & Specificity (14) & 0.903 {\scriptsize [0.901, 0.905]} & \textbf{0.929 {\scriptsize [0.927, 0.930]}} & 0.905 {\scriptsize [0.903, 0.907]} & 0.901 {\scriptsize [0.899, 0.903]} & 0.902 {\scriptsize [0.900, 0.904]} \\
 & RadGraph F1 & 0.040 {\scriptsize [0.039, 0.041]} & \textbf{0.245 {\scriptsize [0.244, 0.247]}} & 0.041 {\scriptsize [0.040, 0.041]} & 0.043 {\scriptsize [0.043, 0.044]} & 0.044 {\scriptsize [0.043, 0.045]} \\
\cline{1-7}
\multirow[t]{14}{*}{Medgemma} & BLEU-1 & 0.136 {\scriptsize [0.135, 0.137]} & \textbf{0.293 {\scriptsize [0.291, 0.296]}} & 0.145 {\scriptsize [0.144, 0.147]} & 0.096 {\scriptsize [0.095, 0.097]} & 0.097 {\scriptsize [0.096, 0.098]} \\
 & BLEU-4 & 0.022 {\scriptsize [0.021, 0.022]} & \textbf{0.112 {\scriptsize [0.111, 0.113]}} & 0.026 {\scriptsize [0.026, 0.026]} & 0.013 {\scriptsize [0.013, 0.013]} & 0.013 {\scriptsize [0.013, 0.013]} \\
 & ROUGE-1 & 0.270 {\scriptsize [0.269, 0.271]} & \textbf{0.360 {\scriptsize [0.359, 0.361]}} & 0.279 {\scriptsize [0.278, 0.280]} & 0.242 {\scriptsize [0.241, 0.242]} & 0.243 {\scriptsize [0.242, 0.243]} \\
 & ROUGE-2 & 0.074 {\scriptsize [0.074, 0.075]} & \textbf{0.143 {\scriptsize [0.142, 0.143]}} & 0.081 {\scriptsize [0.080, 0.081]} & 0.067 {\scriptsize [0.066, 0.067]} & 0.066 {\scriptsize [0.066, 0.067]} \\
 & ROUGE-L & 0.165 {\scriptsize [0.165, 0.166]} & \textbf{0.267 {\scriptsize [0.266, 0.268]}} & 0.172 {\scriptsize [0.172, 0.173]} & 0.148 {\scriptsize [0.147, 0.148]} & 0.149 {\scriptsize [0.148, 0.149]} \\
 & Micro F1 (3) & 0.798 {\scriptsize [0.796, 0.801]} & 0.761 {\scriptsize [0.757, 0.764]} & 0.817 {\scriptsize [0.814, 0.820]} & 0.826 {\scriptsize [0.823, 0.828]} & \textbf{0.841 {\scriptsize [0.838, 0.843]}} \\
 & Micro F1 (14) & 0.691 {\scriptsize [0.688, 0.693]} & \textbf{0.735 {\scriptsize [0.733, 0.737]}} & 0.669 {\scriptsize [0.666, 0.671]} & 0.724 {\scriptsize [0.722, 0.726]} & 0.723 {\scriptsize [0.721, 0.725]} \\
 & Macro F1 (3) & 0.779 {\scriptsize [0.776, 0.782]} & 0.713 {\scriptsize [0.709, 0.717]} & 0.790 {\scriptsize [0.786, 0.793]} & 0.804 {\scriptsize [0.800, 0.807]} & \textbf{0.822 {\scriptsize [0.818, 0.825]}} \\
 & Macro F1 (14) & 0.431 {\scriptsize [0.428, 0.434]} & \textbf{0.478 {\scriptsize [0.475, 0.481]}} & 0.408 {\scriptsize [0.405, 0.411]} & 0.466 {\scriptsize [0.462, 0.469]} & 0.459 {\scriptsize [0.456, 0.462]} \\
 & Sensitivity (3) & 0.816 {\scriptsize [0.813, 0.819]} & 0.658 {\scriptsize [0.654, 0.662]} & 0.837 {\scriptsize [0.834, 0.840]} & 0.827 {\scriptsize [0.825, 0.830]} & \textbf{0.849 {\scriptsize [0.846, 0.852]}} \\
 & Sensitivity (14) & 0.657 {\scriptsize [0.655, 0.659]} & 0.648 {\scriptsize [0.646, 0.650]} & 0.626 {\scriptsize [0.623, 0.628]} & \textbf{0.689 {\scriptsize [0.687, 0.691]}} & 0.683 {\scriptsize [0.681, 0.686]} \\
 & Specificity (3) & \textbf{0.872 {\scriptsize [0.865, 0.878]}} & 0.870 {\scriptsize [0.864, 0.876]} & 0.835 {\scriptsize [0.828, 0.842]} & 0.849 {\scriptsize [0.842, 0.856]} & 0.835 {\scriptsize [0.828, 0.842]} \\
 & Specificity (14) & 0.914 {\scriptsize [0.912, 0.915]} & \textbf{0.924 {\scriptsize [0.922, 0.925]}} & 0.902 {\scriptsize [0.900, 0.904]} & 0.914 {\scriptsize [0.913, 0.916]} & 0.923 {\scriptsize [0.922, 0.925]} \\
 & RadGraph F1 & 0.051 {\scriptsize [0.050, 0.052]} & \textbf{0.223 {\scriptsize [0.222, 0.225]}} & 0.050 {\scriptsize [0.049, 0.051]} & 0.047 {\scriptsize [0.047, 0.048]} & 0.048 {\scriptsize [0.047, 0.049]} \\
\cline{1-7}
\multirow[t]{14}{*}{NV-reason} & BLEU-1 & 0.040 {\scriptsize [0.040, 0.041]} & \textbf{0.265 {\scriptsize [0.263, 0.267]}} & 0.040 {\scriptsize [0.040, 0.041]} & 0.042 {\scriptsize [0.041, 0.042]} & 0.042 {\scriptsize [0.041, 0.042]} \\
 & BLEU-4 & 0.004 {\scriptsize [0.004, 0.004]} & \textbf{0.098 {\scriptsize [0.097, 0.098]}} & 0.004 {\scriptsize [0.004, 0.004]} & 0.004 {\scriptsize [0.004, 0.005]} & 0.004 {\scriptsize [0.004, 0.005]} \\
 & ROUGE-1 & 0.116 {\scriptsize [0.116, 0.117]} & \textbf{0.328 {\scriptsize [0.327, 0.330]}} & 0.115 {\scriptsize [0.115, 0.116]} & 0.118 {\scriptsize [0.118, 0.119]} & 0.118 {\scriptsize [0.118, 0.119]} \\
 & ROUGE-2 & 0.033 {\scriptsize [0.033, 0.033]} & \textbf{0.125 {\scriptsize [0.124, 0.126]}} & 0.033 {\scriptsize [0.032, 0.033]} & 0.034 {\scriptsize [0.034, 0.034]} & 0.034 {\scriptsize [0.034, 0.034]} \\
 & ROUGE-L & 0.076 {\scriptsize [0.076, 0.076]} & \textbf{0.243 {\scriptsize [0.242, 0.244]}} & 0.076 {\scriptsize [0.075, 0.076]} & 0.078 {\scriptsize [0.078, 0.078]} & 0.078 {\scriptsize [0.078, 0.078]} \\
 & Micro F1 (3) & 0.814 {\scriptsize [0.811, 0.817]} & 0.759 {\scriptsize [0.755, 0.762]} & 0.808 {\scriptsize [0.805, 0.811]} & 0.806 {\scriptsize [0.803, 0.809]} & \textbf{0.818 {\scriptsize [0.816, 0.821]}} \\
 & Micro F1 (14) & 0.728 {\scriptsize [0.726, 0.730]} & 0.714 {\scriptsize [0.712, 0.716]} & \textbf{0.735 {\scriptsize [0.733, 0.737]}} & 0.709 {\scriptsize [0.707, 0.711]} & 0.720 {\scriptsize [0.718, 0.722]} \\
 & Macro F1 (3) & 0.787 {\scriptsize [0.784, 0.790]} & 0.716 {\scriptsize [0.712, 0.720]} & 0.782 {\scriptsize [0.779, 0.785]} & 0.777 {\scriptsize [0.774, 0.781]} & \textbf{0.789 {\scriptsize [0.785, 0.792]}} \\
 & Macro F1 (14) & 0.448 {\scriptsize [0.444, 0.451]} & \textbf{0.460 {\scriptsize [0.457, 0.463]}} & 0.451 {\scriptsize [0.447, 0.454]} & 0.430 {\scriptsize [0.427, 0.433]} & 0.434 {\scriptsize [0.431, 0.437]} \\
 & Sensitivity (3) & \textbf{0.849 {\scriptsize [0.846, 0.852]}} & 0.645 {\scriptsize [0.641, 0.649]} & 0.833 {\scriptsize [0.830, 0.836]} & 0.834 {\scriptsize [0.831, 0.837]} & 0.847 {\scriptsize [0.844, 0.850]} \\
 & Sensitivity (14) & 0.658 {\scriptsize [0.656, 0.660]} & 0.612 {\scriptsize [0.610, 0.614]} & \textbf{0.664 {\scriptsize [0.662, 0.667]}} & 0.635 {\scriptsize [0.633, 0.637]} & 0.644 {\scriptsize [0.641, 0.646]} \\
 & Specificity (3) & 0.751 {\scriptsize [0.743, 0.760]} & \textbf{0.862 {\scriptsize [0.855, 0.869]}} & 0.741 {\scriptsize [0.733, 0.749]} & 0.780 {\scriptsize [0.772, 0.788]} & 0.765 {\scriptsize [0.756, 0.773]} \\
 & Specificity (14) & 0.938 {\scriptsize [0.937, 0.940]} & 0.925 {\scriptsize [0.923, 0.926]} & 0.935 {\scriptsize [0.934, 0.937]} & \textbf{0.945 {\scriptsize [0.944, 0.947]}} & 0.941 {\scriptsize [0.939, 0.942]} \\
 & RadGraph F1 & 0.036 {\scriptsize [0.036, 0.037]} & \textbf{0.185 {\scriptsize [0.184, 0.186]}} & 0.036 {\scriptsize [0.035, 0.037]} & 0.036 {\scriptsize [0.035, 0.037]} & 0.036 {\scriptsize [0.036, 0.037]} \\
\cline{1-7}
\bottomrule
\end{tabular}%
}
\end{table}

\begin{table}[!ht]
\caption{Eval metrics for MIMIC by model, metric, and method (mean [CI])}
\label{tab:eval_metrics_mimic_by_model_metric_method}
\resizebox{\linewidth}{!}{%
\begin{tabular}{lllllll}
\toprule
 & Method & Base & SFT & Radcliq-GRPO & DPO & PU-DPO \\
\midrule
\multirow[t]{14}{*}{Llava-RAD} & BLEU-1 & 0.289 {\scriptsize [0.287, 0.290]} & \textbf{0.319 {\scriptsize [0.317, 0.321]}} & 0.290 {\scriptsize [0.289, 0.292]} & 0.295 {\scriptsize [0.294, 0.297]} & 0.292 {\scriptsize [0.290, 0.293]} \\
 & BLEU-4 & 0.081 {\scriptsize [0.081, 0.082]} & \textbf{0.113 {\scriptsize [0.112, 0.115]}} & 0.083 {\scriptsize [0.082, 0.084]} & 0.081 {\scriptsize [0.080, 0.082]} & 0.080 {\scriptsize [0.079, 0.081]} \\
 & ROUGE-1 & 0.342 {\scriptsize [0.341, 0.343]} & \textbf{0.383 {\scriptsize [0.381, 0.384]}} & 0.344 {\scriptsize [0.343, 0.345]} & 0.347 {\scriptsize [0.346, 0.348]} & 0.346 {\scriptsize [0.345, 0.347]} \\
 & ROUGE-2 & 0.125 {\scriptsize [0.124, 0.126]} & \textbf{0.156 {\scriptsize [0.155, 0.157]}} & 0.127 {\scriptsize [0.126, 0.128]} & 0.126 {\scriptsize [0.125, 0.127]} & 0.125 {\scriptsize [0.124, 0.126]} \\
 & ROUGE-L & 0.237 {\scriptsize [0.236, 0.238]} & \textbf{0.284 {\scriptsize [0.283, 0.286]}} & 0.238 {\scriptsize [0.237, 0.239]} & 0.237 {\scriptsize [0.236, 0.238]} & 0.235 {\scriptsize [0.234, 0.236]} \\
 & Micro F1 (3) & 0.722 {\scriptsize [0.718, 0.726]} & 0.713 {\scriptsize [0.709, 0.718]} & 0.735 {\scriptsize [0.731, 0.738]} & 0.789 {\scriptsize [0.786, 0.792]} & \textbf{0.821 {\scriptsize [0.818, 0.824]}} \\
 & Micro F1 (14) & 0.667 {\scriptsize [0.664, 0.670]} & 0.634 {\scriptsize [0.631, 0.637]} & 0.675 {\scriptsize [0.673, 0.678]} & 0.707 {\scriptsize [0.704, 0.709]} & \textbf{0.716 {\scriptsize [0.713, 0.718]}} \\
 & Macro F1 (3) & 0.720 {\scriptsize [0.716, 0.724]} & 0.713 {\scriptsize [0.708, 0.717]} & 0.733 {\scriptsize [0.729, 0.736]} & 0.785 {\scriptsize [0.782, 0.789]} & \textbf{0.819 {\scriptsize [0.816, 0.822]}} \\
 & Macro F1 (14) & 0.534 {\scriptsize [0.530, 0.538]} & 0.471 {\scriptsize [0.468, 0.476]} & 0.541 {\scriptsize [0.537, 0.546]} & 0.566 {\scriptsize [0.562, 0.571]} & \textbf{0.568 {\scriptsize [0.564, 0.572]}} \\
 & Sensitivity (3) & 0.678 {\scriptsize [0.674, 0.683]} & 0.648 {\scriptsize [0.643, 0.653]} & 0.697 {\scriptsize [0.693, 0.702]} & 0.772 {\scriptsize [0.768, 0.776]} & \textbf{0.825 {\scriptsize [0.821, 0.828]}} \\
 & Sensitivity (14) & 0.638 {\scriptsize [0.635, 0.641]} & 0.597 {\scriptsize [0.594, 0.601]} & 0.648 {\scriptsize [0.645, 0.651]} & 0.683 {\scriptsize [0.680, 0.685]} & \textbf{0.693 {\scriptsize [0.690, 0.696]}} \\
 & Specificity (3) & 0.935 {\scriptsize [0.932, 0.937]} & \textbf{0.944 {\scriptsize [0.941, 0.946]}} & 0.932 {\scriptsize [0.929, 0.935]} & 0.913 {\scriptsize [0.909, 0.916]} & 0.888 {\scriptsize [0.885, 0.892]} \\
 & Specificity (14) & 0.917 {\scriptsize [0.915, 0.918]} & 0.911 {\scriptsize [0.910, 0.912]} & 0.919 {\scriptsize [0.918, 0.920]} & \textbf{0.921 {\scriptsize [0.919, 0.922]}} & 0.918 {\scriptsize [0.916, 0.919]} \\
 & RadGraph F1 & 0.222 {\scriptsize [0.221, 0.223]} & \textbf{0.248 {\scriptsize [0.246, 0.250]}} & 0.227 {\scriptsize [0.225, 0.228]} & 0.230 {\scriptsize [0.228, 0.231]} & 0.234 {\scriptsize [0.233, 0.236]} \\
\cline{1-7}
\multirow[t]{14}{*}{Medgemma} & BLEU-1 & 0.280 {\scriptsize [0.278, 0.281]} & \textbf{0.349 {\scriptsize [0.347, 0.351]}} & 0.290 {\scriptsize [0.288, 0.291]} & 0.177 {\scriptsize [0.176, 0.178]} & 0.176 {\scriptsize [0.174, 0.177]} \\
 & BLEU-4 & 0.067 {\scriptsize [0.066, 0.068]} & \textbf{0.117 {\scriptsize [0.116, 0.118]}} & 0.072 {\scriptsize [0.071, 0.073]} & 0.036 {\scriptsize [0.036, 0.037]} & 0.036 {\scriptsize [0.035, 0.036]} \\
 & ROUGE-1 & 0.350 {\scriptsize [0.349, 0.352]} & \textbf{0.377 {\scriptsize [0.376, 0.379]}} & 0.354 {\scriptsize [0.352, 0.355]} & 0.308 {\scriptsize [0.307, 0.310]} & 0.310 {\scriptsize [0.308, 0.311]} \\
 & ROUGE-2 & 0.128 {\scriptsize [0.127, 0.129]} & \textbf{0.148 {\scriptsize [0.146, 0.149]}} & 0.131 {\scriptsize [0.130, 0.132]} & 0.105 {\scriptsize [0.104, 0.106]} & 0.106 {\scriptsize [0.105, 0.107]} \\
 & ROUGE-L & 0.242 {\scriptsize [0.241, 0.243]} & \textbf{0.277 {\scriptsize [0.276, 0.278]}} & 0.248 {\scriptsize [0.246, 0.249]} & 0.209 {\scriptsize [0.208, 0.210]} & 0.210 {\scriptsize [0.209, 0.211]} \\
 & Micro F1 (3) & 0.759 {\scriptsize [0.756, 0.763]} & 0.649 {\scriptsize [0.645, 0.654]} & 0.733 {\scriptsize [0.729, 0.737]} & 0.794 {\scriptsize [0.791, 0.798]} & \textbf{0.807 {\scriptsize [0.804, 0.810]}} \\
 & Micro F1 (14) & 0.673 {\scriptsize [0.670, 0.675]} & 0.587 {\scriptsize [0.584, 0.590]} & 0.655 {\scriptsize [0.652, 0.658]} & 0.699 {\scriptsize [0.697, 0.702]} & \textbf{0.708 {\scriptsize [0.706, 0.711]}} \\
 & Macro F1 (3) & 0.757 {\scriptsize [0.754, 0.761]} & 0.637 {\scriptsize [0.632, 0.642]} & 0.727 {\scriptsize [0.723, 0.731]} & 0.793 {\scriptsize [0.789, 0.796]} & \textbf{0.806 {\scriptsize [0.803, 0.810]}} \\
 & Macro F1 (14) & 0.476 {\scriptsize [0.472, 0.480]} & 0.420 {\scriptsize [0.416, 0.424]} & 0.454 {\scriptsize [0.450, 0.457]} & 0.512 {\scriptsize [0.508, 0.516]} & \textbf{0.516 {\scriptsize [0.512, 0.520]}} \\
 & Sensitivity (3) & 0.833 {\scriptsize [0.830, 0.836]} & 0.628 {\scriptsize [0.623, 0.633]} & 0.810 {\scriptsize [0.807, 0.813]} & 0.861 {\scriptsize [0.857, 0.864]} & \textbf{0.874 {\scriptsize [0.871, 0.878]}} \\
 & Sensitivity (14) & 0.709 {\scriptsize [0.706, 0.712]} & 0.599 {\scriptsize [0.596, 0.602]} & 0.701 {\scriptsize [0.698, 0.703]} & 0.736 {\scriptsize [0.733, 0.738]} & \textbf{0.745 {\scriptsize [0.742, 0.747]}} \\
 & Specificity (3) & 0.878 {\scriptsize [0.874, 0.881]} & \textbf{0.942 {\scriptsize [0.940, 0.945]}} & 0.893 {\scriptsize [0.890, 0.897]} & 0.857 {\scriptsize [0.853, 0.861]} & 0.858 {\scriptsize [0.854, 0.863]} \\
 & Specificity (14) & 0.908 {\scriptsize [0.906, 0.909]} & 0.890 {\scriptsize [0.888, 0.891]} & 0.912 {\scriptsize [0.911, 0.914]} & 0.914 {\scriptsize [0.912, 0.915]} & \textbf{0.918 {\scriptsize [0.917, 0.920]}} \\
 & RadGraph F1 & 0.246 {\scriptsize [0.244, 0.247]} & 0.246 {\scriptsize [0.245, 0.248]} & \textbf{0.256 {\scriptsize [0.254, 0.257]}} & 0.221 {\scriptsize [0.219, 0.222]} & 0.224 {\scriptsize [0.222, 0.225]} \\
\cline{1-7}
\multirow[t]{14}{*}{NV-reason} & BLEU-1 & 0.072 {\scriptsize [0.072, 0.073]} & \textbf{0.333 {\scriptsize [0.332, 0.335]}} & 0.070 {\scriptsize [0.070, 0.071]} & 0.081 {\scriptsize [0.081, 0.082]} & 0.082 {\scriptsize [0.081, 0.082]} \\
 & BLEU-4 & 0.012 {\scriptsize [0.012, 0.012]} & \textbf{0.118 {\scriptsize [0.117, 0.119]}} & 0.012 {\scriptsize [0.012, 0.012]} & 0.012 {\scriptsize [0.012, 0.012]} & 0.012 {\scriptsize [0.012, 0.013]} \\
 & ROUGE-1 & 0.125 {\scriptsize [0.125, 0.126]} & \textbf{0.377 {\scriptsize [0.376, 0.379]}} & 0.122 {\scriptsize [0.122, 0.122]} & 0.133 {\scriptsize [0.132, 0.133]} & 0.134 {\scriptsize [0.133, 0.134]} \\
 & ROUGE-2 & 0.038 {\scriptsize [0.038, 0.038]} & \textbf{0.154 {\scriptsize [0.153, 0.156]}} & 0.036 {\scriptsize [0.036, 0.036]} & 0.037 {\scriptsize [0.036, 0.037]} & 0.037 {\scriptsize [0.037, 0.037]} \\
 & ROUGE-L & 0.082 {\scriptsize [0.082, 0.083]} & \textbf{0.281 {\scriptsize [0.280, 0.283]}} & 0.080 {\scriptsize [0.080, 0.081]} & 0.086 {\scriptsize [0.086, 0.086]} & 0.087 {\scriptsize [0.087, 0.087]} \\
 & Micro F1 (3) & 0.729 {\scriptsize [0.726, 0.733]} & 0.683 {\scriptsize [0.679, 0.688]} & 0.712 {\scriptsize [0.708, 0.715]} & 0.738 {\scriptsize [0.734, 0.742]} & \textbf{0.752 {\scriptsize [0.749, 0.756]}} \\
 & Micro F1 (14) & 0.663 {\scriptsize [0.661, 0.666]} & 0.616 {\scriptsize [0.613, 0.618]} & 0.658 {\scriptsize [0.656, 0.661]} & 0.659 {\scriptsize [0.656, 0.661]} & \textbf{0.666 {\scriptsize [0.663, 0.668]}} \\
 & Macro F1 (3) & 0.720 {\scriptsize [0.717, 0.724]} & 0.678 {\scriptsize [0.673, 0.682]} & 0.706 {\scriptsize [0.702, 0.710]} & 0.732 {\scriptsize [0.728, 0.736]} & \textbf{0.747 {\scriptsize [0.743, 0.751]}} \\
 & Macro F1 (14) & 0.459 {\scriptsize [0.455, 0.464]} & 0.454 {\scriptsize [0.450, 0.457]} & 0.449 {\scriptsize [0.445, 0.452]} & 0.448 {\scriptsize [0.445, 0.452]} & \textbf{0.461 {\scriptsize [0.456, 0.465]}} \\
 & Sensitivity (3) & 0.867 {\scriptsize [0.864, 0.871]} & 0.625 {\scriptsize [0.620, 0.630]} & 0.845 {\scriptsize [0.841, 0.848]} & 0.879 {\scriptsize [0.876, 0.882]} & \textbf{0.889 {\scriptsize [0.886, 0.892]}} \\
 & Sensitivity (14) & \textbf{0.705 {\scriptsize [0.702, 0.708]}} & 0.593 {\scriptsize [0.590, 0.597]} & 0.700 {\scriptsize [0.697, 0.702]} & 0.699 {\scriptsize [0.696, 0.702]} & 0.705 {\scriptsize [0.702, 0.708]} \\
 & Specificity (3) & 0.802 {\scriptsize [0.797, 0.807]} & \textbf{0.951 {\scriptsize [0.949, 0.954]}} & 0.755 {\scriptsize [0.751, 0.761]} & 0.818 {\scriptsize [0.814, 0.823]} & 0.820 {\scriptsize [0.815, 0.824]} \\
 & Specificity (14) & 0.929 {\scriptsize [0.928, 0.931]} & 0.908 {\scriptsize [0.907, 0.909]} & 0.918 {\scriptsize [0.917, 0.920]} & \textbf{0.938 {\scriptsize [0.936, 0.939]}} & 0.937 {\scriptsize [0.935, 0.938]} \\
 & RadGraph F1 & 0.131 {\scriptsize [0.130, 0.131]} & \textbf{0.260 {\scriptsize [0.259, 0.262]}} & 0.131 {\scriptsize [0.130, 0.132]} & 0.129 {\scriptsize [0.128, 0.129]} & 0.130 {\scriptsize [0.129, 0.131]} \\
\cline{1-7}
\bottomrule
\end{tabular}%
}
\end{table}

\FloatBarrier

\subsection{Adjudicated test set}
\label{app:adjudicated_results}

We use the CheXpert test set (500 studies labeled by 5 radiologists), which serves as a radiologist adjudicated set that is also in-distribution (same institution) but unseen for our finetuned model. While the original reports are unavailable for these studies, the report-derived CheXbert labels are publicly available, so we evaluate recovery on hidden positives by treating the provided labels as the noisy diagnosis in the original radiologist report. We first report the statistics of this test set of 500 frontal images.

\begin{table}[htbp]
    \centering
    \caption{\textbf{Per-pathology statistics of the radiologist-adjudicated CheXpert test set (500 studies).} Prevalence is the fraction of studies the adjudicating radiologists labeled positive. Hidden positives are radiologist-positive studies whose original clinical report never mentioned the finding (blank CheXbert label); they form the denominator of the omission-recovery metric.}
    \label{tab:chexpert_stats}
    \begin{tabular}{lrrr}
        \toprule
        \textbf{Pathology} & \textbf{Positive} & \textbf{Prevalence} & \textbf{Hidden positive} \\
        \midrule
        Cardiomegaly     & 151 & 30.2\% & 102 \\
        Atelectasis      & 153 & 30.6\% & 80  \\
        Pleural Effusion & 104 & 20.8\% & 20  \\
        \bottomrule
    \end{tabular}
\end{table}

\textbf{Multi-SCAR Assumption.}
\label{sec:appendix_multi_scar} Intuitively, the multi-SCAR assumption dictates that labeled positives are a representative subsample of all true positives for a particular disease subtype. When this assumption is violated, for instance, if a certain finding is systematically reported only for ICU patients but present in both the ICU and ED, a distribution mismatch may occur between the labeled positives and the unlabeled hidden positives. Consequently, PU-DPO's reliance on labeled positive samples as a proxy for hidden positives breaks down.

We demonstrate a verification process for the SCAR assumption at the pathology level using our radiologist-adjudicated test set. To test for statistical independence between labeled and hidden positives, we extracted MedGemma image embeddings and evaluated the cohorts using a kernel MMD test, with corresponding $p$-values. As shown in \autoref{tab:scar_check}, Cardiomegaly exhibits detectable selection bias (violating SCAR) with statistical significance, whereas Atelectasis and Pleural Effusion show indistinguishable distributions, indicating that the SCAR assumption may hold.

\begin{table}[htbp]
    \centering
    \caption{\textbf{SCAR check on the radiologist-adjudicated CheXpert test set.} For each finding, labeled positives (the original report asserts the finding) are compared against hidden positives (the report excludes but the adjudicating radiologists confirmed the finding) in MedGemma image-embedding space. Under SCAR the two cohorts are assumed as draws from the same distribution, so the kernel-MMD permutation test should not reject and the cross-validated labeled-vs-hidden; $p$-values are permutation tails over 5000 resamples.}
    \label{tab:scar_check}
    \begin{tabular}{lrrrr}
        \toprule
        \textbf{Finding} & \textbf{Labeled} & \textbf{Hidden} & \textbf{MMD$^2$} & \textbf{MMD $p$} \\
        \midrule
        Cardiomegaly     & 46 & 102 &  0.0079 & 0.016 \\
        Atelectasis      & 73 &  80 & -0.0040 & 0.979 \\
        Pleural Effusion & 80 &  20 & -0.0081 & 0.940 \\
        \bottomrule
    \end{tabular}
\end{table}

\textbf{Pathology-specific results.} We provide a breakdown of the metrics across pathologies. For cardiomegaly, we find that PU-DPO does not improve omission recovery compared to standard DPO. Because the preference dataset and training parameters were strictly controlled, this suggests that the 
$\alpha$-correction term was ineffective at recovering hidden positives for this specific condition that violates SCAR. Conversely, for atelectasis, the PU-DPO loss successfully achieves its intended effect of hidden positive recovery, demonstrating noticeable gains over standard DPO. We will include these granular results, along with pathology-specific recall and F1 scores, and this corresponding interpretation in the revised manuscript.

\begin{figure}[!h]
    \centering
    \includegraphics[width=0.95\linewidth]{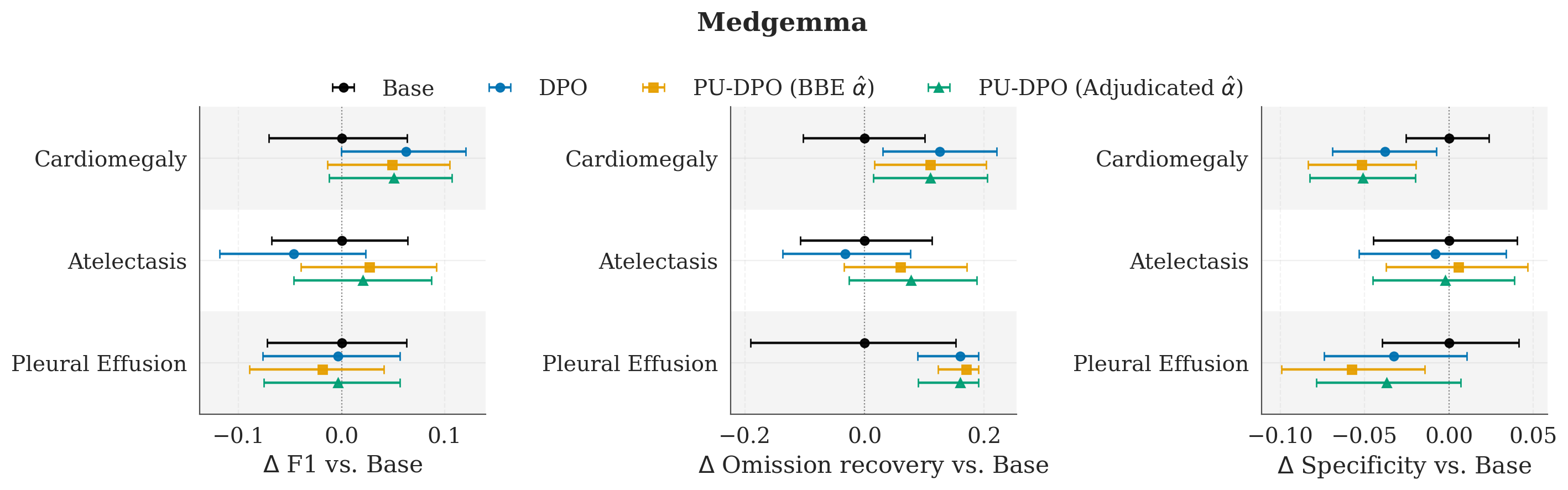}
    \caption{Pathology-specific results for F1, omission recovery, and specificity, Medgemma model.}
    \label{fig:pathologywise_medgemma}
\end{figure}

\begin{figure}[!h]
    \centering
    \includegraphics[width=0.95\linewidth]{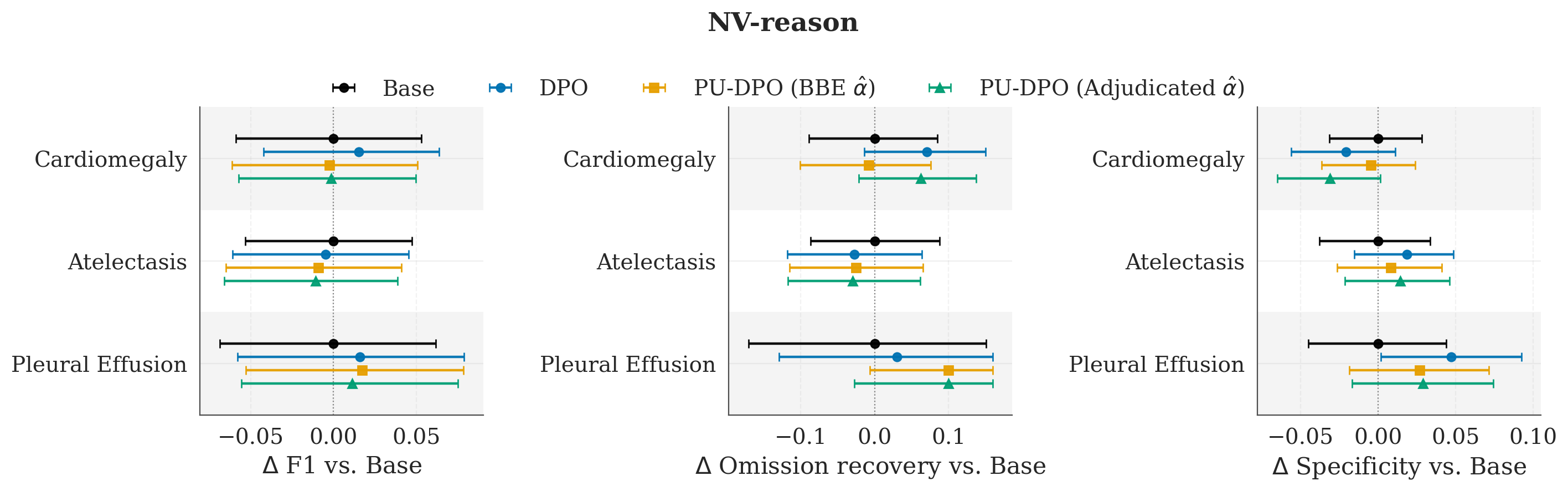}
    \caption{Pathology-specific results for F1, omission recovery, and specificity, NV-reason model.}
    \label{fig:pathologywise_nvreason}
\end{figure}

\begin{figure}[!h]
    \centering
    \includegraphics[width=0.95\linewidth]{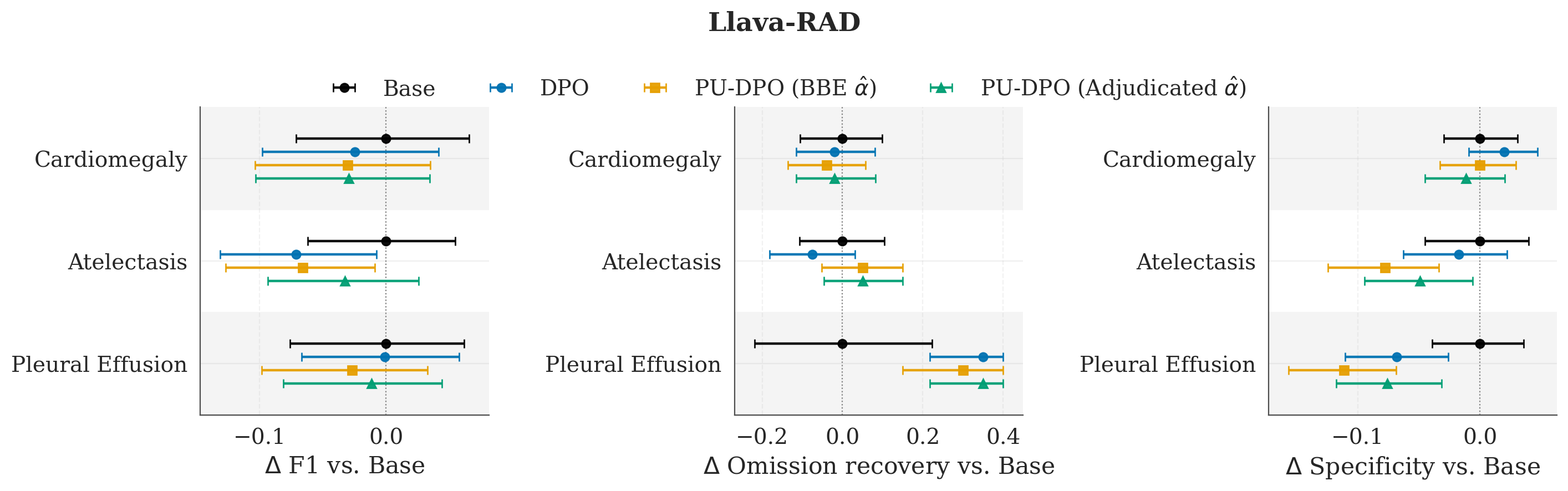}
    \caption{Pathology-specific results for F1, omission recovery, and specificity, Llava-RAD model.}
    \label{fig:pathologywise_llava}
\end{figure}

\FloatBarrier

\textbf{$\alpha$ Sensitivity Analysis} We provide checks for robustness to $\alpha$ misspecification across all models, where we perturb the adjudicated $\alpha$ estimate by $\pm 0.03$ to simulate over- and under-estimation.

\begin{figure}[!h]
    \centering
    \includegraphics[width=0.95\linewidth]{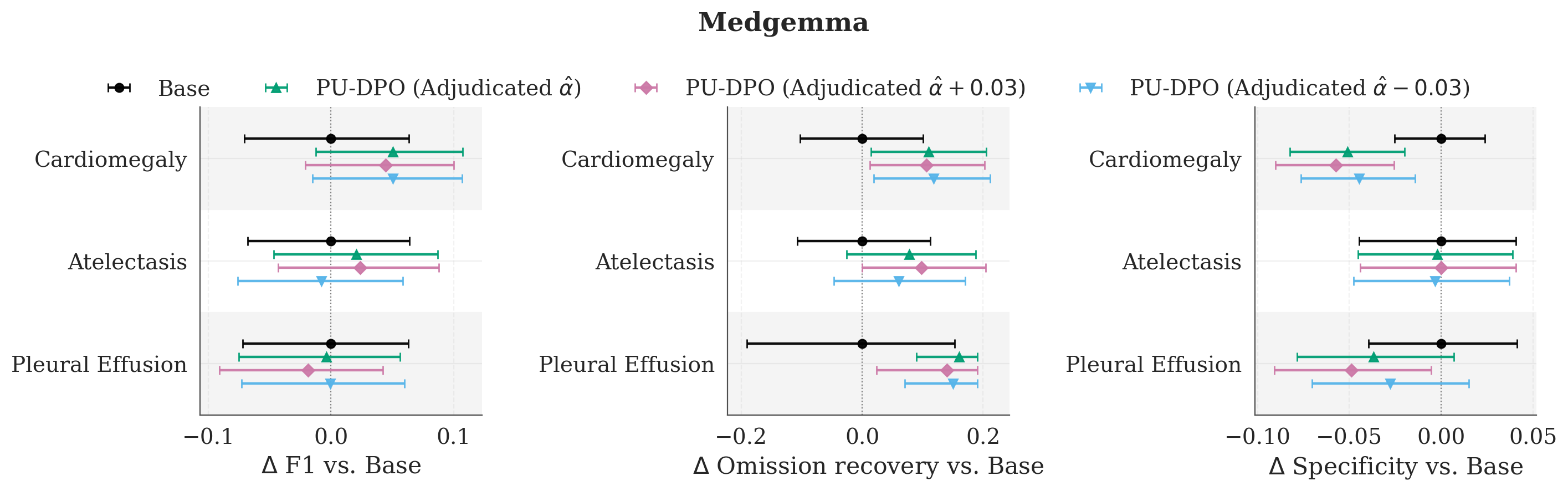}
    \caption{Sensitivity to $\alpha$ misspecification on F1, omission recovery, and specificity, Medgemma model.}
    \label{fig:alpha_medgemma}
\end{figure}

\begin{figure}[!h]
    \centering
    \includegraphics[width=0.95\linewidth]{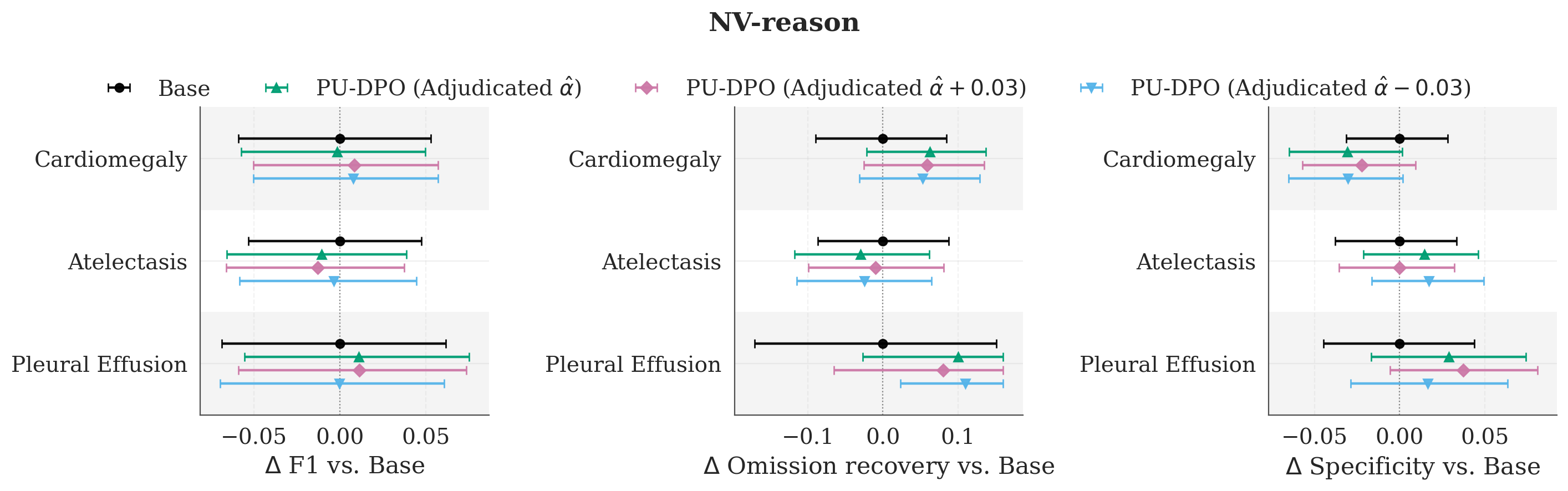}
    \caption{Sensitivity to $\alpha$ misspecification on F1, omission recovery, and specificity, NV-reason model.}
    \label{fig:alpha_nvreason}
\end{figure}

\begin{figure}[!h]
    \centering
    \includegraphics[width=0.95\linewidth]{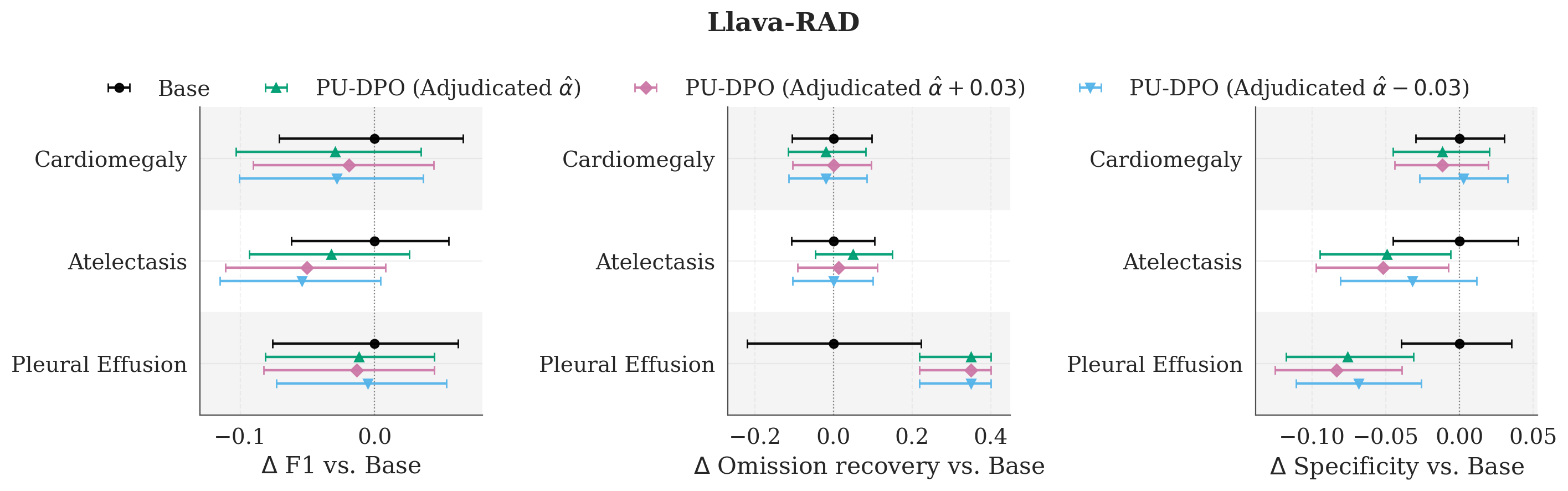}
    \caption{Sensitivity to $\alpha$ misspecification on F1, omission recovery, and specificity, Llava-RAD model.}
    \label{fig:alpha_llava}
\end{figure}

%%%%%%%%%%%%%%%%%%%%%%%%%%%%%%%%%%%%%%%%%%%%%%%%%%%%%%%%%%%%

% \newpage
% \input{checklist}

\end{document}